\documentclass{article}

\PassOptionsToPackage{round}{natbib}
\usepackage[preprint]{neurips_2026}
\usepackage[utf8]{inputenc}
\usepackage[T1]{fontenc}    
\usepackage{comment}
\usepackage{url}          
\usepackage{booktabs}       
\usepackage{amsfonts}      
\usepackage{nicefrac}       
\usepackage{microtype}      
\usepackage{xcolor}         
\usepackage{graphicx}
\usepackage{amsfonts}      
\usepackage{nicefrac}
\usepackage{amsthm}
\usepackage{amssymb}
\usepackage{mathrsfs}
\usepackage{amsfonts}
\usepackage{amsmath}
\usepackage{multirow}
\usepackage{mathrsfs}
\usepackage{dsfont}
\usepackage{subcaption}
\usepackage{wrapfig}
\usepackage{colortbl}
\usepackage{bm}
\usepackage{newunicodechar}
\newunicodechar{ℓ}{$\ell$}

\newtheorem{theorem}{Theorem}
\newtheorem{prop}{Proposition}
\newtheorem{lemma}{Lemma}

\newtheorem{definition}{Definition}

\definecolor{darkblue}{rgb}{0,0.08,0.45}

\usepackage[unicode=true,colorlinks=true,citecolor=darkblue,linkcolor=darkblue,urlcolor=darkblue]{hyperref}

\newcommand{\cF}{\mathcal{F}}

\newcommand{\cL}{\mathcal{L}}
\newcommand{\cN}{\mathcal{N}}
\newcommand{\cO}{\mathcal{O}}
\newcommand{\cR}{\mathcal{R}}
\newcommand{\cV}{\mathcal{V}}
\newcommand{\cX}{\mathcal{X}}

\newcommand{\cY}{\mathcal{Y}}

\newcommand{\bD}{\mathbb{D}}
\newcommand{\bE}{\mathbb{E}}
\newcommand{\bI}{\mathds{1}}

\newcommand{\bP}{\mathbb{P}}
\newcommand{\bR}{\mathbb{R}}
\newcommand{\bU}{\mathbb{U}}

\newcommand{\argmin}{\mathrm{argmin}}

\newcommand{\dx}{\mathrm{d}x}
\newcommand{\dy}{\mathrm{d}y}
\newcommand{\dz}{\mathrm{d}z}
\newcommand{\dt}{\mathrm{d}t}

\newcommand{\nhyp}{n}

\newcommand{\bn}{[\![1,\nhyp]\!]}

\title{Representation Gap: Explaining the Unreasonable Effectiveness of Neural Networks from a Geometric Perspective}

\author{%
David Perera\\
Universidade Federal de Minas Gerais\\
Belo Horizonte, Brazil
\\
\And
Victor Moura\\
Universidade Federal de Minas Gerais\\
Belo Horizonte, Brazil\\
\And
Lais Isabelle Alves dos Santos\\
Universidade Federal de Minas Gerais\\
Belo Horizonte, Brazil\\
\And
Michel F. C. Haddad\\
Queen Mary University of London\\
London, United Kingdom\\
\And
Flavio Figueiredo\\
Universidade Federal de Minas Gerais\\
Belo Horizonte, Brazil\\
}

\begin{document}

\maketitle

\begin{abstract}
  Characterizing precisely the asymptotic generalization error of neural networks using parameters that can be estimated efficiently is a crucial problem in machine learning, which relies heavily on heuristics and practitioners' intuition to make key design choices. In order to mitigate this issue, we introduce the Representation Gap, a metric closely related to the generalization error, but admitting better-behaved asymptotic dynamics. Focusing on equivariant diffusion models and leveraging results from optimal quantization and point-process theory, we derive a precise asymptotic equivalent of the Representation Gap and show that it is governed by a single parameter, the \textit{intrinsic dimension} of the task, which is easy to interpret, efficient to estimate, and can be linked to the equivariances of common neural network architectures. We show that this asymptotic dynamic also extends to a broader range of tasks and training algorithms. Finally, we demonstrate empirically that our asymptotic law and intrinsic dimension estimation are accurate on a wide range of synthetic datasets, where these quantities are known, as well as on more realistic datasets, where we obtain results consistent with the related literature.\footnote{Code for reproducing our experiments is available at \url{https://github.com/daperera/representation_gap}.}
\end{abstract}


\section{Introduction}

Neural networks combine strong memorization capabilities with architectural and optimization biases that shape their behavior outside of the training dataset \citep{hornik_approximation_1991,kaplan_scaling_2020,kubo_implicit_2019,zhang_understanding_2021}. In practice, these inductive biases are often aligned with the geometry and symmetries of real-world tasks \citep{fefferman_testing_2016,chiang_loss_2022,teney_neural_2024}. 
As a result, neural networks effectively augment the training data and can generalize well beyond simple memorization \citep{zhang_understanding_2021,allen-zhu_convergence_nodate,belkin_fit_2021,simon2026there}. Recent work on diffusion models has even shown that the outputs of trained equivariant architectures can be predicted accurately from the training data and the symmetries of the model alone \citep{kamb_analytic_2025,finn_origins_2025}. These observations suggest that the generalization capabilities of neural networks are largely determined by the geometry of the data manifold and the symmetries of the model. Our goal in this paper is to characterize neural network generalization from this geometric perspective, using measurable properties of the data and the model.

Equivariant architectures are typically analyzed by controlling the generalization error with PAC and generalization bounds \citep{chen_group-theoretic_nodate,elesedy_provably_2021,tahmasebi_exact_nodate}. However, these bounds are not always tight, and often depend on quantities that are difficult to estimate in practice (e.g. intrinsic dimension \citep{ansuini_intrinsic_nodate,gong_intrinsic_2019}). Moreover, the generalization error is well defined for prediction tasks, but harder to extend to other tasks such as generative modeling \citep{theis2015note}. 

Motivated by these limitations, we introduce the \textit{representation gap} $\mathcal{R}(\Omega,\Omega_f)$, which measures the discrepancy between the data manifold $\Omega$ and the prediction space $\Omega_f$ of a trained model $f$. The representation gap extends the generalization error to prediction tasks and generative modeling within a unified framework. We demonstrate that it admits a surprisingly simple asymptotic scaling in $n^{-2/d}$, where $n$ is the size of the training dataset $\mathbb{D}$ and $d$ is an \textit{intrinsic dimension} parameter that depends only on the geometry of $\Omega$ and the symmetries of $f$. As a corollary, we demonstrate how model equivariance reduces this intrinsic dimension $d$, thereby provably improving generalization.

Theoretical analyses of generalization typically assume that training and test data are \textit{i.i.d.} \citep{shalev2014understanding}, and we follow this standard framework. However, since real-world datasets are often collected with the goal of covering the diversity of the task \citep{deng_imagenet_nodate,lin2014fleet,torralba2011unbiased}, we also formulate our results for optimally diverse datasets \citep{zador1982asymptotic}. Interestingly, we show that \textit{i.i.d.} datasets exhibit the same asymptotic behavior as optimally diverse datasets, up to a rescaling of the effective sample size $n_{\mathrm{eff}}$. Overall, we make the following contributions.

\textbf{We introduce the representation gap}, a geometric quantity that extends the generalization error to prediction tasks and generative modeling within a unified framework.

\textbf{We derive precise asymptotic equivalents of the representation gap} for equivariant diffusion models. We extend this result to the setting of supervised prediction, and establish bounds relating representation gap and generalization error. Our results hold both for \textit{i.i.d.} datasets and optimally diverse datasets.

\textbf{We show that asymptotic representation gap is governed by the intrinsic dimension of the task}, a single parameter determined by the geometry of the data manifold and the symmetries of the model. We further show that this intrinsic dimension can be estimated efficiently.

\textbf{We validate our theoretical predictions} on controlled synthetic environments with known intrinsic dimension, as well as on more realistic datasets.

\section{Related work}

\textbf{Geometric perspective on generalization}. Building on the manifold hypothesis \citep{bengio_representation_2013}, several works have studied neural networks as manifold learners \citep{loaiza-ganem_deep_2024,schuster_manifold_2021}. Focusing on ReLU networks, the authors of \citet{yao_theoretical_2024} derive generalization bounds based on geometric properties of the data manifold, such as its dimension or Betti numbers. In contrast, we derive precise asymptotic equivalents and relate them to model equivariances. We further compare our intrinsic dimension estimator with prior manifold dimension estimators \citep{pope_intrinsic_2021,gong_intrinsic_2019,ansuini_intrinsic_nodate} and obtain consistent estimates across several datasets (see Section \ref{sec:dimension_estimation}).

\textbf{Generalization of equivariant neural networks}. Empirical studies have shown that equivariance improves generalization and sample efficiency \citep{cohen_group_nodate,bulusu_equivariance_2022}. 
A large body of work studies PAC and generalization bounds \citep{sannai_improved_nodate,chen_group-theoretic_nodate,elesedy_provably_2021}. Closest to our work, \citet{tahmasebi_exact_nodate} show that the generalization error of Kernel Ridge Regression is bounded by $n^{-s/(s+d/2)}$, where $d$ is  the dimension of the quotient manifold induced by the model symmetries. In contrast, we derive asymptotic equivalents for the representation gap and recover related bounds on the generalization error as a corollary (see Section \ref{sec:generalization_error}). Finally, \citet{kamb_analytic_2025} derive analytic expressions for the predictions of trained diffusion models, which underlies part of our analysis.

\textbf{Scaling laws}. Our work is related to neural scaling laws \citep{kaplan2020scaling} and recent studies on diffusion model scaling \citep{mei2024bigger,li2024scalability,liang2024scaling}. While prior work mainly studies empirical scaling with respect to compute, we focus on the geometric scaling induced by dataset size and model equivariance.

\textbf{Optimal quantization and point processes.} Our analysis relies on point process theory in the \textit{i.i.d.} setting \citep{biau2015lectures,penrose2013limit} and optimal quantization theory in the optimally diverse setting \citep{gruber_optimal_nodate}. However, the representation gap combines geometric and statistical aspects, requiring substantial adaptation of existing results.

\section{An illustrative example}\label{sec:illustrative_example}

\begin{figure}
    \centering
    \begin{tabular}{ccc}
        \includegraphics[width=0.35\linewidth]{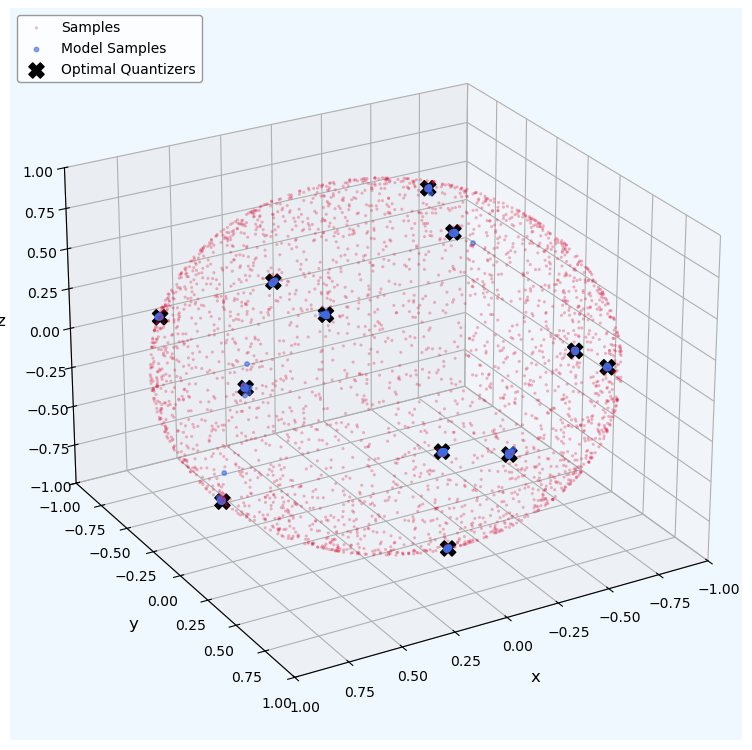} &
        \includegraphics[width=0.35\linewidth]{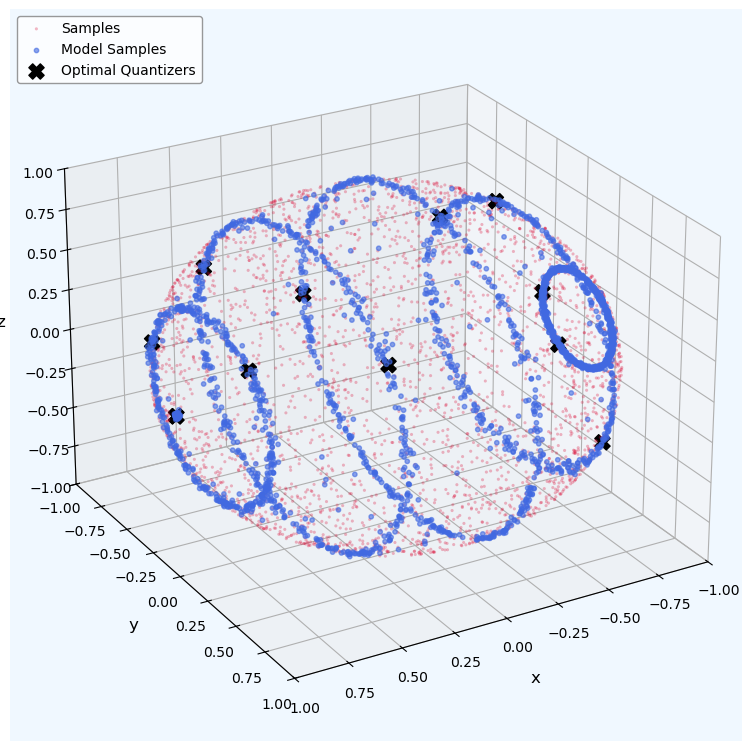}  
        \\
        \small (a) Non-equivariant model &
        \small (b) Equivariant model 
        \\
    \end{tabular}
    \caption{Illustration of the virtual augmentation of a dataset by an equivariant diffusion model.  Plot (a) shows samples from a trained diffusion model, and plot (b) shows samples from a trained equivariant diffusion model (with rotational invariance along the $x$-axis). In both plots, the shape $\Omega$ is indicated by a dense cloud of red dots, the coarse dataset $\bD$ by crosses, and the approximated shape $\Omega_f$ by a dense cloud of blue dots sampled from the trained diffusion model $f$.}
    \label{fig:illustrative_example}
\end{figure}

\begin{wrapfigure}{l}{0.45\textwidth}
\vspace{-15pt}
  \begin{center}
    \includegraphics[width=0.42\textwidth]{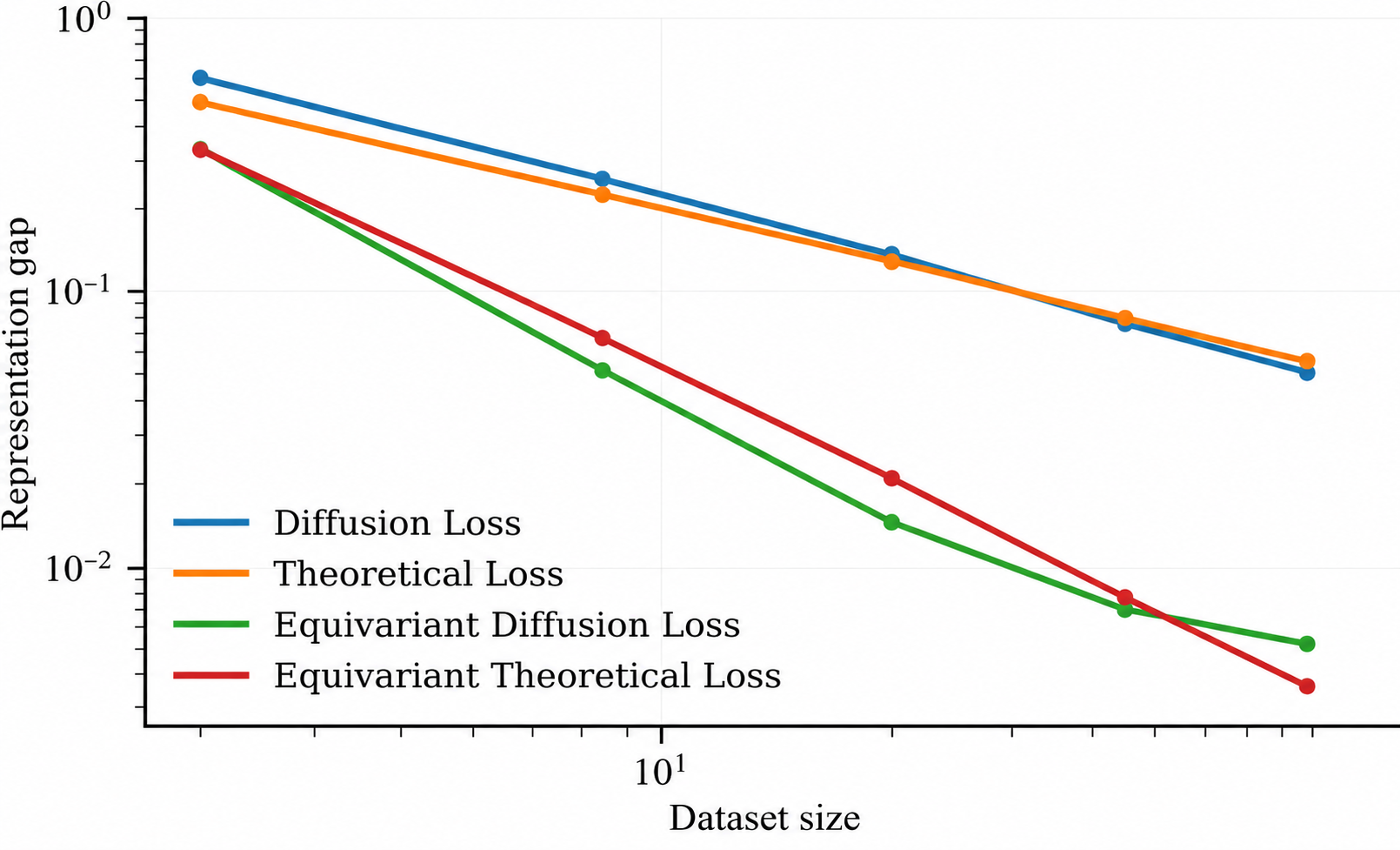}
  \end{center}
  \caption{Log plot of the asymptotic evolution of the representation gap of a rotation-equivariant model and a non-equivariant model for a 2D sphere surface. The $x$-axis corresponds to the dataset size $n$, and the $y$-axis corresponds to the representation gap. We observe a linear evolution, with slope $-1$ for the non-equivariant model and $-2$ for the equivariant model. The theoretical curves are shown using an empirical estimate of the multiplicative constant J in Eq. \ref{eq:simplified_asymptotic_gap} }
  \label{fig:sphere_representation_gap}
 \vspace{-10pt}
\end{wrapfigure}

Let us first introduce the main concepts of this paper through a concrete example. We consider the task of generative modeling of 3D shapes \citep{yang_pointflow_2019}. The goal is to learn to sample points $y$ from a surface $\Omega\subset\bR^3$ that is described by a coarse $n$-point cloud $\bD\in\Omega^n$. Diffusion models have recently achieved strong empirical performance on this task \citep{li_advances_2024}. We denote by $\Omega_f$ the set of points that a trained diffusion model $f$ can generate -- in other words, the limit points of the denoising process.

This setting is illustrated by Figure \ref{fig:illustrative_example}. The surface $\Omega$ is represented by a dense cloud of red dots, the coarse dataset $\bD$ by crosses, and the prediction space $\Omega_f$ by a dense cloud of blue dots sampled from a trained diffusion model $f$. In this example, the surface $\Omega$ exhibits a rotational symmetry, which reduces the degrees of freedom of the task. A natural way to leverage this symmetry is to use a rotation-equivariant diffusion model $f$ \citep{hoogeboom_equivariant_nodate}. Figure \ref{fig:illustrative_example}(a) shows the output of a non-equivariant model, while Figure \ref{fig:illustrative_example}(b) shows the output of an equivariant model.

We make the following two observations. First, the distribution learned by the non-equivariant neural network converges toward the empirical distribution $\frac{1}{|\bD|}\sum_{y\in\bD}\delta_y$, so that the prediction space $\Omega_f$ coincides with the dataset $\bD$. In other words, $\Omega_f=\bD$. In contrast, the equivariant model virtually augments the dataset $\bD$ by the rotation group $G$ under which it is equivariant, so that $\Omega_f=G(\bD)=\{g(z)|z\in\bD,g\in G\}$.

It is clear from Figure \ref{fig:illustrative_example} that equivariance drastically improves the resolution of the prediction space $\Omega_f$. In order to quantify this improvement, we introduce the representation gap, a measure of how well the prediction space $\Omega_f$ approximates the data manifold $\Omega$ under a metric $\ell$ defined on the ambient space. In this work, $\ell$ denotes the squared Riemannian distance unless stated otherwise.

\begin{definition}[Representation gap]\label{def:representation_gap}
    Let $\Omega$ denote the data manifold and $\Omega_f$ denote the model's prediction space. We define the representation gap as follows:
    \begin{equation}\label{eq:representation_gap}
        \cR(\Omega,\Omega_f)=\int_\Omega\inf_{z\in\Omega_f}\ell(y,z)p(y)\;\dy\;.
    \end{equation}
\end{definition}

Concretely, Eq. \ref{eq:representation_gap} projects each sample $y\in\Omega$ to the closest prediction point $z\in\Omega_f$ generated by the model, and averages this error across the data manifold. It is worth noting that the representation gap is a special case of the Wasserstein distance \citep{peyre2019computational} (see Section \ref{appsec:link_wasserstein} in Appendix), which is commonly used to compare sets, as well as a natural generalization of the quantization error, which we recover when the set $\Omega_f$ is discrete \citep{graf2007foundations}. 

Intuitively, a non-equivariant model $f$ requires information about all the $d_\Omega=2$ dimensions of the shape $\Omega$ in order to approximate it from the dataset $\bD$ (as illustrated on the left of Figure \ref{fig:illustrative_example}). On the other hand, the equivariant model only needs information along the rotational axis, with dimension $d_\Omega-1=1$. More generally, for an arbitrary manifold $\Omega$ and symmetry group $G$, the equivariant model only needs information about the quotient space $\Omega/G$, with dimension $d_{\Omega/G}$. The remaining dimensions are implicitly recovered by the virtual augmentation of the dataset, since $\Omega_f=G(\bD)$. This leads to the following asymptotic characterization of the representation gap, illustrated in Figure \ref{fig:sphere_representation_gap}.

\begin{theorem}[Asymptotic representation gap -- informal statement]\label{prop:efficiency_informal}
    See Theorems \ref{prop:efficiency_nonconditional} and \ref{prop:efficiency_conditional}. 
    The representation gap $\cR_n$ of a model trained on a dataset $\bD$ of size $n$ scales as
    \begin{equation}\label{eq:simplified_asymptotic_gap}
        \cR_n \;\underset{n\rightarrow+\infty}{\scalebox{1.5}{$\sim$}}\; \frac{J}{n^{2/d}}\;, 
    \end{equation}
    where $d$ denotes the intrinsic dimension of the task: $d=d_\Omega$ for a non-equivariant model or $d=d_{\Omega/G}$ for an equivariant model. The constant $J$ admits an analytic expression depending only on the geometry of the manifold $\Omega$, the symmetry group $G$ and the quotient metric on $\Omega/G$.
\end{theorem}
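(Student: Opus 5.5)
The plan is to reduce the statement to the classical asymptotics of quantization error on a compact Riemannian manifold, applied to the quotient when the model is equivariant. First I use the characterization of the trained model recalled in Section \ref{sec:illustrative_example} (following \citet{kamb_analytic_2025}): a non-equivariant diffusion model has $\Omega_f=\bD$, and a $G$-equivariant one has $\Omega_f=G(\bD)$. For $y\in\Omega$, write $\pi:\Omega\to\Omega/G$ for the projection and $\bar\ell$ for the squared quotient distance $\bar\ell(\pi y,\pi z)=\inf_{g\in G}\ell(y,gz)$. Assuming $\ell$ is $G$-invariant (i.e. $G$ acts by isometries), we get $\inf_{z\in G(\bD)}\ell(y,z)=\min_{x\in\bD}\bar\ell(\pi y,\pi x)$. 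Hence
\begin{equation*}
\cR(\Omega,G(\bD))=\int_{\Omega/G}\min_{x\in\bD}\bar\ell(u,\pi x)\,\bar p(u)\,\mathrm{d}u,
\end{equation*}
where $\bar p=\pi_\#p$. This is exactly the $n$-point quantization error of $\bar p$ on the $d$-dimensional space $\Omega/G$, with $d=d_{\Omega/G}$. The non-equivariant case is the special case where $G$ is trivial.

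Next I treat the two sampling regimes separately. In the optimally diverse setting, $\bD$ minimizes this error, so Zador's theorem extended to Riemannian manifolds \citep{gruber_optimal_nodate} gives
\begin{equation*}
\cR_n\sim C_d\,\|\bar p\|_{d/(d+2)}\,n^{-2/d},
\end{equation*}
where $C_d$ is the Euclidean quantization constant for squared distance. In the i.i.d. setting, I apply the random-quantizer asymptotics via point processes \citep{biau2015lectures,penrose2013limit}. Condition on $u$ and rescale by $n^{1/d}$. The nearest-neighbour distance then converges to that of a Poisson process of intensity $\bar p(u)$, which gives
\begin{equation*}
n^{2/d}\,\bE\min_x\bar\ell(u,\pi x)\to\Gamma(1+2/d)\,(V_d\,\bar p(u))^{-2/d},
\end{equation*}
where $V_d$ is the volume of the unit $d$-ball. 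Integrating against $\bar p$ yields $J=\Gamma(1+2/d)V_d^{-2/d}\int\bar p^{1-2/d}$. Both constants depend only on $\Omega$, $G$ and the quotient metric, as the statement claims.

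The main obstacle is making the manifold and quotient step rigorous. The orbit space $\Omega/G$ is a Riemannian manifold only on the principal-orbit stratum. I would restrict to the open dense set of principal orbits, whose complement has $\bar p$-measure zero. Then I would cover it by normal-coordinate charts where $\bar\ell$ is $(1+o(1))$-comparable to the Euclidean squared distance. The local Euclidean asymptotics transfer chart by chart, and the chart contributions are glued using the usual firewall argument for the optimal quantizer. For the i.i.d. case I would use dominated convergence, bounding the tail $\bE\min\bar\ell\le\operatorname{diam}^2(1-\bar p(B))^n$; this needs $\bar p$ bounded below near its support, or some moment condition that I would assume. Boundary effects and singular strata contribute $o(n^{-2/d})$, since their neighbourhoods of radius $n^{-1/d}$ have vanishing mass.
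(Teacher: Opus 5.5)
Your reduction to an $n$-point quantization problem on $\Omega/G$, and your treatment of the optimally diverse case via Gruber's manifold version of Zador's theorem, follow the paper's argument. The gap is in the i.i.d.\ regime. Your Poisson-limit and dominated-convergence argument gives $n^{2/d}\,\bE\,\cR_n\to J$, which is the classical statement in expectation. But $\cR_n$ is a random variable, and the statement, made precise in Theorem~\ref{prop:efficiency_nonconditional}, is $\cR_n\sim_{\bP}J\,n^{-2/d}$. Convergence of the mean does not exclude fluctuations of $n^{2/d}\cR_n$ of order one.

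This upgrade is exactly what the paper presents as its contribution beyond Biau--Devroye (Proposition~\ref{app_prop:random_efficiency_manifold}). It does not follow from the pointwise limit of $A_n(u)=n^{2/d}\min_{x\in\bD}\bar\ell(u,\pi x)$ plus uniform integrability, since those only control $\bE A_n(u)$. You need a variance bound. One way to get it in your setting:

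\begin{itemize}
\item Write $\mathrm{Var}(n^{2/d}\cR_n)=\iint\mathrm{Cov}(A_n(u),A_n(v))\,\bar p(u)\bar p(v)\,\mathrm{d}u\,\mathrm{d}v$.
\item For $u\neq v$, the balls of radii $\sqrt{s}\,n^{-1/d}$ and $\sqrt{t}\,n^{-1/d}$ around $u$ and $v$ are eventually disjoint. Hence $\bP(A_n(u)>s,\,A_n(v)>t)\to e^{-V_d\bar p(u)s^{d/2}}e^{-V_d\bar p(v)t^{d/2}}$, and the pair converges to independent limits.
\item The uniform tail bound $\bP(A_n(u)>t)\le e^{-ct^{d/2}}$ follows from $\bar p\ge c_0>0$ and $\operatorname{vol}B(u,r)\ge c_1r^d$ on a compact manifold. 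It gives uniform integrability of the products and a uniform bound on the covariances.
\item Dominated convergence then yields $\mathrm{Var}(n^{2/d}\cR_n)\to0$, and Chebyshev concludes.
\end{itemize}
An Efron--Stein argument also works, using that each sample point affects $\cR_n$ only through its Voronoi cell.

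Two smaller remarks. First, writing the reduced gap against the pushforward $\bar p=\pi_\#p$ is cleaner than the paper's constant-orbit-volume factorisation with prefactor $|G|$. It also matters for the i.i.d.\ constant: $\int\bar p^{1-2/d}$ is not $1$-homogeneous in the density, so it must be evaluated on the normalised $\bar p$, as you do.

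Second, your claim that singular strata contribute $o(n^{-2/d})$ is false in the i.i.d.\ regime when the action is not free. There $\bar p$ vanishes at lower-dimensional orbits, so ``$\bar p$ bounded below'' is unavailable precisely where you need it. For example, take $S^2$ modulo rotations about a fixed axis: $d=1$ and $\bar p(\theta)=\tfrac12\sin\theta$ on $[0,\pi]$. Then $\int\bar p^{1-2/d}=\int\bar p^{-1}=\infty$, and Fatou gives $n^{2}\bE\cR_n\to\infty$; the true order is $n^{-2}\log n$. You have two options:
\begin{itemize}
\item keep the paper's hypothesis of a free, proper isometric action, under which $\Omega/G$ is a closed manifold, $\bar p$ is continuous and bounded below, and no stratification is needed; or
\item impose explicit integrability and analyse the boundary layer, whose width is set by the decay of $\bar p$ rather than by $n^{-1/d}$.
\end{itemize}
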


The asymptotic evolution of the representation gap $\cR_n$ is governed by the single parameter $d$, which we name intrinsic dimension. In particular, this result characterizes precisely the advantage of the equivariant model over the non-equivariant one, since equivariance improves the asymptotic scaling whenever $d_{\Omega/G}<d_\Omega$. The next Section formalizes these observations for trained equivariant diffusion models (Theorems \ref{prop:virtual_augmentation} and \ref{prop:efficiency_nonconditional}) and extends our analysis to the setting of supervised prediction (Theorem \ref{prop:efficiency_conditional}).

\section{Theoretical results}\label{sec:theory}

\subsection{Preliminaries}\label{sec:preliminaries}

We first consider the task of non-conditional diffusion modeling and formalize the claims of Section \ref{sec:illustrative_example}. Let $\cY=\bR^{d_\cY}$ denote the target space, of dimension $d_\cY$. Under the manifold hypothesis \citep{bengio_representation_2013}, observations are assumed to lie on a low-dimensional Riemannian manifold $\Omega\subset\cY$ of dimension $d_\Omega$, whose geometry captures the symmetries of the task. We further suppose access to a dataset $\bD\subset\Omega$ composed of $n$ observations drawn from a distribution $p$ supported on $\Omega$. We consider neural networks $f_\theta$ in a parametric family $\cF_\Theta\subset\cF(\cY\times\bR,\cY)$, which we simply denote by $f$ when there is no ambiguity.

For simplicity, we follow \citet{kamb_analytic_2025} and \citet{finn_origins_2025}, and focus on Denoising Diffusion Implicit Models (DDIM) diffusion models \citep{song_denoising_2022}. 
DDIM models are trained to reverse a stochastic forward diffusion process that incrementally adds Gaussian noise to the data distribution while shrinking data points toward the origin. Noise addition is governed by a noise schedule $\alpha_t$, with $t\in[0,T]$. At time $t$, the noised distribution can be written $\pi_t(y)=\frac{1}{|\bD|}\sum_{z\in\bD}\cN(y|\sqrt{\alpha_t}z, (1-\alpha_t)I)$, thus interpolating between the empirical data distribution $\pi_0=\frac{1}{|\bD|}\sum_{z\in\bD}\delta_z$ and the isotropic Gaussian distribution $\pi_T=\cN(0,I)$. In this context, DDIM models are trained to approximate the score function $s_t=\nabla \log\pi_t$ using the loss
\begin{equation}\label{eq:diffusion_training}
    \cL(\theta)=\bE_{t,y_0,\eta}\|f_\theta(\sqrt{\alpha_t}y_0+\sqrt{1-\alpha_t}\eta,t)-\eta\|_2^2\;,
\end{equation}
where $t\sim \bU[0,T]$, $y_0\sim\pi_0$ and $\eta\sim\cN(0,I)$.
At sampling time, an initial point $y_T\sim\cN(0,I)$ is sampled and then updated using the deterministic flow 
\begin{equation}\label{eq:diffusion_sampling}
    \dot{y}_t=-\gamma_t(y_t+s_t(y_t))\;,
\end{equation}
where $t$ goes backward from $T$ to $0$. The outputs correspond to the endpoints reachable by this reverse flow, i.e.:
\begin{equation}
   \Omega_f\triangleq\{y_0 \;|\; y_T\sim\cN(0,I)\;,\; y_t \text{ solves Eq. } \ref{eq:diffusion_sampling} \}\;. 
\end{equation}

\subsection{Virtual augmentation of a dataset by an equivariant model}\label{sec:virtual_augmentation}

A diffusion model minimizing the training objective $\cL$ exactly —- and therefore recovering the true score function $s_t$ -— generates samples following the empirical distribution $\pi_0=\frac{1}{|\bD|}\sum_{z\in\bD}\delta_z$ \citep{song_generative_nodate}. In this case, the prediction space $\Omega_f$ learned by the model $f$ is the training data itself. Therefore, $\Omega_f$ provides a discrete approximation of the data manifold $\Omega$ given by $\Omega_f=\bD$. 

In practice, however, the neural network family $\cF_\Theta$ has a limited expressivity, which prevents the perfect estimation of the true score $s_t$. Instead, neural network architectures are often designed to enforce the symmetries of the task. Remarkably, it is possible to show following \citet{kamb_analytic_2025} that these architectural constraints induce a virtual augmentation of the training dataset $\bD$ by the symmetry group $G$ induced by the architecture, so that we have in effect $\Omega_f=G(\bD)$.

\begin{theorem}[Virtual augmentation of a dataset by an equivariant model]\label{prop:virtual_augmentation}
    See Proposition \ref{app_prop:virtual_augmentation} in Appendix. Let $f$ denote a diffusion model equivariant under a symmetry group $G$ and minimizing the training objective in Eq. \ref{eq:diffusion_training} on a dataset $\bD$. Then under mild assumptions on $G$, $\Omega$ and $\bD$, the set of points that can be predicted by $f$ is $\Omega_f=G(\bD)$.
\end{theorem}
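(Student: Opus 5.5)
The plan is to follow the argument of \citet{kamb_analytic_2025}. First we identify the minimizer of Eq.~\ref{eq:diffusion_training} within the class of $G$-equivariant networks. Then we study the reverse flow of Eq.~\ref{eq:diffusion_sampling} driven by that minimizer.

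We assume that $G$ is a compact group acting on $\cY$ by orthogonal linear maps, with Haar probability measure $\mu_G$. This covers rotations and permutations; it makes Gaussian noise $G$-invariant and ensures the orbit $G(\bD)$ is compact. We also assume the equivariant family $\cF_\Theta$ is expressive enough to realize every measurable $G$-equivariant function of $(y,t)$.

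\textbf{Step 1: the equivariant minimizer is the score of the symmetrized distribution.} For each $t$, the loss $\cL$ is an $L^2$ regression problem. Its unconstrained minimizer is $\bE[\eta\mid y_t]$, which equals $-\sqrt{1-\alpha_t}\,s_t$ by Tweedie's formula. Restricting to $G$-equivariant functions amounts to an orthogonal projection in $L^2(\pi_t)$ onto the equivariant subspace. Because the forward noise is $G$-invariant, this projection is computed by replacing $\pi_0$ with the symmetrized measure $\pi_0^G=\frac{1}{|\bD|}\sum_{z\in\bD}\int_G\delta_{g z}\,d\mu_G(g)$.

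Concretely, we check that $\cL$ evaluated on an equivariant $f$ is unchanged when $y_0\sim\pi_0$ is replaced by $g y_0$ for every $g\in G$. Hence $\cL$ equals the same loss with data distribution $\pi_0^G$. The unconstrained minimizer for $\pi_0^G$ is its score $s_t^G=\nabla\log\pi_t^G$, where $\pi_t^G$ is $\pi_0^G$ convolved with the forward kernel. Since $\pi_t^G$ is $G$-invariant, $s_t^G$ is automatically equivariant, and so it is the constrained minimizer.

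\textbf{Step 2: the reverse flow recovers $\pi_0^G$.} With $f$ matching $s_t^G$ exactly, the probability-flow ODE of Eq.~\ref{eq:diffusion_sampling} transports $\cN(0,I)$ to $\pi_t^G$ at each time. This is the standard result of \citet{song_generative_nodate}, applied to $\pi_0^G$ in place of $\pi_0$. Taking the limit $t\to0$, the pushforward converges weakly to $\pi_0^G$.

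It then remains to show that the endpoint set $\Omega_f$ equals $\operatorname{supp}\pi_0^G=G(\bD)$. The support equals $G(\bD)$ because $\mu_G$ has full support on $G$ and the action map $g\mapsto gz$ is continuous. We treat $\Omega_f$ as the set of limit points, i.e.\ its closure, consistently with the informal definition.

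\textbf{Main obstacle.} The hardest step is the $t\to0$ limit. The score $s_t^G$ blows up near $G(\bD)$, so the flow must be shown to converge for $\cN(0,I)$-almost every initial condition, with endpoints lying in $G(\bD)$. Conversely, every point of $G(\bD)$ must be attained, or at least be a limit of endpoints.

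For the first direction, we would use the explicit mixture form of $\pi_t^G$: the score is a posterior-weighted average of $(\sqrt{\alpha_t}gz-y)/(1-\alpha_t)$. Trajectories are then attracted to the orbit at rate $\mathrm{dist}(y_t,\sqrt{\alpha_t}G(\bD))=O(\sqrt{1-\alpha_t})$. For the second direction, we use the fact that the flow map is a diffeomorphism between $\pi_T$ and $\pi_t^G$ for every $t>0$, together with weak convergence to a measure with full support on $G(\bD)$.

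The "mild assumptions" of Theorem~\ref{prop:virtual_augmentation} will be exactly these: $G$ compact and orthogonal, orbits of $\bD$ that are disjoint or coincide, and a noise schedule with $\alpha_t\to1$ as $t\to0$.
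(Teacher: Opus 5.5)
Your plan is sound and reaches the result by a different mechanism from the paper's. Your Step~1 is the content of Theorem~B.3 of Kamb and Ganguli, which the paper only cites. You derive it from the $G$-invariance of the Gaussian noise, and your hypotheses (compact $G$ acting orthogonally on $\cY$) are exactly what make that invariance, and the compactness of $G(\bD)$, hold; the paper uses both tacitly.

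The real divergence is in the second half. The paper works trajectory by trajectory. A Laplace approximation gives $f(y_t,t)=\frac{1}{1-\alpha_t}(y_t-y_t^*)+o\bigl(\frac{1}{1-\alpha_t}\bigr)$, with $y_t^*$ the nearest point of $G(\bD)$. The paper then \emph{assumes} (hypothesis (iv)) that $y_t$ converges with bounded velocity, which forces the limit into $G(\bD)$. Its reverse inclusion comes from an extra hypothesis that data points are fixed points of $f(\cdot,t)$, with trajectories started on the orbit.

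You instead use that the probability-flow ODE transports $\cN(0,I)$ onto $\pi_t^G$. Once trajectories converge for almost every $y_T$, dominated convergence identifies the law of the endpoint with the weak limit $\pi_0^G$. Both inclusions are then immediate: endpoints lie in the closed set $G(\bD)$ almost surely, and every neighbourhood of every point of $G(\bD)$ is hit with positive probability. No rate argument is needed for the first inclusion.

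This removes the need for the Laplace lemma, the fixed-point hypothesis, and hypothesis (iii). Hypothesis (iii) asks for a unique nearest point for \emph{every} $y$ and fails in general: for $\bD=\{-a,a\}$ and trivial $G$, the trajectory from $y_T=0$ stays at $0$. Your almost-everywhere/closure formulation is therefore the right one. What the paper's route offers in exchange is a pointwise picture of which orbit point a given trajectory is drawn to.

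The step you leave open is exactly the one the paper assumes away: almost-everywhere convergence of $y_t$ as $t\to0$. You may simply list it among your mild assumptions. If you want to prove it, note that a bound on $\mathrm{dist}(y_t,\sqrt{\alpha_t}G(\bD))$ alone only locates accumulation points, since along a continuous orbit a trajectory could keep sliding. What yields convergence is a velocity bound.

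With $x=y/\sqrt{\alpha_t}$ and $\sigma=\sqrt{(1-\alpha_t)/\alpha_t}$, the flow reads $dx/d\sigma=(x-\bE[gz\mid x])/\sigma$. Work inside a tubular neighbourhood of $G(\bD)$, which has positive reach since orbits of a compact group are compact embedded submanifolds. There you need a \emph{uniform} Laplace estimate; the paper's lemma is only pointwise. It should show that $\bE[gz\mid x]$ lies within $O(\sigma)$ of the nearest orbit point, with normal offset $O(\sigma^2)$. Then $\mathrm{dist}(x,G(\bD))/\sigma$ has bounded $\sigma$-derivative, so $\|dx/d\sigma\|=O(1)$ and $x$ converges. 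Your Step~2 supplies the remaining input: $\pi_t^G$ gives vanishing mass to the complement of any fixed tube as $t\to0$, so the set of trajectories that fail to converge has measure zero.
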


\begin{proof}
    The proof of Theorem \ref{prop:virtual_augmentation} relies on the following observation:  the score function $s_t$ at a point $y\in\cY$ can be written as an integral over the orbits $G(\bD)$ of the dataset $\bD$: 
    \begin{equation*}
        s_t(y)=-\frac{1}{1-\alpha_t}\int_{G(\bD)}(y-\sqrt{\alpha_t}z)W_t(z)\dz\;,
    \end{equation*}
    where each point $z\in G(\bD)$ is weighted by the distribution 
    \begin{equation*}
        W_t(z)=\frac{\cN\left(y|\sqrt{\alpha_t}z,(1-\alpha_t)I\right)}{\int_{G(\bD)}\cN\left(y|\sqrt{\alpha_t}z',(1-\alpha_t)I\right)\dz'}\;.
    \end{equation*} 
    We can see that $W_t(y)$ acts as a softmax that peaks at the minimizer $y^*=\argmin_{z\in G(\bD)}\ell(y,z)$ for small $t$. More precisely, we can use a Laplace approximation to show that $W_t(y)$ concentrates the probability mass around $y^*$ when $t\rightarrow0$. 

    Under the hypothesis that $f$ minimizes the training objective in Eq. \ref{eq:diffusion_training}, we can therefore write
    \begin{equation*}
        f(y_t,t)=-\frac{1}{1-\alpha_t}\int_{G(\bD)}(y_t-\sqrt{\alpha_t}z)W_t(z)\dz = \frac{1}{1-\alpha_t}(y_t-y_t^*)+o\left(\frac{1}{1-\alpha_t}\right)\;, 
    \end{equation*}
    which in turn implies $y_t-y^*_t\approx(1-\alpha_t) f(y_t,t)\rightarrow0$, and therefore $\lim_{t\rightarrow0}y_t=\lim_{t\rightarrow0}y_t^*\in G(\bD)$ (by properties of $G$). This proves $\Omega_f\subset G(\bD)$. The reverse inclusion is detailed in Appendix.
\end{proof}

\subsection{Representation gap for non-conditional diffusion}\label{sec:non_conditional}

Using Theorem \ref{prop:virtual_augmentation}, we can now characterize the asymptotic representation gap in the large sample regime. Crucially, since equivariant architectures virtually augment the dataset by $\Omega_f=G(\bD)$, the representation gap no longer depends on the ambient manifold $\Omega$, but only on the geometry of the quotient manifold $\Omega/G$. The representation gap therefore reduces to a quantization problem on the quotient manifold, allowing the use of asymptotic results from optimal quantization and point processes theory. 

The representation gap $\cR(\Omega,\Omega_f)$ depends on how the dataset $\bD$ is sampled from $\Omega$. It is typical to assume that $\bD$ is a dataset of size $n$ sampled \textit{i.i.d.} from the data distribution $p$, and we denote 
\begin{equation}
    \cR_n\triangleq\cR(\Omega,\Omega_f(\bD))
\end{equation}
the corresponding \textit{random representation gap}. 
In practice, however, datasets are often collected to cover the diversity of the task, modulo its known invariants \citep{torralba2011unbiased}. Motivated by this observation, we also consider the setting where $\bD$ is optimally diverse, i.e. minimizes the representation gap, and denote the \textit{optimal representation gap} by
\begin{equation}
    \cR_n^*\triangleq\inf_{\bD\subset\Omega,\; |\bD|=n}\cR(\Omega,\Omega_f(\bD))
\end{equation}

The following result is a formalization of Theorem \ref{prop:efficiency_informal}.

\begin{theorem}[Representation gap for non-conditional diffusion]\label{prop:efficiency_nonconditional}
    See Propositions \ref{app_prop:optimal_efficiency_general},\ref{app_prop:optimal_efficiency_manifold}, \ref{app_prop:random_efficiency_manifold} and \ref{app_prop:efficiency_equivariant} in Appendix. 
    Let $f$ denote an equivariant model satisfying $\Omega_f=G(\bD)$, where  $\bD$ is a dataset of size $n$. Suppose further that the orbits $G(y)$ have constant volume for each point $y\in\Omega$.
    Then under mild regularity assumptions on $\Omega$ and $G$, the representation gap satisfies
    \begin{equation}\label{eq:efficiency_nonconditional}
    \begin{aligned}
    \text{(i.i.d.)}\quad 
    \cR_n &\sim_{\mathbb{P}} \frac{J_d}{n^{2/d}}
    \qquad\qquad
    \text{(optimal)}\quad 
    \cR_n^{\star} \sim \frac{J_d^*}{n^{2/d}}
    \end{aligned}
    \end{equation}
    where $d=d_{\Omega/G}$ denotes the dimension of $\Omega/G$, the quotient space of $\Omega$ by the symmetry group $G$, and the constants $J_d$ and $J_d^*$  depend only on the quotient geometry and data distribution on $\Omega/G$. In particular, equivariance improves the asymptotic scaling whenever $d_{\Omega/G}<d_\Omega$.
\end{theorem}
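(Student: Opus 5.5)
The plan is to reduce the equivariant problem to a classical quantization problem on the quotient $\Omega/G$, and then import the standard asymptotics. For the optimal case these come from Zador's theorem for Riemannian manifolds (Gruber). For the i.i.d. case they come from the nearest-neighbour and point-process limit theorems (Penrose, Biau--Devroye).

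\textbf{Step 1: reduction to the quotient.} Let $\pi:\Omega\to\Omega/G$ be the projection, and equip $\Omega/G$ with the quotient distance $\bar d([y],[z])=\inf_{g\in G} d(y,g z)$. Since $\Omega_f=G(\bD)$ and $\ell$ is the squared Riemannian distance, for every $y$ we have
\begin{equation*}
\inf_{z\in G(\bD)}\ell(y,z)=\min_{x\in\bD}\inf_{g\in G} d(y,gx)^2=\min_{x\in\bD}\bar d(\pi(y),\pi(x))^2 .
\end{equation*}
This uses that $G$ acts by isometries, which is part of the regularity assumptions. Integrating against $p$ gives
\begin{equation*}
\cR(\Omega,G(\bD))=\int_{\Omega/G}\min_{x\in\bD}\bar d(u,\pi(x))^2\,\bar p(u)\,\mathrm{d}u ,
\end{equation*}
where $\bar p=\pi_\# p$ is the pushforward density, computed by the coarea formula. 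The constant-orbit-volume assumption is used here: it makes $\bar p$ simply $p$ integrated along orbits, with no Jacobian distortion, so that $\bar p$ is a genuine density on the $d$-dimensional quotient. If $\bD$ is i.i.d. from $p$, then $\pi(\bD)$ is i.i.d. from $\bar p$. The optimal set over $\bD\subset\Omega$ also corresponds to an optimal set over $n$-point subsets of $\Omega/G$, since $\pi$ is surjective. Hence $\cR_n$ and $\cR_n^*$ are exactly the random and the optimal $n$-point quantization errors of $\bar p$ on $(\Omega/G,\bar d)$. The non-equivariant case is the special case $G=\{e\}$.

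\textbf{Step 2: optimal case.} Assume that on the principal stratum $\Omega/G$ is a compact Riemannian manifold of dimension $d$, with lower-dimensional singular strata of measure zero. Zador's theorem, in Gruber's Riemannian version, then gives
\begin{equation*}
n^{2/d}\cR_n^*\to J_d^*=C_d\,\|\bar p\|_{d/(d+2)} ,
\end{equation*}
where $C_d$ is the Euclidean quantization coefficient for the unit cube. To prove it I would follow Gruber's argument. First, cover the quotient by finitely many charts that are almost isometric (bi-Lipschitz with constant close to $1$). Next, allocate points to the charts in proportion to $\bar p^{d/(d+2)}$. Finally, use the Euclidean result in each chart for the upper bound, and a firewall/locality argument for the matching lower bound.

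\textbf{Step 3: i.i.d. case.} Write $\cR_n$ as the average over test points $u$ of $\min_i \bar d(u,\pi(x_i))^2$. For each fixed $u$, the rescaled nearest-neighbour distance $n^{1/d}\min_i\bar d(u,\pi(x_i))$ converges in law to a Weibull variable with rate $\bar p(u)\omega_d$, where $\omega_d$ is the volume of the unit $d$-ball. Its second moment is therefore $\Gamma(1+2/d)(\omega_d\bar p(u))^{-2/d}$. This gives
\begin{equation*}
\bE[\cR_n]\,n^{2/d}\to J_d=\Gamma(1+2/d)\,\omega_d^{-2/d}\int\bar p^{\,1-2/d}.
\end{equation*}
Two things need checking here. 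The first is uniform integrability: an exponential tail bound on nearest-neighbour distances, using a lower density bound on a compact support, dominates the integrand. The second is the passage from expectation to convergence in probability: a variance bound (Efron--Stein, since one sample point changes $n^{2/d}\cR_n$ by $O(n^{-1})$) gives concentration.

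\textbf{Main obstacle.} The main obstacle is Step 1 combined with the quotient geometry. $\Omega/G$ is in general an orbifold or stratified space, not a manifold, so the Zador and Weibull-limit theorems must be justified near singular orbits. I would first show that a neighbourhood of the singular set of small measure contributes $o(n^{-2/d})$, using that any point's nearest-neighbour distance is controlled by a covering-number bound of order $n^{-1/d}$. The classical results can then be applied on the compact principal part.
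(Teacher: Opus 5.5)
Your proposal is correct and follows the paper's route. You reduce $\cR(\Omega,G(\bD))$ to a quantization problem on $\Omega/G$ through the isometric action, then invoke Gruber's Riemannian Zador theorem for $\cR_n^\star$, and use a pointwise Weibull limit with uniform integrability for $\cR_n$.

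Your Efron--Stein variance bound supplies a concentration step the paper's appendix passes over: it goes straight from per-point convergence of $A_n(y)$ to convergence in probability of the $Y$-averaged gap, with no variance or decorrelation argument. The singular-strata obstacle you flag does not arise under the paper's assumption that $G$ acts freely and properly, since that makes $\Omega/G$ a compact manifold.
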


\begin{proof}
We have $\Omega_f=G(\bD)$. Using the orbit decomposition of $\Omega$ and the isometric action of $G$ (see for instance \citet{gallot_riemannian_1990}), the representation gap reduces to
\begin{equation*}
\cR(\Omega,\Omega_f) =|G| \int_{\Omega/G} \min_{z\in\bD} \ell_{\Omega/G}(y,z)\,p(y)\,\dy,
\end{equation*}
so that the problem reduces to quantization on the quotient manifold $\Omega/G$ . The asymptotic optimal representation gap then follows from Zador's theorem (see Theorem 2 in \citet{gruber_optimal_nodate}). The \textit{i.i.d.} setting is treated by Proposition \ref{app_prop:random_efficiency_manifold} in Appendix.
\end{proof}

Theorem \ref{prop:efficiency_nonconditional} provides a precise asymptotic equivalent of the representation gap, which is remarkable since most existing analyses of neural network generalization focus on bounds \citep{zhang_understanding_2021}. In particular, the convergence in probability on the left of Eq. \ref{eq:efficiency_nonconditional} is a strong result, which implies that the representation gap of an \textit{i.i.d.} dataset is asymptotic close to its equivalent $J_d\,n^{-2/d}$ with arbitrarily high probability. The constants $J_d$ and $J_d^\star$ admit analytic expressions (see Prop. \ref{app_prop:optimal_efficiency_manifold} and \ref{app_prop:random_efficiency_manifold} in Appendix).

As a direct corollary, we obtain $\cR_n \sim_{\bP} \cR_{n_{\mathrm{eff}}}^\star$,    
with effective sample size $n_{\mathrm{eff}}=\left(J_d^\star/J_d\right)^{d/2}$. Thus, random datasets exhibit the same asymptotic behavior as optimally diverse datasets, up to a rescaling of the effective sample size $n_{\mathrm{eff}}$.

As a corollary of Theorem \ref{prop:virtual_augmentation}, Theorem \ref{prop:efficiency_nonconditional} applies to equivariant diffusion models minimizing the training objective in Eq. \ref{eq:diffusion_training}, under the regularity assumptions of Theorem \ref{prop:virtual_augmentation}. However, the result applies more generally to any generative model satisfying $\Omega_f=G(\bD)$, and is therefore not restricted to diffusion models. It also extends naturally to conditional generative models when the conditioning variable takes finitely many values (see Proposition \ref{app_prop:efficiency_discrete} in Appendix).

\subsection{Representation gap for supervised prediction}\label{sec:conditional}

We now turn to the more general setting of supervised prediction. Each input $x\in\Omega_\cX$ is associated with a unique target $y(x)\in\Omega_\cY$, so that the observation manifold $\Omega\subset\Omega_\cX\times\Omega_\cY$ can be identified with the graph of the function $y:\Omega_\cX\rightarrow\Omega_\cY$. In particular, the intrinsic dimension of $\Omega$ coincides with that of the input manifold, \textit{i.e.}, $d_\Omega=d_{\Omega_\cX}$, independently of the dimension of $\cY$. We further assume that the model $f$ generates a unique prediction $f(x)$ for each input $x\in\Omega_\cX$, so that the prediction manifold $\Omega_f$ can similarly be identified with the graph of $f$. In this context, the conditional representation gap is defined by
\begin{equation}\label{eq:representation_gap_conditional}
    \cR(\Omega,\Omega_f)
    =
    \int_{\Omega_\cX}
    \min_{z'\in\Omega_f}
    \ell(z,z')
    \,p(x)\,dx,
\end{equation}
where $z=(x,y(x))\in\Omega$ and $z'=(x',f(x'))\in\Omega_f$. We further denote by $\ell_\cX$ and $\ell_\cY$ the metrics induced by $\ell$ on $\cX$ and $\cY$ respectively.

\begin{theorem}[Conditional representation gap of an equivariant model]\label{prop:efficiency_conditional}
    See Proposition \ref{app_prop:conditional_efficiency_equivariant}. 
    Let $f$ denote an equivariant $L$-Lipschitz model satisfying $\Omega_f=G(\bD)$, where $\bD$ is a training dataset of size $n$ and $L>0$. Suppose further that the orbits $G(x)$ have constant volume for each point $x\in\Omega_\cX$, and that the metric $\ell$ is additively separable on $\cX$ and $\cY$. 
    Then under mild regularity assumptions, the representation gap satisfies
    \begin{equation}\label{eq:efficiency_conditional}
    \begin{aligned}
    \text{(i.i.d.)}\quad 
    \cR_n &= O_{\mathbb{P}} \left(\frac{1}{n^{2/d}}\right)
    \qquad\qquad
    \text{(optimal)}\quad 
    \cR_n^{\star} &= O\left(\frac{1}{n^{2/d}}\right)
    \end{aligned}\;,
    \end{equation}
    where $\Omega_\cX/G$ denotes the quotient space of $\Omega_\cX$ by the symmetry group $G$, and $d=d_{\Omega_\cX/G}$ denotes the dimension of $\Omega_\cX/G$.
\end{theorem}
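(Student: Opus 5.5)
The plan is to bound the conditional representation gap by a quantization error on the input quotient $\Omega_\cX/G$, and then invoke the asymptotics already established in Theorem \ref{prop:efficiency_nonconditional}. Here $\Omega_f=G(\bD)$ means the model's graph contains the orbit $\{(gx_i,g y(x_i)) : g\in G\}$ of each training pair $(x_i,y(x_i))$. By equivariance, $f(gx_i)=g f(x_i)$, and we assume $f$ interpolates the training data, $f(x_i)=y(x_i)$. We also take $y$ to be equivariant, since this is a symmetry of the task.

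\textbf{Step 1: pointwise bound.} Fix $x\in\Omega_\cX$. The minimum over $z'\in\Omega_f$ is at most the value at any candidate, so we choose $x'=g x_i$ with $(g,i)$ minimizing $\ell_\cX(x,gx_i)$. By additive separability,
\begin{equation*}
\min_{z'\in\Omega_f}\ell(z,z')\le \ell_\cX(x,gx_i)+\ell_\cY\bigl(y(x),y(gx_i)\bigr)\le (1+L_y^2)\,\ell_\cX(x,gx_i).
\end{equation*}
The last inequality holds because $y(gx_i)=gy(x_i)=f(gx_i)$ and $\ell$ is a squared distance, provided the target map $y$ is $L_y$-Lipschitz. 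This Lipschitz assumption is the regularity I would impose on $y$. Alternatively, one could compare with the point $(x,f(x))$ and use that $f$ is $L$-Lipschitz and agrees with $y$ on $G(\bD)$; this route gives the bound $2\,\ell_\cY(y(x),y(gx_i))+2L^2\ell_\cX(x,gx_i)$, which has the same form. Either way, the conditional gap is at most a constant $C(L)$ times $\int_{\Omega_\cX}\min_{g,i}\ell_\cX(x,gx_i)\,p(x)\,dx$.

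\textbf{Step 2: reduction to the quotient.} This integral is exactly the non-conditional representation gap of the input manifold $\Omega_\cX$ with respect to the augmented set $G(\bD_\cX)$, where $\bD_\cX$ is the set of training inputs. Under the constant-orbit-volume assumption, the orbit decomposition used in the proof of Theorem \ref{prop:efficiency_nonconditional} rewrites it as a quantization error on $\Omega_\cX/G$ with the quotient metric. Applying that theorem with $\Omega$ replaced by $\Omega_\cX$ gives the two conclusions. In the i.i.d. setting, convergence in probability to $J_d n^{-2/d}$ implies $O_{\bP}(n^{-2/d})$. In the optimal setting, we take the dataset inputs to be the optimal quantizer of $\Omega_\cX/G$; then $\cR_n^\star\le C(L)\cdot J_d^\star n^{-2/d}(1+o(1))$, where $d=d_{\Omega_\cX/G}$.

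\textbf{Main obstacle.} The delicate point is Step 1: justifying that the graph of $f$ actually passes through the augmented training pairs, and controlling the output discrepancy only through input distances. This requires the equivariance of $y$ and the interpolation property implicit in $\Omega_f=G(\bD)$. I would state both as the \textquotedblleft mild regularity assumptions\textquotedblright. A related point is whether the optimal-set bound must be over datasets in $\Omega$ rather than over arbitrary codebooks; since the data graph projects bijectively onto $\Omega_\cX$, these two choices coincide. Only upper bounds are claimed, so no matching lower bound is needed.
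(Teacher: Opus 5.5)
Your argument has the same skeleton as the paper's proof: pick the nearest training input, use additive separability plus a Lipschitz estimate to dominate the conditional gap by a constant times an input-space quantization error, then apply Theorem~\ref{prop:efficiency_nonconditional} on $\Omega_\cX/G$. The difference is that you put the Lipschitz estimate on the target $y$ rather than on the model $f$, and that is the right place for it. The paper bounds $\ell(z,\hat z)$ by $\ell_\cX(x,\hat x)+\ell_\cY(f(x),f(\hat x))$. Separability and interpolation actually produce $\ell_\cY(y(x),f(\hat x))=\ell_\cY(y(x),y(\hat x))$, so replacing $y(x)$ by $f(x)$ is unjustified. Lipschitzness of $f$ alone cannot control this term: with the trivial group, the flat interpolant $f\equiv 0$ of $y(x)=\sin(Kx)$ at its zeros makes the paper's $\cY$-term vanish, while $\ell_\cY(y(x),y(\hat x))=\sin^2(Kx)$ does not. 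Your estimate $(1+L_y^2)\,\ell_\cX(x,gx_i)$ avoids this and has several further advantages:
\begin{itemize}
\item it is valid and tracks the square of the Lipschitz constant for squared distances;
\item it makes explicit that the minimum runs over the augmented inputs $G(\bD_\cX)$, which is what produces $d_{\Omega_\cX/G}$ rather than $d_{\Omega_\cX}$;
\item its constant depends only on the task, so it is automatically uniform in $n$ and over the datasets in the infimum defining $\cR_n^\star$.
\end{itemize}
The price is the extra hypothesis that $y$ is Lipschitz and equivariant. Equivariance is already implicit in the paper's setting, since $G$ acts on $\Omega$. Lipschitzness is essentially implied by the stated hypothesis if you read it as holding for every admissible training set. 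For inputs $a,b$ in a common training set, the model $f$ trained on it gives $\ell_\cY(y(a),y(b))=\ell_\cY(f(a),f(b))\le L^2\ell_\cX(a,b)$, so $L_y\le L$. For random datasets this also needs continuity of $y$ and full support of $p$. Finally, your alternative comparison with $(x,f(x))$ still needs a bound on $\ell_\cY(y(x),y(gx_i))$, so it does not remove the dependence on $L_y$. The constant should therefore be written as $1+L_y^2$ rather than $C(L)$.
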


\begin{proof}
Using the equivariance of $f$, we proceed as in the proof of Theorem \ref{prop:efficiency_nonconditional} and reduce the representation gap to a quantization problem on the quotient manifold $\Omega_\cX/G$. Then, let $z=(x,y(x))\in\Omega$ denote a data sample with input $x$, let $\hat x=\argmin_{x'\in\bD_\cX}\ell_\cX(x,x')$ denote the nearest training input to $x$, and let $\hat{z}=(\hat{x},y(\hat{x}))$ denote the corresponding training sample. Since $f$ interpolates the training dataset, we have $f(\hat x)=y(\hat x)$. Using the additive separability of $\ell$ and the Lipschitzness of $f$,
\begin{equation*}
  \ell \big(z,\hat{z}\big)
\leq
\ell_\cX(x,\hat x)
+
\ell_\cY(f(x),f(\hat x))
\leq
(1+L)\ell_\cX(x,\hat x)\;.  
\end{equation*}

Thus, $\cR(\Omega,\Omega_f)
\leq
(1+L)
\int_{\Omega_\cX}
\min_{x'\in\bD_\cX}
\ell_\cX(x,x')
\,p(x)\,dx,$
and the result follows from Theorem \ref{prop:efficiency_nonconditional}.
\end{proof}

\subsection{Comparison with generalization error}\label{sec:generalization_error}

A natural question is to relate the representation gap $R(\Omega,\Omega_f)$ to the generalization error \citep{shalev2014understanding},
commonly used to characterize generalization. We focus on the setting of prediction tasks, for which there is a widely accepted definition of the generalization error, 
$\mathcal{E}=\int_{\Omega}\ell_\cY(y(x),f(x))\,p(x)\,\dx$.

\begin{theorem}[Comparison with generalization error]\label{prop:generalization_error}
    See Proposition \ref{app_prop:generalization_error_link} in Appendix. If the model $f$ is $L$-Lipschitz and the metric $\ell$ is additively separable on $\cX$ and $\cY$, we have under mild regularity assumptions that
    \begin{equation}\label{eq:generalization_error}
        \frac{1}{1+L}\mathcal{E}\leq \cR(\Omega,\Omega_f)\leq\mathcal{E}\;.
    \end{equation}
\end{theorem}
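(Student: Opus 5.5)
The plan is to prove the two inequalities separately. In both directions we use the graph identification of Section \ref{sec:conditional}: $\Omega=\{(x,y(x))\}$ and $\Omega_f=\{(x',f(x'))\}$. By additive separability,
\[
\ell\big((x,y(x)),(x',f(x'))\big)=\ell_\cX(x,x')+\ell_\cY\big(y(x),f(x')\big).
\]
Throughout, $\ell_\cX$ and $\ell_\cY$ are squared distances. The regularity assumptions should guarantee that the infimum over $\Omega_f$ is attained, or at least approached by measurable near-minimizers, and that all integrals are finite.

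\textbf{Upper bound $\cR\le\mathcal{E}$.} For each $x$, choose the competitor $x'=x$ inside the infimum in Eq.~\ref{eq:representation_gap_conditional}. Since $\ell_\cX(x,x)=0$, this gives
\[
\min_{z'\in\Omega_f}\ell(z,z')\le \ell_\cY\big(y(x),f(x)\big).
\]
Integrating against $p(x)$ yields $\cR(\Omega,\Omega_f)\le\mathcal{E}$ directly.

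\textbf{Lower bound $\mathcal{E}\le(1+L)\cR$.} Fix $x$ and let $x'$ be a minimizer, so that the pointwise gap is $g(x)=\ell_\cX(x,x')+\ell_\cY(y(x),f(x'))$. Apply the triangle inequality in $\cY$ to $y(x)$, $f(x')$, $f(x)$, and use the Lipschitz property of $f$. This gives
\[
\sqrt{\ell_\cY(y(x),f(x))}\le\sqrt{\ell_\cY(y(x),f(x'))}+L\sqrt{\ell_\cX(x,x')}.
\]
Write $a^2=\ell_\cY(y(x),f(x'))$ and $b^2=\ell_\cX(x,x')$, so the right-hand side is $a+Lb$ and $g(x)=a^2+b^2$. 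By Cauchy–Schwarz,
\[
(a+Lb)^2\le(1+L^2)(a^2+b^2).
\]
Hence $\ell_\cY(y(x),f(x))\le(1+L^2)\,g(x)$, and integrating gives $\mathcal{E}\le(1+L^2)\cR$.

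To reach the constant $1+L$ stated in the theorem, two options are available. One is to read $L$ as the Lipschitz constant with respect to the squared metrics, i.e.\ $\ell_\cY(f(x),f(x'))\le L\,\ell_\cX(x,x')$. This is the convention implicitly used in the proof of Theorem~\ref{prop:efficiency_conditional}, and it replaces $L^2$ by $L$. The other is to note that $1+L^2\le(1+L)^2$. I will adopt the first convention, consistent with the earlier proof, and state it explicitly.

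\textbf{Main obstacle.} The delicate point is this constant bookkeeping: the squared-distance loss does not satisfy a plain triangle inequality, so the Cauchy–Schwarz step (or the Lipschitz convention) is what makes the argument close. A secondary issue is measurability and existence of the minimizing $x'(x)$. If the minimum is not attained, I will work with $\varepsilon$-minimizers and let $\varepsilon\to0$; this is where the "mild regularity assumptions" (e.g.\ compactness of $\Omega_\cX$ and continuity of $f$ and $y$) are used.
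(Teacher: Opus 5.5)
Your proof is correct and follows the paper's route: take $x'=x$ for $\cR\le\mathcal{E}$, and use the triangle inequality plus the Lipschitz bound for the reverse inequality. It also supplies the Cauchy--Schwarz step that the paper's one-line sketch omits, and your cost-level convention $\ell_\cY(f(x),f(x'))\le L\,\ell_\cX(x,x')$ matches how the paper uses $L$ in the proof of Theorem~\ref{prop:efficiency_conditional}. That convention is forced rather than optional, however: under the usual distance-Lipschitz reading, the constant $1+L$ fails for $L>1$. For example, $y\equiv 0$ and $f(x)=Lx$ on $[0,1]$, with squared Euclidean costs and uniform $p$, give $\mathcal{E}=(1+L^2)\cR$ exactly, and your fallback $1+L^2\le(1+L)^2$ only yields the weaker constant $(1+L)^2$.
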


Combining Theorems \ref{prop:efficiency_conditional} and \ref{prop:generalization_error}, we obtain $\mathcal{E} = O\left(n^{-2/d_{\Omega}}\right)$ as $n\rightarrow+\infty$, a result closely related to \citet{tahmasebi_exact_nodate}. Moreover, $\cR(\Omega,\Omega_f)=0$ implies $f(x)=y_x$ almost everywhere, and therefore $\mathcal{E}=0$. Generalization error and representation gap are therefore closely related. 

\section{Experimental results}\label{sec:experiments}

We now validate experimentally the theoretical results of Section \ref{sec:theory}.

\begin{figure}[t]
    \centering
    \setlength{\tabcolsep}{3pt}
    \begin{tabular}{cc|cc}
        \multicolumn{2}{c}{\small Hypercube} & \multicolumn{2}{c}{\small Wave} \\
        \includegraphics[width=0.23\linewidth]{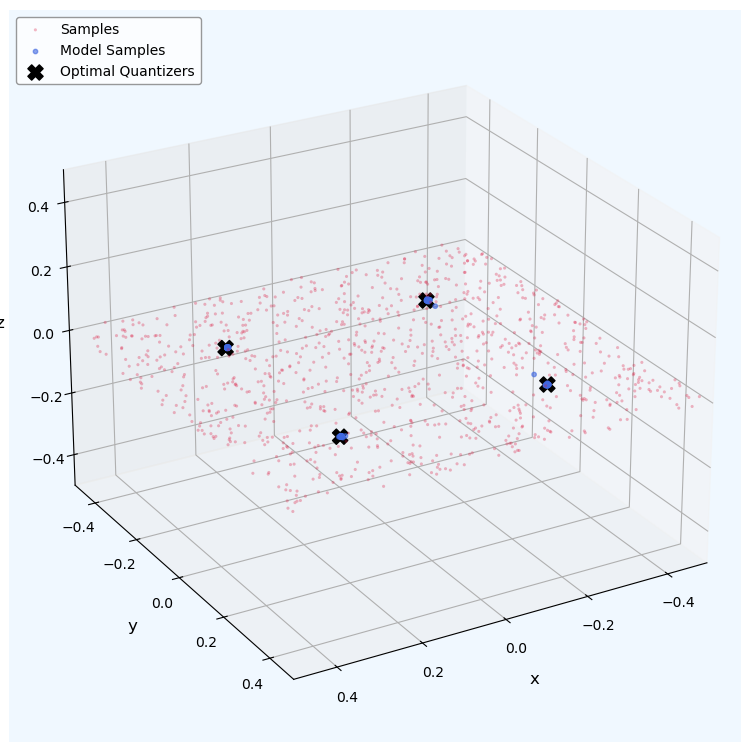} &
        \includegraphics[width=0.23\linewidth]{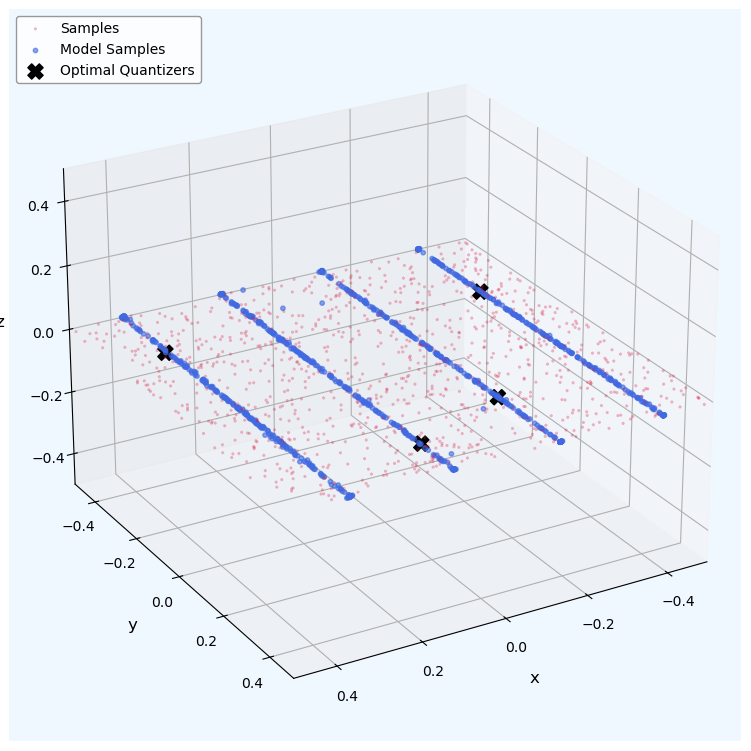} &
        \includegraphics[width=0.23\linewidth]{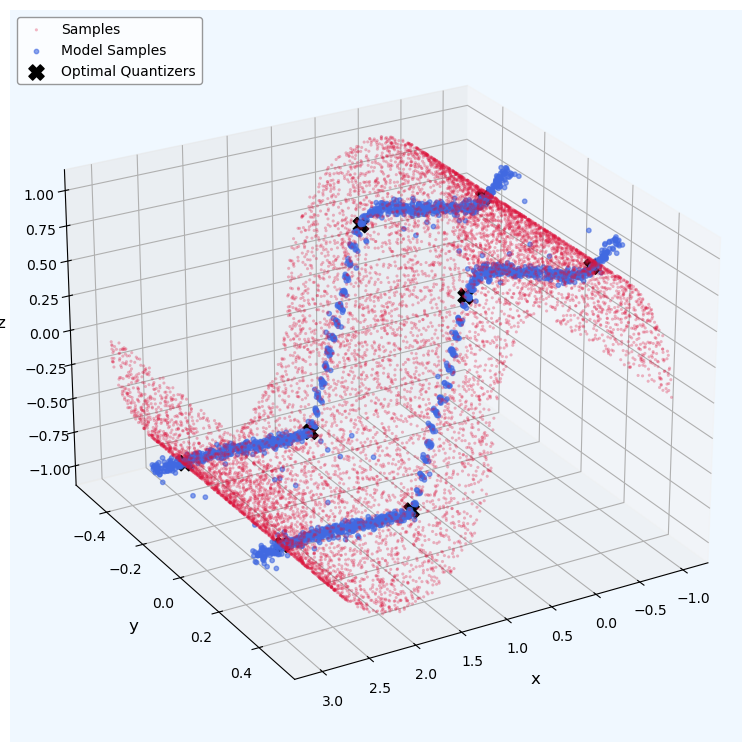} &
        \includegraphics[width=0.23\linewidth]{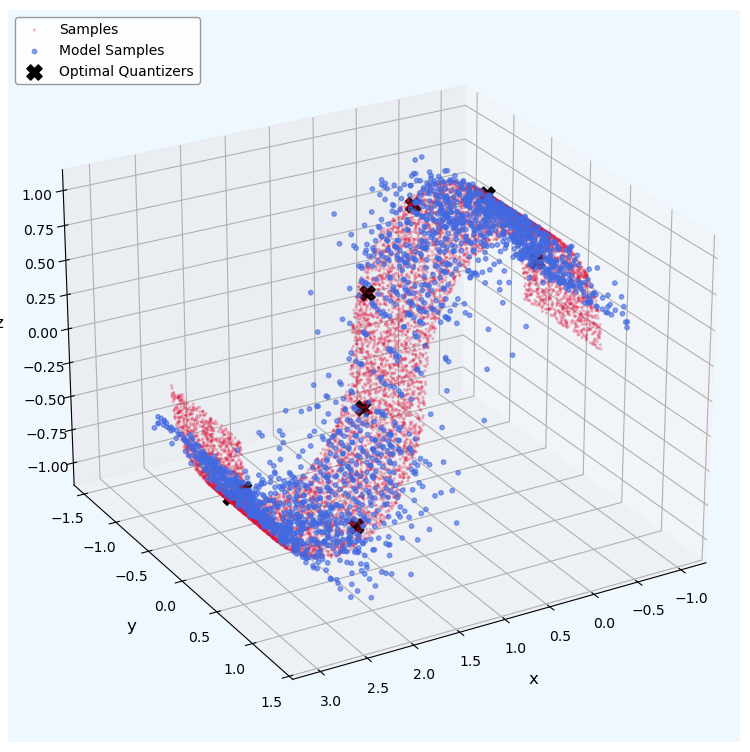} \\
        \small (a) Normal & \small (b) Equivariant &
        \small (c) Normal & \small (d) Equivariant \\
    \end{tabular}
    \caption{Virtual augmentation of a dataset in the non-conditional setting (Hypercube dataset) and the conditional setting (Wave dataset). Plots (a) and (c) show samples from a trained non-equivariant diffusion model, while plots (b) and (d) show samples from a trained equivariant diffusion model (with translation invariance along the $y$-axis). For the Wave dataset, the conditioning corresponds to the $x$-axis. 
    }
    \label{fig:overfitting_example}
\end{figure}

\textbf{Datasets.} We conduct experiments on several synthetic datasets for non-conditional generative modeling and supervised prediction.  
They are illustrated in Figure \ref{fig:illustrative_example} and Figure \ref{fig:overfitting_example}. 

\noindent \textit{Hypercube} corresponds to a $d_\Omega$-dimensional hypercube $\Omega=\left[-\frac{c}{2},\frac{c}{2}\right]^{d_\Omega}$ of side $c$ embedded into a $d_\cY$ ambient space. This dataset is translation-invariant along each dimension.

\noindent \textit{Hypersphere} corresponds to a 2-dimensional hypersphere $\partial B(0,r)$ of radius $r$ embedded into a 3-dimensional ambient space. This dataset features rotational invariances along axes $x$, $y$ and $z$. 

\noindent \textit{Wave} is a 2-dimensional wave surface embedded into a 3-dimensional ambient space, obtained by translating a $(x,z)$-curve composed of half-circles along the $y$-axis.
This dataset corresponds to a conditional prediction task, where $x$ is the input and $(y,z)$ is the target. It features translation invariance over $y$.

\begin{wrapfigure}{r}{0.36\textwidth}
\vspace{-1em}
\centering
\scriptsize
\setlength{\tabcolsep}{2pt}
\begin{tabular}{c}
    \includegraphics[width=0.95\linewidth]{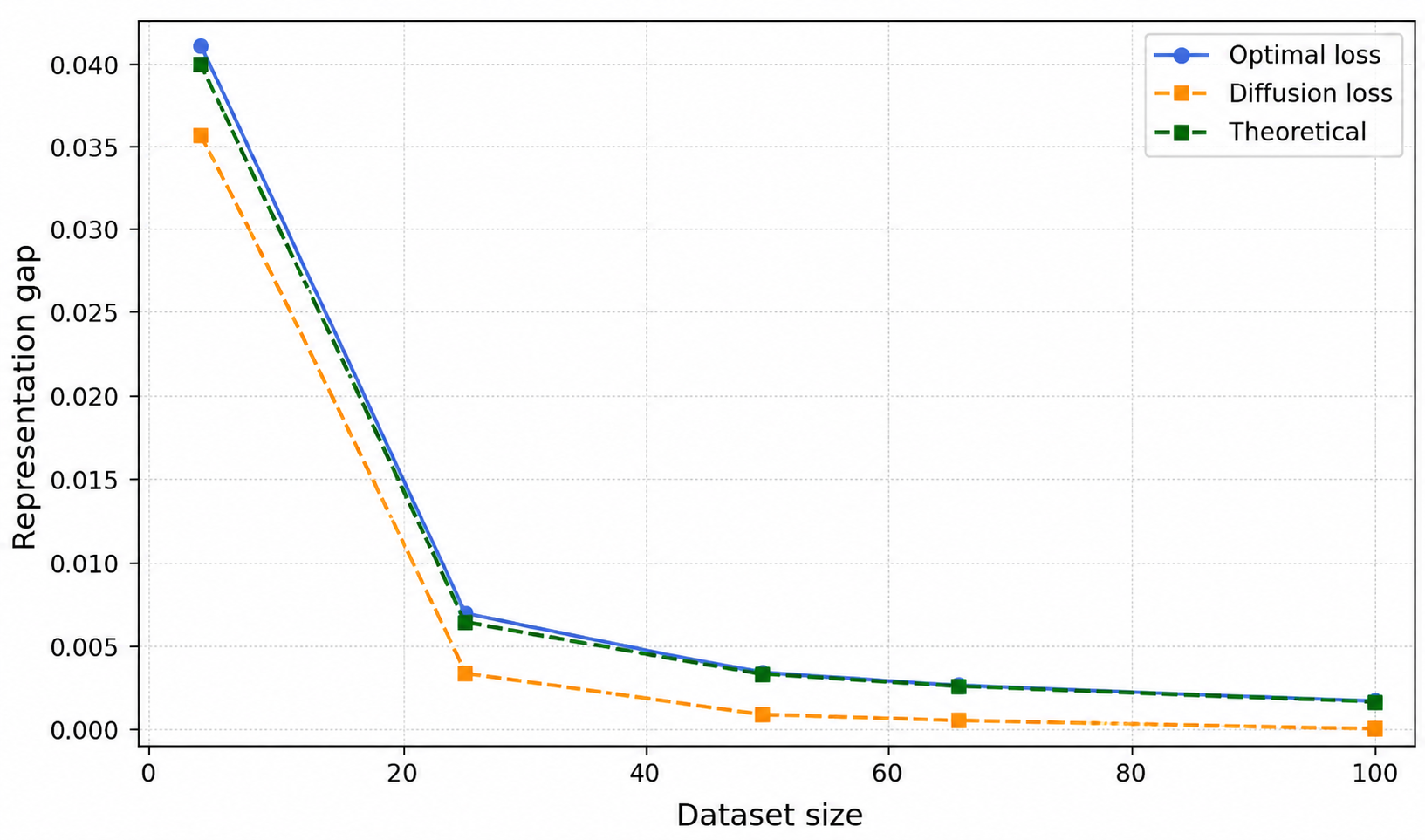} \\
    (a) Hypercube \\
    \includegraphics[width=0.95\linewidth]{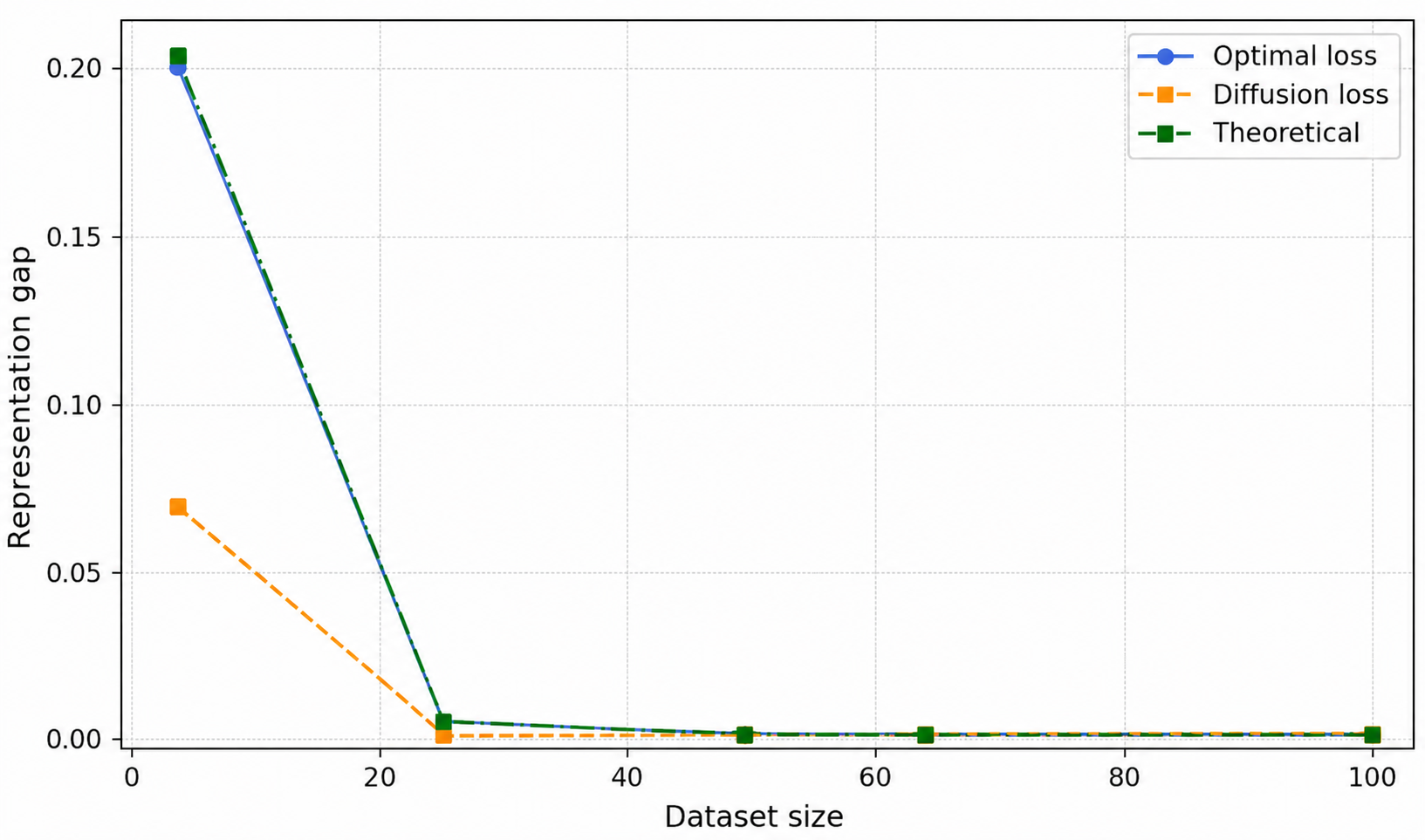} \\
    (b) Wave \\
\end{tabular}
\caption{Asymptotic behavior of the representation gap across the two datasets of Figure \ref{fig:overfitting_example}. The $x$-axis corresponds to the number of training points $n$, and the $y$-axis corresponds to the representation gap. We plot the theoretical loss in Eq. \ref{eq:efficiency_nonconditional} (green), the optimal representation gap $\cR^*_n$ (blue) and the empirical representation gap $\cR_n(\Omega,\Omega_f)$ computed from a diffusion model $f$ trained on an optimal dataset $\bD$ (orange). }
\label{fig:representation_gap}
\vspace{-30pt}
\end{wrapfigure}

\noindent \textit{Swiss roll} and \textit{Deformed sphere} are popular 2-dimensional manifolds embedded into a 3-dimensional ambient spaces \citep{jacobsen2025staying}.

\noindent {\em Real Data:} Following \citet{pope_intrinsic_2021}, we also consider the datasets \textit{MNIST} \citep{lecun_gradient-based_1998}, SVHN \citep{netzer2011reading}, CIFAR10 \citep{krizhevsky2009learning}, MSCOCO \citep{lin2014microsoft}, and Tiny-ImageNet \citep{le2015tiny}. which are standard benchmarks for machine learning and intrinsic dimension estimation \citep{pope_intrinsic_2021,ansuini_intrinsic_nodate}. 

\textbf{Architecture.} For the non-conditional task, we use a three-layer MLP \citep{rumelhart_learning_1986} with ReLU activation and 128 hidden units. For the conditional task, we use a 10-layer MLP with SiLU activation \citep{ramachandran_searching_2017}, 128 hidden units, residual connections, and linear embedding for the conditioning. 
Model equivariance is enforced on top of this architecture by input normalization and prediction shift.

\textbf{Training and optimization.} For the synthetic experiments, we use a DDIM diffusion model \citep{song_denoising_2022}, trained with a linear temperature schedule with $T=100$ steps. We use the $\cL_2$ loss defined on the ambient space $\cY$. The models are trained with the Adam optimizer \citep{kingma_adam_2017} for 50000 steps, with learning rate $\lambda=10^{-3}$. 

\textbf{Metric.} To estimate the representation gap, we sample 1000 points from the trained model and 1000 points uniformly from a hold-out set on $\Omega$. We then approximate Eq. \ref{eq:representation_gap} by computing the average minimum distance between these two clouds of points.

\textbf{Training point selection.} We experiment with both \textit{i.i.d.} sampling and optimally diverse sampling (see Section \ref{sec:theory}). For simple synthetic datasets such as Hypercube and HyperSphere, the optimal samples admit an analytic expression that we can use. For the remaining synthetic datasets, we estimate the optimal samples empirically by minimizing Eq. \ref{eq:representation_gap}. Although this optimization problem is NP-hard \citep{aloise2009np}, standard optimization algorithms such as K-means++ \citep{arthur2006k} and discrete Lloyd on-manifold centroid snapping \citep{lloyd1982least} were sufficient in practice. 

\textbf{Intrinsic dimension estimation.} In order to estimate the intrinsic dimension of a task, we evaluate the random representation gap $\cR_n$ for several values of $n$, fit a linear model to the resulting points $(\log(n), \log(\cR_n))$, and extract the slope, which equals $-2/d$ according to Theorem \ref{prop:efficiency_nonconditional}. We can alternatively use the optimal representation gap $\cR_n^*$, in which case the optimal samples are selected by empirically minimizing Eq. \ref{eq:representation_gap} as described above. We observed that this second estimator converged faster in practice, often with as few as $n=50$ samples. 
As a corollary of Theorem \ref{prop:efficiency_informal}, Eq. \ref{eq:simplified_asymptotic_gap} naturally defines an estimator of the intrinsic dimension of the data manifold $\Omega$. To this end, we estimate the representation gap directly from the training points $\bD$ instead of using the model predictions $\Omega_f$. We repeat intrinsic dimension estimation over 5 random seeds and report the mean and standard deviation.

\subsection{Validating Theorems 2 and 3}\label{sec:validating}

Figures \ref{fig:illustrative_example} and \ref{fig:overfitting_example} highlight two distinct regimes. Non-equivariant models converge toward the training dataset $\Omega_f=\bD$, while equivariant models converge toward the virtually augmented dataset $\Omega_f=G(\bD)$. This behavior is consistently observed across different geometries and experimentally validates the claim of Theorem \ref{prop:virtual_augmentation}.

To further validate the asymptotic formula of Theorem \ref{prop:efficiency_nonconditional}, we evaluate the representation gap on several manifolds $\Omega$. We report the results in Figures \ref{fig:sphere_representation_gap} and \ref{fig:representation_gap}. Across all datasets, the empirical representation gap closely follows the predicted asymptotic scaling. Moreover, convergence toward the asymptotic regime occurs rapidly, in some cases with as few as $n=50$ samples. This suggests that the asymptotic analysis remains relevant in practical settings, including applications where only limited amounts of training data are available.

\subsection{Intrinsic dimension estimation}\label{sec:dimension_estimation}

\begin{wraptable}{r}{0.45\textwidth}
\vspace{-14pt}
\centering
\caption{Estimated intrinsic dimension on synthetic and real-world datasets. 
On synthetic data, both $\cR_n$ and $\cR_n^\star$ recover the true dimension accurately. 
On real-world datasets, our estimates are consistent with estimates from prior work \citep{pope_intrinsic_2021,ansuini_intrinsic_nodate}.}
\label{tab:dimension_all}
\scriptsize
\setlength{\tabcolsep}{3pt}

\begin{tabular}{lccc}
\toprule
\multicolumn{4}{c}{\textbf{Synthetic datasets}} \\
\midrule
Dataset & True & $\cR_n$ & $\cR_n^\star$ \\
\midrule
Cube ($d=1$)    & 1 & $0.95 \pm 0.01$ & $1.07 \pm 0.01$ \\
Cube ($d=5$)    & 5 & $4.93 \pm 0.03$ & $5.06 \pm 0.09$ \\
Sphere ($d=1$)  & 1 & $1.00 \pm 0.02$ & $1.08 \pm 0.01$ \\
Sphere ($d=5$)  & 5 & $4.91 \pm 0.04$ & $5.13 \pm 0.02$ \\
Swiss roll      & 2 & $1.87 \pm 0.01$ & $2.19 \pm 0.03$ \\
Def. sphere     & 2 & $1.87 \pm <0.01$ & $1.94 \pm 0.02$ \\
\midrule
\multicolumn{4}{c}{\textbf{Real-world datasets}} \\
\midrule
Dataset & $\cR_n$ & Pope et al. & Ansuini et al. \\
\midrule
MNIST    & 13 & 7--13  & $\sim$12.5 \\
SVHN     & 13 & 9--19  & -- \\
CIFAR-10 & 18 & 13--26 & -- \\
MSCOCO   & 19 & 22--36 & -- \\
ImageNet & 21 & 26--43 & -- \\
\bottomrule
\end{tabular}
\vspace{-8pt}
\end{wraptable}

As a corollary of Theorem \ref{prop:efficiency_informal}, Eq. \ref{eq:simplified_asymptotic_gap} naturally defines an estimator of the intrinsic dimension of the data manifold $\Omega$. Using the experimental settings described above, we estimate the intrinsic dimension of several synthetic datasets and report the results in Table \ref{tab:dimension_all}. The estimated dimensions closely match the ground-truth manifold dimensions across different geometries and ambient dimensions, which empirically validates the estimator in controlled settings.

We further evaluate the estimator on real-world datasets using \textit{i.i.d.} sampling. As shown in Table \ref{tab:dimension_all}, our estimates are consistent with prior intrinsic dimension estimators \citep{pope_intrinsic_2021,ansuini_intrinsic_nodate}. We observe a progressive increase in intrinsic dimension with dataset complexity: MNIST and SVHN, which both correspond to digit recognition tasks, exhibit similar dimensions, while CIFAR-10 has a larger estimated dimension consistent with its greater visual diversity. MSCOCO and Tiny ImageNet, which contain a broader range of semantic classes and visual structures, exhibit the highest intrinsic dimensions.

\section{Conclusion}

This work introduces a new metric -- the representation gap --, that characterizes neural network generalization from a geometric perspective. We provide a detailed asymptotic analysis of this representation gap in two important settings: non-conditional generative modeling and supervised prediction, under both \textit{i.i.d.} and optimally diverse sampling regimes. We show that the asymptotic scaling of the representation gap is governed by a single parameter, the intrinsic dimension of the task, and relates naturally to classical notions of generalization. In particular, we show how standard machine learning techniques, such as equivariant architectures, reduce this intrinsic dimension, thereby provably improving asymptotic generalization. We validate our theoretical results both on controlled synthetic environments and real-world datasets. Our results suggest that intrinsic dimension may serve as a unifying geometric principle for understanding generalization, and could be leveraged to inform network architecture and training pipeline design in a principled manner. 

More generally, our work suggests shifting the focus from the parameter space of neural networks and the statistical properties of training algorithms to the geometry of the data and prediction spaces. Indeed, global properties of trained models -- such as memorization, equivariance, or minimal-norm interpolation -- induce geometric structure in the prediction space, thereby reducing generalization to a geometric problem. Beyond generalization itself, this perspective provides a principled way to characterize the information contained within a neural network through the geometry of its prediction space. Important applications include architecture-agnostic knowledge transfer and collaborative learning. More fundamentally, we believe that the geometry of the prediction space may provide a principled notion of information based on teachability, opening the way toward a geometric understanding of real-world task uncertainty and the intrinsic limits of statistical learning.

\appendix

\bibliographystyle{plainnat}
\bibliography{bibliography}

\newpage

\section{Notations}\label{appsec:notations}

\subsection{Task and geometry}

\textbf{Manifold.}
We consider a supervised task, with input space $\cX\subset\cR^{d_\cX}$ and target space $\cY\subset\cR^{d_\cY}$. We assume that the observations $(x,y)$ belong to a subset $\Omega\subset\cX\times\cY$, which models the structure of the task and its underlying symmetries. Following the manifold hypothesis \citep{bengio_representation_2013}, we assume that $\Omega$ corresponds to a low-dimensional manifold embedded in the ambient space $\cX\times\cY$. More precisely, we assume throughout this work that $\Omega$ is a Riemannian manifold \citep{lee_riemannian_2006}, and we denote by $d_\Omega$ its dimension. We denote by $\Omega_\cX$ and $\Omega_\cY$ the projections of $\Omega$ onto $\cX$ and $\cY$ respectively.

\textbf{Quotient manifold.} Machine learning tasks typically feature symmetries (\textit{e.g.}, translation invariance in image classification or rotational equivariance in molecular modeling), which are reflected in the structure of the manifold $\Omega$. In this work, we focus on tasks exhibiting symmetries described by a group $G$ acting on the manifold $\Omega$. We denote by $G(y)=\{g(y)\mid g\in G\}$ the orbit of a point $y\in\Omega$ under the action of $G$, and by $G(E)=\cup_{y\in E}G(y)$ the orbit of a set $E$. We will assume that $G$ is a Lie group acting by isometries on $\Omega$. Under this assumption, we can define the quotient manifold $\Omega/G$ \citep{lee_riemannian_2006}, and denote by $d_{\Omega/G}$ its dimension.

\textbf{Metric.}
We denote by $\ell:\Omega\times\Omega\to\bR_+$ a non-negative cost function on $\Omega$. Unless stated otherwise, $\ell$ corresponds to the squared geodesic distance induced by the Riemannian metric on $\Omega$ \citep{peyre2019computational}. We denote by $\ell_\cX$ and $\ell_\cY$ the corresponding cost functions induced by projection onto $\cX$ and $\cY$. For a point $y\in\Omega$ and a subset $E\subset\Omega$, we define $\ell(y,E)=\inf_{y'\in E}\ell(y,y')$. We denote by $\ell_{\Omega/G}$ the quotient metric induced by $\ell$ on $\Omega/G$ \citep{lee_riemannian_2006}.

\subsection{Data and model}

\textbf{Data.} We assume access to a dataset $\bD\subset\Omega$ composed of $n$ observations. For a given input $x\in\cX$, we define the conditional manifold $\Omega_x=\{y\mid (x,y)\in\Omega\}$, which corresponds to the set of admissible targets associated with the input $x$. Similarly, we define the conditional dataset $\bD_x=\{y\in\bD_\cY \mid (x,y)\in\bD\}$. We denote by $\bD_\cX=\{x\mid (x,y)\in\bD\}$ and $\bD_\cY=\{y\mid (x,y)\in\bD\}$ the sets of inputs and targets appearing in $\bD$ respectively. In deterministic supervised settings, each input $x\in\Omega_\cX$ is associated with a unique target $y(x)\in\Omega_\cY$, so that the manifold $\Omega$ can be identified with the graph of a function $y:\Omega_\cX\to\Omega_\cY$.

\textbf{Model.} We consider neural networks $f_\theta$ belonging to a parametric family $\cF_\Theta\subset\cF(\cX,\cY)$. When there is no ambiguity, we will simplify the notation and denote the neural networks by $f$. A model $f$ is said to be equivariant under the action of a group $G$ if we have $g(f(x))=f(g(x))$ for all $x\in\cX$ and $g\in G$. 

\subsection{Probability and asymptotic notation}

\textbf{Probability.} We denote by $\bP$ a probability distribution supported on $\Omega$, and by $p$ its density. We denote by $\delta_x$ the Dirac distribution centered at a point $x$. 
We denote by $\cN(\mu,\sigma^2)$ the Gaussian distribution with mean $\mu$ and variance $\sigma^2$, and by $\cN(y\mid\mu,\sigma^2)$ the evaluation of its density at a point $y$. We denote by $\bI[E]$ the indicator function of a set $E$. For a finite set $E$, we denote by $|E|$ its cardinality. If $E$ is measurable, $|E|$ denotes its Lebesgue measure, and $\mathring{E}$ its interior.

\textbf{Asymptotic notation.}
We denote by $a_n\sim b_n$ the deterministic asymptotic equivalence $\frac{a_n}{b_n}\to 1$. 
Similarly, we write $X_n\sim_{\bP} a_n$ when $\frac{X_n}{a_n}\to 1$ in probability. 
We use the standard notations $\to_d$, $\to_{L^1}$ and $\to_{\bP}$ for convergence in distribution, convergence in $L^1$, and convergence in probability respectively.

\subsection{Diffusion model}

We will focus on Denoising Diffusion Implicit Models (DDIM) diffusion models \citep{song_denoising_2022}. These models are trained to reverse a stochastic forward diffusion process that incrementally adds Gaussian noise to the data distribution while shrinking data points toward the origin. Noise addition is governed by a noise schedule $\alpha_t$, with $t\in[0,T]$. At each schedule step, the noised distribution can be written $\pi_t(y)=\frac{1}{|\bD|}\sum_{z\in\bD}\cN(y|\sqrt{\alpha_t}z, (1-\alpha_t)I)$. In particular, $\pi_0=\frac{1}{|\bD|}\sum_{z\in\bD}\delta_z$ recovers the empirical data distribution and $\pi_T=\cN(0,I)$ is an isotropic Gaussian distribution. In this context, the model $f_\theta:\cY\times\bR\rightarrow\cY$ is trained to approximate the score function $s_t=\nabla \log\pi_t$ using the loss 
\begin{equation}\label{appeq:diffusion_training}
    \cL(\theta)=\bE_{t\sim \bU[0,T],y_0\sim\pi_0,\eta\sim\cN(0,I)}\|f_\theta(\sqrt{\alpha_t}y_0+\sqrt{1-\alpha_t}\eta,t)-\eta\|_2^2\;.
\end{equation}
At sampling time, an initial point $y_T\sim\cN(0,I)$ is sampled and then updated using the deterministic flow 
\begin{equation}\label{appeq:diffusion_sampling}
    \dot{y}_t=-\gamma_t(y_t+s_t(y_t))\;,
\end{equation}
where $t$ goes backward from $T$ to $0$. The output of the model corresponds to the endpoints of these trajectories.

These equations can be generalized to the conditional case. In particular, the model $f_\theta:\cX\times\cY\times\bR\rightarrow\cY$ is trained using the loss
\begin{equation}\label{appeq:conditional_diffusion_training}
    \cL(\theta)=\bE_{t\sim \bU[0,T],(x_0,y_0)\sim\pi_0,\eta\sim\cN(0,I)}\|f_\theta(x_0,\sqrt{\alpha_t}y_0+\sqrt{1-\alpha_t}\eta,t)-\eta\|_2^2\;.
\end{equation}

\subsection{Prediction space}

We define the prediction space $\Omega_f$ of a model $f$ as follows.

\begin{definition}[Prediction space]\label{appdef:prediction_manifold}
    Let $f$ denote a potentially non-deterministic neural network. For each $x\in\cX$, we denote by $\cO_f(x)\subset\cY$ the set of outputs that can be generated by $f$ when conditioned on $x$. We define the prediction space of the model $f$ by 
    \begin{equation}
        \Omega_f = \{(x,y)\;|\; x\in\Omega_\cX, y\in \cO_f(x) \;\}  \;.
    \end{equation}
\end{definition}

In particular, if $f$ is a non-conditional DDIM diffusion model, $\Omega_f$ is the set of endpoints reachable by the reverse flow described by Eq. \ref{appeq:diffusion_sampling}: 
\begin{equation}
    \Omega_f=\{y_0 \;|\; y_T\sim\cN(0,I)\;,\; y_t \text{ solves Eq. } \ref{appeq:diffusion_sampling} \}\; \;.
\end{equation}

\section{Preliminaries}\label{appsec:preliminaries}

\subsection{Asymptotic quantization theory}

A natural way to compare a manifold $\Omega$ with a discrete approximation $\{z_k\}_{k\in\bn}$ is to use the quantization error
\begin{equation}\label{appeq:quantization_error}
    \int_\Omega \min_{k\in\bn}\ell(y,z_k)\,p(y)\,dy.
\end{equation}

This quantity corresponds to a particular case of the Wasserstein distance \citep{peyre2019computational} (see Section \ref{appsec:link_wasserstein}). 
Given a budget of $n$ points, we are often interested in the best approximation achievable by a discrete set of size $n$. This quantity, known as the optimal quantization error or optimal quantization risk \citep{graf2007foundations}, is defined as
\begin{equation}\label{appeq:optimal_quantization_error}
    \cR_n(\bP)=\inf_{z\in\cY^n}\int_\cY \min_{k\in\bn}\ell(y,z_k)\,p(y)\,dy.
\end{equation}

A central tool of our analysis is Zador's theorem \citep{zador1982asymptotic}, a powerful result characterizing the asymptotic distribution of the centroids resulting from optimal quantization. Intuitively, this theorem describes how well a continuous manifold can be approximated by a finite number of representative points, and how this approximation scales with the intrinsic dimension of the underlying space. We first recall the Euclidean version (see \citet{graf2008distortion}, Eq. 2.3, or \citet{iacobelli2016asymptotic}, Theorem 1.3, for a more general version).

\begin{theorem}[Zador theorem]\label{app_prop:zador}
    Let $\bP = p \: \dy$ be a Lebesgue-dominated probability measure on a compact subset $\cY$ of $\bR^d$. Then the optimal quantization error $\cR_n(\bP)$ satisfies
    \begin{equation}\label{appeq:zador}
        \cR_n(\bP) \underset{n \to \infty}{\sim} J_d^* \;\left(\int_\cY p(y)^{\frac{d}{d+2}}\dy\right)^{\frac{d+2}{d}} \frac{1}{n^{2/d}}\:,
    \end{equation}
    where $J_d^*$ is the asymptotic optimal quantization error for the uniform distribution 
    \begin{equation}\label{appeq:optimal_universal_constant}
        J_d^* = \inf_n n^{2 / d} \cR_n(\mathcal{U}([0,1]^d))\;.
    \end{equation}
\end{theorem}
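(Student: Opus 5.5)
The plan is the classical three-stage argument of Bucklew--Wise and Graf--Luschgy: first the uniform distribution on a cube, then piecewise-constant densities, then general densities by approximation. Throughout, $\ell$ is the squared Euclidean distance, so dilating space by a factor $\lambda$ multiplies the quantization error by $\lambda^2$.

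\textbf{Step 1 (uniform distribution).} I will show that $n^{2/d}\cR_n(\mathcal U([0,1]^d))$ converges to its infimum $J_d^*$, and that $J_d^*>0$.
\begin{itemize}
\item \emph{Self-similarity.} Split $[0,1]^d$ into $m^d$ subcubes of side $1/m$ and place a rescaled copy of an optimal $n$-point quantizer in each. This gives $\cR_{m^d n}\le m^{-2}\cR_n$.
\item \emph{Monotonicity.} The map $n\mapsto\cR_n$ is nonincreasing.
\item \emph{Conclusion.} Combining the two in the standard subadditivity fashion gives $\limsup_k k^{2/d}\cR_k\le n^{2/d}\cR_n$ for every $n$. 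Hence the limit exists and equals the infimum.
\item \emph{Positivity.} Each cell of an $n$-point quantizer meets the cube in a set of measure at most its own. A ball of the same volume minimises the second moment among sets of given volume, so $\cR_n\ge c_d\,n^{-2/d}$ and $J_d^*>0$.
\end{itemize}
By scaling, the same limit holds for the uniform distribution on any cube $C$, with $\cR_n\sim J_d^*|C|^{2/d}n^{-2/d}$.

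\textbf{Step 2 (piecewise-constant densities).} Take $p=\sum_i p_i\bI[C_i]$ on disjoint cubes $C_i$.
\begin{itemize}
\item \emph{Upper bound.} Allocate $n_i\approx n v_i/\sum_j v_j$ points to cube $C_i$, where $v_i=(p_i|C_i|)^{d/(d+2)}|C_i|^{2/(d+2)}$, and quantize each cube optimally. Summing the Step 1 asymptotics and optimising the allocation with Hölder's inequality gives $\limsup n^{2/d}\cR_n(\bP)\le J_d^*\big(\int p^{d/(d+2)}\big)^{(d+2)/d}$.
\item \emph{Lower bound.} Let $\alpha_n$ be an optimal codebook and $n_i$ the number of its points lying near $C_i$. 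Restricting the error to each $C_i$ gives a sum of quantization problems. Those problems are not independent, because points of $\alpha_n$ outside $C_i$ can serve $C_i$ near its boundary.
\item \emph{Firewall fix.} Use the firewall lemma: the error of $\alpha_n$ restricted to $C_i$ dominates the optimal error for $C_i$ with $n_i$ plus $O(n^{(d-1)/d})$ points, where the extra points are placed on a grid of the boundary. Equivalently, shrink each $C_i$ slightly, which loses only a vanishing fraction of mass.
\item \emph{Reverse Hölder.} Applying the reverse Hölder inequality to $\sum_i (n_i/n)^{-2/d}p_i|C_i|^{1+2/d}$ over all allocations with $\sum_i n_i\le n(1+o(1))$ gives the matching $\liminf$.
\end{itemize}

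\textbf{Step 3 (general densities).} Approximate $p$ on the compact set $\cY$ by step functions $p_k\to p$ in $L^1$.
\begin{itemize}
\item \emph{Upper bound.} Use the mixture inequality $\cR_n(\bP)\le\cR_{(1-\varepsilon)n}(\bP_k)+\cR_{\varepsilon n}(|\bP-\bP_k|)$. The second term is bounded by a crude uniform bound $O((\varepsilon n)^{-2/d}\|p-p_k\|_1)$ on a bounded set. Then let $k\to\infty$ and $\varepsilon\to0$, using the continuity of $p\mapsto\int p^{d/(d+2)}$ in $L^1$.
\item \emph{Lower bound.} Restrict to a compact region where $p\ge p_k-\delta$. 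Write $\bP$ as dominating a step measure, and use monotonicity of $\cR_n$ in the measure together with the firewall estimate from Step 2.
\end{itemize}

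\textbf{Main obstacle.} I expect the lower bound, in Steps 2 and 3, to be the hard part. Controlling the interaction between neighbouring cells, and hence justifying the reduction to independent subproblems, is where the firewall lemma or a boundary-shrinking argument must be carried out carefully. The rest is bookkeeping with Hölder's inequality.
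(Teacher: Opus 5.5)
The paper gives no proof of this statement: it quotes it from Graf--Luschgy and Iacobelli, and your three-stage outline is the classical argument behind those references. Step~1, Step~2 and the upper bound of Step~3 are sound.

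One detail in Step~2 needs care. With a boundary grid of spacing $\delta n^{-1/d}$, the firewall comparison holds only up to a factor $1+\theta$ and an additive error $O(\theta^{-1}\delta^2 n^{-2/d})$. You must therefore let $\delta\to0$ after $n\to\infty$. This costs $O(\delta^{1-d}n^{(d-1)/d})=o(n)$ extra points, whereas a spacing of fixed order $n^{-1/d}$ would leave an error of the same order as the main term.

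The genuine gap is the \emph{lower bound in Step~3}. Restricting to a compact $K\subset\{p\ge p_k-\delta\}$ only gives $\bP\ge(p_k-\delta)^+\bI_K\,\dy$, a step density times the indicator of an arbitrary compact set. This is not a step measure, so Step~2 does not apply. The firewall does not help either: on a cube $C_i$ it compares with the uniform law on all of $C_i$. You have no control over an optimal codebook for $\bP$ on $C_i\setminus K$, where $\bP$ may carry almost no mass. Nor can $K$ be chosen as a union of cubes. Take $p=\bI_K/|K|$ with $K$ compact, $|K|>0$ and empty interior (e.g.\ a product of fat Cantor sets). Every cube meets the open dense set $K^c$ in positive measure, so $\bP$ dominates no nonzero step measure at all.

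The missing idea is to run your mixture inequality in the opposite direction. Since $p_k\le p+|p-p_k|$, the same argument gives
\[
\cR_n(\bP_k)\le\cR_{\lfloor(1-\varepsilon)n\rfloor}(\bP)+\cR_{\lceil\varepsilon n\rceil}\bigl(|\bP-\bP_k|\bigr).
\]
Multiply by $n^{2/d}$, take $\liminf_n$, and use Step~2 for $\bP_k$ and your crude bound for the last term:
\[
J_d^*\Bigl(\int p_k^{\frac{d}{d+2}}\,\dy\Bigr)^{\frac{d+2}{d}}\le(1-\varepsilon)^{-2/d}\liminf_{m\to\infty}m^{2/d}\cR_m(\bP)+C\,\varepsilon^{-2/d}\|p-p_k\|_1 .
\]
Let $k\to\infty$, using the $L^1$-continuity of $p\mapsto\int p^{d/(d+2)}$ that you already invoke, and then $\varepsilon\to0$. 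This gives the matching $\liminf$. The firewall is then needed only in Step~2, and Step~3 becomes symmetric.

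A last, minor point: the paper defines $\cR_n(\bP)$ with codebooks in $\cY^n$, whereas your constructions (like the cited sources) place codepoints anywhere in $\bR^d$. This changes nothing for convex $\cY$, since you can project onto $\cY$. For a general compact $\cY$, however, the upper bound needs an extra Lebesgue-density argument.
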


The constant $J_d^*$ can be computed for simple cases ($J^*_1=\frac{1}{12}$ and $J^*_2=\frac{5}{18\sqrt{3}}$ \citep{newman1982hexagon}) and can be approximated for large $d$ by $J^*_d \sim \frac{d}{2\pi e}$ \citep{pages2003optimal, graf2007foundations}.

A generalization of Zador theorem to arbitrary manifolds has been proposed in \citet{gruber_optimal_nodate}, which we report below (see Theorem 2 in this reference for a stronger result). In this case, the asymptotic behavior depends only on the intrinsic geometry of the manifold rather than on the ambient Euclidean space.

\begin{theorem}[Zador theorem on manifold]\label{app_prop:zador_manifold}
    Let $d=d_\Omega$ denote the dimension of the manifold $\Omega$. Then there exists a constant $J_d^*$ depending only on $d$  and the metric $\ell$ such that for all $E\subset\Omega$ compact and measurable with $|E|>0$ and all $p:E\to\bR^+$ continuous, we have
    \begin{equation}\label{appeq:zador_manifold}
        \inf_{z\in\cY^n} \int_E \min_{k\in\bn} \ell(y,z_k) p(y)\dy \underset{n \to \infty}{\sim} J_d^* \;\left(\int_E p(y)^{\frac{d}{d+2}}\dy\right)^{\frac{d+2}{d}} \frac{1}{n^{2/d}}\:.
    \end{equation}
\end{theorem}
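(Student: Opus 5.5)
The statement to prove is Theorem~\ref{app_prop:zador_manifold}, the manifold version of Zador's theorem. The plan is to reduce it to the Euclidean version (Theorem~\ref{app_prop:zador}) by localization in charts, following the strategy of \citet{gruber_optimal_nodate}. The argument has four steps: the uniform case in a small ball, the general continuous density, the upper bound, and the lower bound. Throughout I take $\ell$ to be the squared geodesic distance and assume $\Omega$ is smooth.

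\textbf{Step 1: uniform density on a small ball.} Fix $\varepsilon>0$. Since $E$ is compact, it can be covered by finitely many normal-coordinate charts $\phi_i:U_i\to\bR^d$. I choose these neighbourhoods small enough that, on each $U_i$, the metric tensor is within a factor $(1\pm\varepsilon)$ of the identity. On such a chart, geodesic and Euclidean squared distances agree up to a factor $(1\pm\varepsilon)^2$. The Riemannian volume element also agrees with Lebesgue measure up to a factor $(1\pm\varepsilon)^{d}$. For uniform $p$ on a Jordan-measurable set $A\subset U_i$, Theorem~\ref{app_prop:zador} then gives
\[
\cR_n \sim J_d^*|A|^{(d+2)/d}n^{-2/d},
\]
up to a multiplicative factor $1+O(\varepsilon)$. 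This identifies the constant $J_d^*$ as the Euclidean one, which depends only on $d$ and on $\ell$ being squared distance.

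\textbf{Step 2: general continuous density.} I partition $E$ into finitely many small Jordan-measurable cells $A_j$, each lying in some chart. On each cell, $p$ is nearly constant with value $p_j$, by uniform continuity on the compact set $E$. This is a Riemann-sum approximation, so
\[
\sum_j p_j^{d/(d+2)}|A_j| \to \int_E p^{d/(d+2)}\,\dy .
\]

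\textbf{Step 3: upper bound.} I allocate $n_j$ points to cell $A_j$ with
\[
n_j\propto \bigl(p_j^{d/(d+2)}|A_j|\bigr)n .
\]
Using the optimal $n_j$-point quantizer within each cell gives an error at most $\sum_j p_j\,\cR_{n_j}(\mathcal U(A_j))$. Substituting the Step~1 asymptotics and optimizing the allocation by Hölder's inequality yields
\[
\limsup_n\, n^{2/d}\cR_n\le (1+O(\varepsilon))\,J_d^*\Bigl(\int_E p^{d/(d+2)}\Bigr)^{(d+2)/d}.
\]

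\textbf{Step 4: lower bound.} Given an arbitrary $n$-point configuration, let $m_j$ be the number of points that are effective for cell $A_j$. The points of $A_j$ near the boundary of the cell, within a shrinking collar, may use codepoints from neighbouring cells. I handle this by restricting to slightly shrunk cells $A_j'$ and using the standard firewall argument: I add $O(n^{1-1/d}\cdot\text{small})$ auxiliary points covering the boundary collars. Adding these points can only decrease the error and is negligible in the count. The cells then decouple, with $\sum_j m_j\le n(1+o(1))$. For each cell, Step~1 gives a lower bound $p_j J_d^*|A_j'|^{(d+2)/d}m_j^{-2/d}$. Minimizing the sum of these bounds over allocations with $\sum_j m_j\le n$, again via Hölder's inequality, gives the matching $\liminf$. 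Letting $\varepsilon\to0$ concludes the proof.

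\textbf{Main obstacle.} I expect the lower bound to be the hardest step. The difficulty is controlling the boundary interaction between cells, and also codepoints placed outside $E$, which is possible since the infimum ranges over $\cY^n$. For the latter, I would first project each such codepoint to its nearest point of $E$, or of a neighbourhood of $E$. This does not increase distances by more than a factor $1+o(1)$ at small scales. The firewall lemma is then what makes the decoupling across cells rigorous.
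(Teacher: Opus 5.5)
Your outline is correct and follows essentially the same route as the paper, which gives no argument of its own beyond invoking Gruber's Theorem~2; your combination of chart localization onto nearly Euclidean pieces, the Euclidean Zador constant on each piece, H\"older allocation of codepoints across cells, and a collar/firewall decoupling for the lower bound is exactly the structure of that proof. Two things to tidy: your Step~2 tacitly requires $E$ to be Jordan measurable (as Gruber's hypothesis does, whereas the statement only asks for $E$ compact), which you can remove by taking cells $E\cap Q_j$ with $Q_j$ Jordan cells in the charts and applying the Euclidean Zador theorem, valid for arbitrary Lebesgue-dominated measures; and stray codepoints should be projected onto $\Omega$ (or simply absorbed by the collar argument) rather than onto $E$, since projection onto a non-convex $E$ can inflate distances by a constant factor.
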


For an arbitrary manifold $\Omega$, we define the density-dependent volume functional appearing in Eq. \ref{appeq:zador} and Eq. \ref{appeq:zador_manifold} by
\begin{equation}\label{appeq:optimal_volume}
    \cV_d^*(p)=\left(\int_\Omega p(y)^{\frac{d}{d+2}}\dy\right)^{\frac{d+2}{d}}\;.  
\end{equation}

\subsection{Point process theory}

The quantization error in Eq. \ref{appeq:quantization_error} is closely related to the nearest neighbor distance, which has been extensively studied by point process theory \citep{biau2015lectures}. 
Point process theory provides powerful tools to study the asymptotic geometry of random point configurations, including the nearest neighbor distance $\min_{k\in\bn} \ell(Y,Z_k)$ where $Y$ and $Z_k$ are i.i.d variables following the data distribution $\bP$. An important result concerns the convergence rate of the expected nearest neighbor distance, which we state below in the Euclidean setting (see Theorem 2.3 p.20 in \citet{biau2015lectures}). This result has also been extended to Riemannian manifolds $\Omega$ and other metrics $\ell$ (see for instance Theorem 2 in \citet{costa2006determining}).

\begin{theorem}[Convergence rate of the nearest neighbor distance]\label{app_prop:nearest_neighbor}
    Assume $\Omega=[0,1]^d$, $d>2$, and $\ell(y,z)=\|y-z\|_2^2$ denotes the squared Euclidean distance. Let $Y,Z_1,\ldots Z_n$ denote \textit{i.i.d.} random variables following a distribution $\bP$  with density $p$, $\bD=\{Z_1,\ldots,Z_n\}$ denote the resulting \textit{i.i.d.} dataset, and $\cR(\bD)=\bE_Y \left[\;\min_{k\in\bn} \|Y-Z_k\|_2^2\;\right]$ denote the resulting quantization error. Then
    \begin{equation}\label{appeq:nearest_neighbor_distance_rate}
        \bE_{\bD}\cR(\bD) \underset{n \to \infty}{\sim} J_d \int_{[0,1]^d} p(y)^{(d-2)/d}\dy\;\frac{1}{n^{2/d}}\;,
    \end{equation}
    where 
    \begin{equation}\label{appeq:random_universal_constant}
        J_d=\frac{1}{\pi}\Gamma\left(\frac{2}{d}+1\right)\Gamma\left(\frac{d}{2}+1\right)^{2/d}   
    \end{equation}
\end{theorem}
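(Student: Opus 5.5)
The plan is to follow the classical Poissonization and local-homogeneity argument for nearest-neighbour distances, as in \citet{biau2015lectures}. First I condition on $Y=y$ and write the conditional expectation through the survival function:
\begin{equation*}
\bE_{\bD}\Big[\min_{k\in\bn}\|y-Z_k\|_2^2\Big]=\int_0^\infty \big(1-\mu(B(y,\sqrt{s}))\big)^n\,\mathrm{d}s\;,
\end{equation*}
where $\mu(B)=\int_B p$. Next I substitute $s=u\,n^{-2/d}$, so that
\begin{equation*}
n^{2/d}\,\bE_{\bD}\Big[\min_k\|y-Z_k\|_2^2\Big]=\int_0^\infty\big(1-\mu(B(y,\sqrt{u}\,n^{-1/d}))\big)^n\,\mathrm{d}u\;.
\end{equation*}

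At a Lebesgue point $y$ with $p(y)>0$, the Lebesgue differentiation theorem gives $\mu(B(y,r))=p(y)V_dr^d(1+o(1))$ as $r\to0$, where $V_d=\pi^{d/2}/\Gamma(d/2+1)$. Hence the integrand converges pointwise to $\exp(-p(y)V_du^{d/2})$. Integrating this limit gives
\begin{equation*}
\int_0^\infty e^{-p(y)V_du^{d/2}}\,\mathrm{d}u=\Gamma\!\left(\tfrac{2}{d}+1\right)\big(p(y)V_d\big)^{-2/d}\;,
\end{equation*}
which equals $J_d\,p(y)^{-2/d}$ with $J_d=\frac1\pi\Gamma(\frac2d+1)\Gamma(\frac d2+1)^{2/d}$, as in Eq.~\ref{appeq:random_universal_constant}. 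Multiplying by $p(y)$ and integrating over $y$ gives the claimed constant $J_d\int p^{(d-2)/d}$. The remaining work is to justify two exchanges of limit and integral, one in $u$ and one in $y$.

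The main obstacle is domination, and the tool I will use is the maximal-function bound. Let $m(y)=\inf_{r\le\sqrt d}\mu(B(y,r))/(V_dr^d)$; this is the minimal (inverse-maximal) function, and it is positive a.e.\ on $\{p>0\}$. Using $(1-x)^n\le e^{-nx}$ together with the boundedness of the cube (the distance is at most $d$, so $u\le d\,n^{2/d}$), I obtain
\begin{equation*}
\big(1-\mu(B(y,\sqrt{u}\,n^{-1/d}))\big)^n\le e^{-m(y)V_du^{d/2}}\;.
\end{equation*}
Part of this range has $\sqrt u\,n^{-1/d}>\sqrt d$; there the ball covers the whole cube, so $\mu=1$ and the integrand vanishes. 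This bound gives the inner domination by an integrable function, and the $u$-integral is then at most $C\,m(y)^{-2/d}$.

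For the outer integral I need $\int p\,m^{-2/d}<\infty$. This is the step where $d>2$ enters. The exponent $2/d<1$, and a weak-type estimate for the minimal function shows that $p\,m^{-2/d}$ is integrable on the compact cube. I would follow the argument of Biau--Devroye: use Hölder's inequality together with the weak-$(1,1)$ bound $\mu\{m<t\,p\}\lesssim t$-type estimates. Dominated convergence then yields the claimed equivalence.

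A shortcut is available here. Since the statement in the paper is cited as Theorem 2.3 of \citet{biau2015lectures}, one could simply invoke it. The plan above reconstructs its proof, and the delicate domination step can be borrowed verbatim from that reference.
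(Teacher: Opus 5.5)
Your proposal is correct and follows essentially the paper's route: the paper gives no independent proof of this statement and instead cites Theorem 2.3 of Biau--Devroye, and your survival-function representation, Lebesgue-differentiation pointwise limit and minimal-function domination are exactly that argument. The integrability step needs no H\"older inequality: the Besicovitch-covering bound $\mu\{m<s\}\le C_d\,s$ combined with the layer-cake formula gives $\int p\,m^{-2/d}\,\dy=\int_0^\infty\mu\{m<t^{-d/2}\}\,\dt\le\int_0^\infty\min(1,C_d\,t^{-d/2})\,\dt<\infty$, which is finite precisely because $d>2$.
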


For an arbitrary manifold $\Omega$, we define the density-dependent volume functional appearing in Eq. \ref{appeq:nearest_neighbor_distance_rate} by
\begin{equation}\label{appeq:random_volume}
\cV_d(p)=\int_\Omega p(x)^{(d-2)/d}\dx\;.
\end{equation} 

Interestingly, both optimal quantization and random nearest-neighbor approximation exhibit the same asymptotic scaling law in $n^{-2/d_\Omega}$. In both cases, the asymptotic behavior decomposes into: (i) a universal geometric constant, independent of the data distribution ($J_d$, Eq. \ref{appeq:random_universal_constant}, or $J_d^*$, Eq. \ref{appeq:optimal_universal_constant}), and (ii) a density-dependent volume functional ($\cV_d$, Eq. \ref{appeq:random_volume}, or $\cV_d^*$, Eq. \ref{appeq:optimal_volume}). Furthermore, both volume functionals reduce to the same geometric scaling $\cV_d(p)=\cV_d^*(p)=|\Omega|^{2/d}$ for a uniform density $p$. Our work unifies these two asymptotic regimes (Theorems \ref{app_prop:zador_manifold} and \ref{app_prop:nearest_neighbor}) under a common geometric framework.

Despite these similarities, Theorem \ref{app_prop:nearest_neighbor} and Zador's theorem describe fundamentally different types of asymptotic results. Theorem \ref{app_prop:nearest_neighbor} characterizes the quantization error averaged over all \textit{i.i.d.} datasets of size $n$, whereas Zador's theorem characterizes the quantization error of a specific (optimal) point configuration. From a learning perspective, results in expectation are insufficient to characterize the behavior of a model trained on a specific dataset. One of the main contributions of our work is to bridge these two asymptotic regimes by extending Theorem \ref{app_prop:nearest_neighbor} to a convergence result in probability.

\section{Non-conditional tasks}\label{appsec:non_conditional}

\subsection{Memorizing networks and representation gap}

Let us first consider the case of a non-conditional prediction task. This setting corresponds to unconditional generative modeling, where the goal is to learn a probability distribution supported on $\Omega\subset\cY$ that captures the geometric structure of the data manifold.

Popular approaches for generative modeling include diffusion models \citep{ho_denoising_nodate,song_denoising_2022}, Variational Auto Encoders (VAE) \citep{kingma_auto-encoding_2022}, Generative Adversarial Networks (GAN) \citep{goodfellow_generative_2014} or normalizing flows \citep{rezende_variational_2016}. Among them, diffusion models can be shown to converge toward the empirical distribution $\frac{1}{|\bD|}\sum_{y\in\bD}\delta_y$ when they minimize their training objective \citep{song_generative_nodate}. 

We will focus on this class of models hereafter. In this case, the empirical distribution corresponds to the prediction space $\Omega_f$ learned by the model $f$, which can be seen as a discrete approximation of $\Omega$. We can compare this discrete prediction space $\Omega_f$ to $\Omega$ using the quantization error \citep{zador1982asymptotic}. This metric can be extended in the more general case where $\Omega_f$ may be continuous. We will refer to this distance as the \textit{representation gap}.

\begin{definition}[Representation gap]\label{appdef:representation_gap}
    Let $\Omega$ denote the data manifold and $\Omega_f$ denote the model's prediction space. We define the representation gap as follows:
    \begin{equation}\label{appeq:representation_gap}
        \cR(\Omega,\Omega_f)=\int_\Omega\inf_{z\in\Omega_f}\ell(y,z)\;p(y)\;\dy\;.
    \end{equation}
\end{definition}

The representation gap $\cR(\Omega,\Omega_f)$ depends on how the dataset $\bD$ is sampled from $\Omega$. It is typical to assume that $\bD$ is sampled \textit{i.i.d.} from the data distribution $p$, and we denote \textit{random representation gap} the corresponding quantity.

\begin{definition}[Random representation gap]\label{appdef:random_representation_gap}
    Let $\Omega$ denote the data manifold, $\bD$ a dataset of size $n$ sampled \textit{i.i.d.} from p, and $\Omega_f=\Omega_f(\bD)$ denote the prediction space of a model $f$ trained on $\bD$. We define the random representation gap as the random variable
    \begin{equation*}
        \cR_n=\int_\Omega\inf_{z\in\Omega_f}\ell(y,z)\;p(y)\;\dy\;.
    \end{equation*}
\end{definition}

Although theoretical analyses of generalization typically assume that training and test data are \textit{i.i.d.} \citep{shalev2014understanding}, the validity of this hypothesis has been questioned in the literature \citep{mohri_rademacher_nodate}.
Indeed, datasets are often collected to cover the diversity of the task, modulo its known invariants \citep{torralba2011unbiased}. Motivated by this observation, we also consider the setting where $\bD$ is optimally diverse, i.e. minimizes the representation gap, and we denote \textit{optimal representation gap} the corresponding quantity.

\begin{definition}[Optimal representation gap]\label{appdef:optimal_representation_gap}
    Let $\Omega$ denote the data manifold. For each dataset $\bD\subset\Omega$, we denote by $\Omega_f=\Omega_f(\bD)$ the prediction space of a model $f$ trained on $\bD$. We define the optimal representation gap as  
    \begin{equation*}
        \cR_n^*=\inf_{\bD\subset\Omega,\; |\bD|=n}\cR(\Omega,\Omega_f(\bD)).
    \end{equation*}
\end{definition}

Note that the random representation gap $\cR_n$ is a random variable while the optimal representation gap $\cR_n^*$ is a scalar value. Both quantities are  notoriously difficult to study, even in the discrete case \citep{graf2007foundations}. However, they become amenable to analysis in the asymptotic regime.

\subsection{Asymptotic representation gap in the Euclidean setting}

The asymptotic scaling of the representation gap naturally characterizes how efficiently the geometry of a task can be learned from a finite dataset.

\begin{prop}[Optimal representation gap]\label{app_prop:optimal_efficiency_general}
      Let us assume that $\Omega$ is Lebesgue-measurable with positive measure. Then, the optimal representation gap of a diffusion model $f$ minimizing its training objective \ref{eq:diffusion_training} on a training dataset of size $n$ is 
      \begin{equation}
        \cR_n^* \underset{n\rightarrow+\infty}{\scalebox{1.5}{$\sim$}} \frac{J_d^* \cV_d^*(p)}{n^{2/d}}\;.   
      \end{equation}
\end{prop}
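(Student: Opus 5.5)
The plan is to reduce the statement to Zador's theorem (Theorem \ref{app_prop:zador}) by identifying the prediction space of the trained diffusion model with the training set itself. The first step uses the fact recalled above: a DDIM model that exactly minimizes the objective in Eq. \ref{eq:diffusion_training} recovers the true score $s_t=\nabla\log\pi_t$ of the empirical noised distribution. Its reverse flow therefore samples from $\pi_0=\frac{1}{|\bD|}\sum_{z\in\bD}\delta_z$, so $\Omega_f(\bD)=\bD$. This is the case $G=\{e\}$ of Theorem \ref{prop:virtual_augmentation}. I would cite that theorem (or \citet{song_generative_nodate}) rather than reprove it. To be safe I would also check that every point of $\bD$ is reached with positive probability from $y_T\sim\cN(0,I)$, which holds because the limit distribution puts mass $1/n$ on each point. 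Substituting into Definition \ref{appdef:representation_gap} gives
\begin{equation*}
\cR(\Omega,\Omega_f(\bD))=\int_\Omega\min_{z\in\bD}\ell(y,z)\,p(y)\,\dy ,
\end{equation*}
which is exactly the quantization error of the codebook $\bD$.

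The second step takes the infimum over $|\bD|=n$, which gives $\cR_n^*=\inf_{\bD\subset\Omega,|\bD|=n}\int_\Omega\min_{z\in\bD}\ell(y,z)p(y)\dy$. The remaining difference from Eq. \ref{appeq:optimal_quantization_error} is that Zador's infimum runs over codebooks in all of $\cY^n$, whereas here codebooks are constrained to lie in $\Omega$. One inequality is immediate: $\cR_n^*\ge\cR_n(\bP)$.

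For the reverse inequality I would show that the constrained optimum is asymptotically no worse. If $\Omega$ is closed and convex, for instance the compact support set in the Euclidean setting, projecting every codebook point onto $\Omega$ does not increase any distance to points of $\Omega$, so equality holds exactly. In general I would proceed in three moves:
\begin{itemize}
\item use the fact that optimal codepoints lie in the convex hull of the support and are $O(n^{-1/d})$-close to the Voronoi mass they serve;
\item replace each codepoint by a nearby point of $\Omega$, using Lebesgue density points of $\Omega$ on regions of positive measure;
\item show that the relative distortion cost of this replacement is $o(1)$.
\end{itemize}
A cleaner alternative avoids codebook perturbation altogether. Partition a compact piece of $\Omega$ into small cubes on which $p$ is nearly constant, place the locally optimal codebooks inside each cube (each lies in the cube by convexity), and reassemble with the standard Zador allocation of points per cube. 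This gives an upper bound matching $J_d^*\cV_d^*(p)n^{-2/d}$ up to a factor $1+\varepsilon$. I would then let $\varepsilon\to0$.

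The main obstacle is this last step: proving that restricting codebooks to $\Omega$ costs nothing asymptotically. The cube-partition upper bound needs some regularity, such as $\Omega$ being Jordan measurable or $p$ being Riemann integrable, together with the tail control built into Zador's theorem. I would assume compactness of $\Omega$ as in Theorem \ref{app_prop:zador} and absorb the boundary cubes into an $\varepsilon$-mass error term.

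Combining the two inequalities gives $\cR_n^*\sim J_d^*\cV_d^*(p)\,n^{-2/d}$, where $\cV_d^*$ is the functional in Eq. \ref{appeq:optimal_volume} and $d$ is the ambient (equal to intrinsic) dimension.
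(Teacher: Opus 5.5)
Your reduction is the same as the paper's, whose entire proof is ``corollary of Zador's theorem''. That proof uses $\Omega_f(\bD)=\bD$ for an exact minimizer and then tacitly identifies $\cR_n^*$, an infimum over $\bD\subset\Omega$, with the unconstrained $\cR_n(\bP)$ of Eq.~\eqref{appeq:optimal_quantization_error}. It also takes compactness from Theorem~\ref{app_prop:zador}, although the proposition assumes only measurability. Some such condition is necessary: if $\int\|y\|^2p(y)\,\dy=\infty$ then $\cR_n^*=+\infty$.

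You are right to flag the constraint $\bD\subset\Omega$, and your lower bound $\cR_n^*\ge\cR_n(\bP)$ is correct. The upper bound, the one step you add beyond the paper, is not finished in the stated generality. Your cube-partition version needs cubes contained in $\Omega$. A measurable set of positive measure may contain none (a product of fat Cantor sets has empty interior), so Jordan measurability is a real extra hypothesis there. In your perturbation version, the uniform $O(n^{-1/d})$ control in your first bullet fails where $p$ is small. The $o(1)$ relative cost in your third bullet is the entire claim still to be proved.

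The perturbation route does close under compactness alone if you run it through the centroid condition.
\begin{itemize}
\item If $c$ is the $\bP$-mean of its Voronoi cell $V_c$, moving $c$ to $c'$ raises $\int_{V_c}\|y-\cdot\|^2\,d\bP(y)$ by exactly $\bP(V_c)\|c-c'\|^2$. Re-partitioning afterwards only helps, so the excess is at most $\sum_c\bP(V_c)\,\mathrm{dist}(c,\Omega)^2$.
\item Since $\bP$ lives on $\Omega$, $\mathrm{dist}(c,\Omega)\le\|y-c\|$ for $\bP$-a.e.\ $y\in V_c$. This already gives $\cR_n^*\le2\cR_n(\bP)$, and hence the rate $n^{-2/d}$.
\item For the sharp constant, make the Lebesgue density theorem uniform on a compact $K\subset\Omega$ of nearly full $\bP$-mass: $\lambda(B(y,\rho)\setminus\Omega)\le\eta\,\lambda(B(y,\rho))$ for all $y\in K$ and $\rho<\rho_0$.
\item For optimal codebooks of $\bP(\cdot\mid K)$, the covering radius of its support tends to $0$. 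So for large $n$, every support point $y$ in the cell of $c$ has $r=\|y-c\|<\rho_0/2$.
\item The ball $B(c,\varepsilon r)\subset B(y,2r)$ must then meet $\Omega$ once $\varepsilon^d>2^d\eta$. Hence $\mathrm{dist}(c,\Omega)\le\varepsilon\|y-c\|$ on the whole cell, and the excess is at most $\varepsilon^2$ times the distortion.
\item The leftover mass $\bP(\Omega\setminus K)$ must still be served at scale $(\varepsilon n)^{-1/d}$; being small is not enough. Cover it with $\varepsilon n$ grid points chosen in $\Omega$.
\item Finally apply Zador to $\bP(\cdot\mid K)$ with $(1-\varepsilon)n$ points and let $\varepsilon,\eta\to0$.
\end{itemize}
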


\begin{proof}
This is a corollary of Zador Theorem \ref{app_prop:zador}.
\end{proof}

This result is remarkable, since it provides an asymptotic equivalent of the representation gap as the dataset size $n$ grows to infinity. Most notably, the leading constant depends on the geometry of $\Omega$ only via a volume term $\cV_d^*(p)$.

\subsection{Asymptotic representation gap under the manifold hypothesis}

It is possible to extend this result when $\Omega$ is a low-dimensional manifold of the target space $\cY$. This setting is interesting because it captures the structure of the observation manifold $\Omega$: even though the observation could a priori be an arbitrary point of $\cY$, it is in effect restricted to the subspace $\Omega$. 

\begin{prop}[Optimal representation gap under the manifold hypothesis]\label{app_prop:optimal_efficiency_manifold}
      Assume that $\Omega$ is a compact Riemannian $d_\Omega$-manifold. Then the optimal representation gap of a diffusion model $f$ minimizing its training objective \ref{eq:diffusion_training} on a training dataset $\bD$ of size $n$ satisfies 
      \begin{equation}
        \cR_n^* \underset{n\rightarrow+\infty}{\scalebox{1.5}{$\sim$}} \frac{J_{d_\Omega}^* \; \cV_{d_\Omega}^*(p)}{n^{2/d_\Omega}}\;.      
      \end{equation}
\end{prop}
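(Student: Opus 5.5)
The plan is to reduce the statement to the manifold version of Zador's theorem (Theorem \ref{app_prop:zador_manifold}) in two steps: first identify the prediction space of the trained diffusion model, then match the two optimization problems.

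\textbf{Step 1: identify the prediction space.} A diffusion model minimizing the training objective \ref{eq:diffusion_training} exactly recovers the empirical score. This is the trivial-group case of Theorem \ref{prop:virtual_augmentation}, or equivalently the result of \citet{song_generative_nodate}. Hence $\Omega_f(\bD)=\bD$, and for any $\bD=\{z_1,\dots,z_n\}\subset\Omega$,
\begin{equation*}
\cR(\Omega,\Omega_f(\bD))=\int_\Omega \min_{k\in\bn}\ell(y,z_k)\,p(y)\,\dy .
\end{equation*}
Therefore $\cR_n^*$ is exactly the optimal quantization error of $p$ on $\Omega$, with codebooks restricted to lie on $\Omega$ and with $\ell$ the squared geodesic distance.

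\textbf{Step 2: apply Theorem \ref{app_prop:zador_manifold}.} Take $E=\Omega$, which is compact and measurable with $|E|>0$ for the Riemannian volume, and $d=d_\Omega$. We assume $p$ is continuous on $\Omega$; this is the implicit regularity hypothesis, and it is needed for the theorem to apply. The theorem then gives
\begin{equation*}
\inf_{z\in\cY^n}\int_\Omega \min_k \ell(y,z_k)\,p(y)\,\dy \sim J_{d_\Omega}^*\,\cV_{d_\Omega}^*(p)\,n^{-2/d_\Omega}.
\end{equation*}

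\textbf{Main obstacle: codebook location.} Theorem \ref{app_prop:zador_manifold} takes the infimum over $z\in\cY^n$, whereas $\cR_n^*$ requires $z\in\Omega^n$. Since $\ell$ is the geodesic squared distance, it is only defined on $\Omega\times\Omega$, so the natural reading is that codebooks live on $\Omega$. This matches Gruber's formulation in \citet{gruber_optimal_nodate}, where the points are taken on the manifold. If instead one insists on ambient points measured by some extension of $\ell$, the fix is a projection argument. The inequality $\inf_{\Omega^n}\geq\inf_{\cY^n}$ is trivial. For the reverse direction, project each ambient codepoint $z_k$ to a nearest point of $\Omega$. Asymptotically only codepoints within $O(n^{-1/d})$ of $\Omega$ matter. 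By compactness and smoothness of $\Omega$, on small scales the ambient and geodesic distances agree to first order, so projecting to the manifold changes the distortion by a factor $1+o(1)$. Away from this subtlety, the statement is a direct corollary, so the proof proper will be short: state Step 1, invoke the theorem, and briefly justify that the codebook constraint is harmless.
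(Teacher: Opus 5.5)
Your proposal is correct and follows the paper's proof. The paper likewise treats the statement as a direct corollary of Gruber's manifold version of Zador's theorem (Theorem 2 in \citet{gruber_optimal_nodate}), checking only that $\Omega$ is compact and that the squared geodesic distance satisfies Gruber's growth condition, and it leaves implicit the points you spell out: $\Omega_f(\bD)=\bD$ for an exact minimizer, continuity of $p$ as a hidden hypothesis, and the fact that Gruber's infimum runs over codebooks on the manifold, so the $z\in\cY^n$ in the restated theorem is harmless.
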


\begin{proof}
    This is a corollary of Theorem 2 in \cite{gruber_optimal_nodate} (see also Theorem \ref{app_prop:zador_manifold}). The assumptions of the theorem are satisfied, since $\Omega$ is compact and the squared geodesic distance satisfies the required growth condition. We denote by $J_d^*$ the corresponding asymptotic constant, consistently with the Euclidean formulation of Zador's theorem.
\end{proof}

This asymptotic evolution is similar to the general case described in Proposition \ref{app_prop:optimal_efficiency_general}, but leverages the structure of $\Omega$ via the lower dimension $d_{\Omega}$. Note that we recover Proposition \ref{app_prop:optimal_efficiency_general} when $\Omega$ has positive measure in $\cY$. Again, we highlight that the leading constant depends on the geometry of $\Omega$ only via a volume term $\cV_{d_\Omega}^*(p)$. Moreover, it can be proved that the optimal data placement for $\bD$ is uniformly distributed in $\Omega$ when $p$ is uniform (cf. point 2.82 in \cite{gruber_optimal_nodate}). 

Leveraging results from point process theory, we describe the asymptotic random representation gap in the following Proposition. 

\begin{prop}[Random representation gap under the manifold hypothesis]\label{app_prop:random_efficiency_manifold}
      Assume that $\Omega$ is a compact $d_{\Omega}$-dimensional Riemannian manifold without boundary and $p$ is continuous and strictly positive on $\Omega$. Then the random representation gap of a diffusion model $f$ minimizing its training objective \ref{eq:diffusion_training} on a training dataset $\bD$ of size $n$ satisfies 
      \begin{equation}
        \cR_n \sim_\bP \frac{J_{d_\Omega} \cV_{d_\Omega}(p)}{n^{2/d_\Omega}}\;,      
      \end{equation}
      where we have defined $J_d=\frac{1}{\pi}\Gamma(1+\frac{d}{2})^{2/d}\Gamma(1+\frac{2}{d})$ and $\cV_d(p)=\int_\Omega p(x)^{(d-2)/d}\dx$.
\end{prop}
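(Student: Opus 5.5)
We first reduce to a nearest-neighbour statement. By the discussion preceding the statement, a diffusion model minimizing \ref{eq:diffusion_training} has $\Omega_f=\bD$. Therefore
\[
\cR_n=\int_\Omega \min_{k\in\bn}\ell(y,Z_k)\,p(y)\,\dy=\bE_Y\Big[\min_{k\in\bn}\ell(Y,Z_k)\,\Big|\,\bD\Big],
\]
with $Y,Z_1,\ldots,Z_n$ i.i.d.\ from $p$. Write $d=d_\Omega$. The claim $\cR_n\sim_\bP J_d\cV_d(p)n^{-2/d}$ then splits into two parts:
\begin{itemize}
\item[(a)] convergence of the mean, $n^{2/d}\bE_\bD\cR_n\to J_d\cV_d(p)$;
\item[(b)] concentration, $\mathrm{Var}(n^{2/d}\cR_n)\to0$.
\end{itemize}
Chebyshev's inequality turns (a) and (b) into convergence in probability of the ratio, since the limit $J_d\cV_d(p)>0$.

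\textbf{Step (a): the mean.} We adapt the Euclidean argument behind Theorem \ref{app_prop:nearest_neighbor}, or invoke its Riemannian extension cited from \citet{costa2006determining}.
\begin{itemize}
\item Conditionally on $Y=y$, $\bP\big(n^{1/d}\,\mathrm{dist}(y,\bD)>r\big)=\big(1-\bP(B(y,rn^{-1/d}))\big)^n$.
\item Since $\Omega$ is compact without boundary and $p$ is continuous, normal coordinates give $\bP(B(y,\rho))=p(y)\omega_d\rho^d(1+o(1))$ uniformly in $y$, where $\omega_d$ is the volume of the unit Euclidean ball.
\item Hence the rescaled distance converges in law to a Weibull variable, with $\bE[n^{2/d}\mathrm{dist}^2]\to\Gamma(1+2/d)\,(\omega_d p(y))^{-2/d}$.
\item Using $\omega_d=\pi^{d/2}/\Gamma(1+d/2)$, the constant $\omega_d^{-2/d}\Gamma(1+2/d)$ equals $J_d$.
\end{itemize}
To integrate against $p(y)\,\dy$ and obtain $\cV_d(p)=\int p^{1-2/d}$, we pass the limit under the integral by dominated convergence. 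The domination comes from a uniform tail bound. By compactness and $\inf p>0$, there is $c>0$ with $\bP(B(y,\rho))\ge c\rho^d$ for $\rho\le\mathrm{diam}(\Omega)$. Then $\bP(n^{1/d}\mathrm{dist}>r)\le e^{-cr^d}$ for $r\le n^{1/d}\mathrm{diam}(\Omega)$, and the probability vanishes beyond that range. This bound is integrable against $r\,dr$, uniformly in $y$ and $n$.

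\textbf{Step (b): concentration.} Write $\cR_n=F(Z_1,\ldots,Z_n)$ and apply the Efron--Stein inequality. Let $Z_k'$ be an independent copy of $Z_k$. Replacing $Z_k$ by $Z_k'$ changes $\min_j\ell(y,Z_j)$ only for $y$ in the Voronoi cell $V_k$ of $Z_k$ or in the new cell of $Z_k'$. Therefore
\[
|F-F^{(k)}|\le \int_{V_k\cup V_k'}\min_{j\ne k}\ell(y,Z_j)\,p(y)\,\dy .
\]
The two cells have $p$-mass $O(1/n)$ in expectation, and the distances inside them are $O(n^{-2/d})$ with exponential tails by the bound from Step (a). So $\bE(F-F^{(k)})^2=O(n^{-2}\cdot n^{-4/d})$, up to controlling moments of the cell mass and cell radius. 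Summing over $k$ gives
\[
\mathrm{Var}(\cR_n)=O(n^{-1-4/d}),\qquad\text{so}\qquad \mathrm{Var}(n^{2/d}\cR_n)=O(1/n)\to0 .
\]

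\textbf{Main obstacle.} The hard part is making the moment bounds in Step (b) rigorous. We need $\bE[\mu(V_k)^2\,\mathrm{rad}(V_k)^4]=O(n^{-2-4/d})$, where $\mu(V_k)$ is the $p$-mass of the cell and $\mathrm{rad}(V_k)$ its geodesic radius, and the large cells are correlated with the large distances. The plan is to work on a high-probability event on which every ball of radius $C(\log n/n)^{1/d}$ contains a sample point. A union bound over a covering net, using $\inf p>0$ and compactness, makes the complement have probability $O(n^{-m})$ for any fixed $m$. On this event each cell has radius at most $C(\log n/n)^{1/d}$, so the contribution of a single point $k$ is bounded by its cell mass times $(\log n/n)^{2/d}$. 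This yields a variance bound of order $n^{-1-4/d}(\log n)^{4/d}$ times $n^{4/d}$ after rescaling, which still tends to zero. Off the event, $F$ is bounded by $\mathrm{diam}(\Omega)^2$, so that part contributes negligibly.
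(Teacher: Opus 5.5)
Your proof is correct, and it takes a more complete route than the paper's. The paper works only with the pointwise variable $A_n(y)=n^{2/d}\min_j\ell(y,z_j)$. It shows $\bP(A_n(y)>t)\to\exp(-p(y)V_d t^{d/2})$ and gets uniform integrability from an exponential tail bound. It then asserts $A_n(y)\to_{L^1}A(y)$, and hence $\bE_Y[A_n(Y)\mid\bD]\to_{\bP}\bE A(Y)=J_d\cV_d(p)$.

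Your Step (a) is essentially the first half of that argument, written in the variable $r$ instead of $t=r^2$. Your tail bound is uniform in $y$ and $n$, which is what actually justifies exchanging the limit with the integral over $y$.

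The paper's last implication does not follow from what precedes it. The argument uses only the marginal law of $A_n(y)$ for each fixed $y$. That marginal law determines $\bE_\bD\cR_n$, but it carries no information about how $A_n(y)$ co-varies across different $y$ for the same sample. That co-variation is exactly what governs the fluctuations of $n^{2/d}\cR_n=\bE_Y[A_n(Y)\mid\bD]$. So the paper rigorously obtains only the expectation asymptotics, i.e. the manifold analogue of Theorem~\ref{app_prop:nearest_neighbor}. Your Efron--Stein Step (b) is precisely the missing concentration ingredient. It makes your route longer, but it proves the in-probability statement, with a rate.

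One detail in Step (b) needs adjusting. For $y\in V_k$, the integrand $\min_{j\neq k}\ell(y,Z_j)$ is a second-nearest-neighbour distance, so bounding the radius of $V_k$ is not by itself enough. Take instead the event $E$ that every geodesic ball of radius $r_n=C(\log n/n)^{1/d}$ contains at least two points of $\bD$. The same covering-net union bound gives $\bP(E^c)=O(n^{-m})$.

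On $E$, each $\{Z_j\}_{j\neq k}$ is an $r_n$-net, so $\min_{j\neq k}\ell(y,Z_j)\le r_n^2$ for all $y$. Moreover $V_k\subset B(Z_k,r_n)$ and $V_k'\subset B(Z_k',r_n)$. Writing $\mu(V)=\int_V p$, both cells therefore have $\mu$-mass $O(r_n^d)$.

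This also removes the correlation issue you flagged as the main obstacle. On $E$ we get $(F-F^{(k)})^2\le r_n^4(\mu(V_k)+\mu(V_k'))^2\le O(r_n^{4+d})\,(\mu(V_k)+\mu(V_k'))$. By exchangeability, $\bE\mu(V_k)=\bE\mu(V_k')=1/n$. Hence $\sum_k\bE(F-F^{(k)})^2=O(r_n^{4+d})+O(n^{1-m})$, and so $\mathrm{Var}(n^{2/d}\cR_n)=O\big(n^{-1}(\log n)^{1+4/d}\big)\to0$. No separate joint moment bound on cell mass and cell radius is needed.
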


\begin{proof}
    The idea is to extend the proof of Theorem 2.3 in \citet{biau2015lectures} to the manifold setting. Intuitively, the argument relies on the following three observations: (i) the random representation gap $\cR_n$ can be written as the expectation over $Y$ of the random variable
    $A_n(Y)=n^{2/d}\min_{z\in\bD}\ell(Y,z)$, up to normalization factors; (ii) for fixed $y\in\Omega$, the asymptotic behavior of $A_n(y)$ is entirely characterized by its tail distribution $\bP(A_n(y)>t)$; (iii) under the manifold hypothesis, one can show that $\bP(A_n(y)>t)\rightarrow\exp(-p(y)V_d t^{d/2})$, which gives the result after integration over $t$ and $y$.
    
    We now prove this result formally. We write $d=d_\Omega$ for simplicity. Let $z_1,\ldots,z_n\sim p$, $\bD=\{z_1,\ldots,z_n\}$, and define $A_n(y)=n^{2/d}\min_{1\leq j \leq n} \ell(y,z_j)$. Let $A(y)$ denote a random variable such that $\mathbb{P}(A(y) > t)=\exp(-p(y)V_d t^{d/2})$, where $V_d$ denotes the volume of the unit-ball in $\mathbb{R}^d$. 
    
    We know that $\mathbb{P}(A_n(y) > t)=\left(1-\mathbb{P}\left(B\left(y,\sqrt{t}\; n^{-1/d}\right)\right)\right)^n$, where $B\left(y,\sqrt{t}\; n^{-1/d}\right)$ denote the ball of radius $\sqrt{t}\; n^{-1/d}$ centered on $y$ in the manifold $\Omega$.
    Moreover, we know that the volume of a ball $B(y,\varepsilon)$ in a manifold $\Omega$ of dimension $d$ can be approximated by $|B(y,\varepsilon)|=V_d\varepsilon^d + o(\varepsilon^d)$ (see for instance Eq. 1 in \citet{barilari2018volume}). By continuity of $p$, we then have $\bP(B(y,\varepsilon))=p(y) V_d\varepsilon^d+ o(\varepsilon^d)$. Therefore, we deduce
    \begin{equation*}
        \mathbb{P}(A_n(y) > t)=\left(1-\mathbb{P}\left(B\left(y,\frac{\sqrt{t}}{n^{1/d}}\right)\right)\right)^n
        \underset{n\rightarrow\infty}{\rightarrow}\; \exp(-p(y)V_d t^{d/2})=\mathbb{P}(A(y)>t)\;.
    \end{equation*}
    In particular, $A_n(y)\rightarrow_d A(y)$ in distribution. Moreover, since $\mathbb{P}(B(y,r))=p(y)V_d\;r^d+o(r^d)$, there exists $c(y)>0$ such that for sufficiently large $n$, $\mathbb{P}(A_n(y)>t)\leq \exp(-c(y)t^{d/2})$. Therefore, $(A_n(y))_n$ is uniformly integrable.
    
    We deduce $A_n(y)\rightarrow_{L^1} A(y)$ (Theorem 4.6.3 in \citet{durrett2019probability}), and then $\mathbb{E}_{Y\sim p}[A_n(Y)|\mathbb{D}]\rightarrow_{\mathbb{P}} \mathbb{E}_{Y\sim p}A(Y)$ (Example 4.6.11 in \citet{durrett2019probability}). We conclude by observing $n^{2/d}\mathcal{R}(\Omega,\mathbb{D})=\mathbb{E}_{Y\sim p}[A_n(Y)|\mathbb{D}]$ on one hand. On the other hand, 
    \begin{equation*}
        \mathbb{E}_{Y\sim p}A(Y)=\int_0^\infty\int_\Omega \exp(-p(y)V_d\;t^{d/2})p(y)\dy\dt =J_d \cV_d(p)\;,
    \end{equation*}
    (by change of variable $u=p(y)V_d t^{d/2}$ and using the properties of the $\Gamma$ integral), so that 
    \begin{equation*}
        \mathcal{R}(\Omega,\mathbb{D}) = J_d \cV_d(p)n^{-2/d} + o_{\mathbb{P}}(n^{-2/d})\;.
    \end{equation*}
\end{proof}

Unlike classical nearest-neighbor asymptotics, which characterize the quantization error only in expectation over random datasets, Proposition \ref{app_prop:random_efficiency_manifold} establishes convergence in probability for individual datasets. This distinction is important from a learning perspective, since neural networks are trained on a specific realized dataset rather than on an average over datasets. As such, our result is more directly relevant to the practice of neural network training.

\subsection{Asymptotic representation gap for equivariant models}

In practice, $\cF_\Theta$ has limited expressivity, which introduces biases in the minimizer $f=\argmin_{\theta\in\Theta}\cL(\theta)$. Typically, the architecture of the neural network is chosen so that $f_\theta$ respects the symmetries of $\Omega$, and has therefore higher generalization capabilities. Remarkably, the authors of \cite{kamb_analytic_2025} have shown in the context of diffusion models that these architectural constraints virtually augment the diversity of the dataset $\bD$ via the symmetry group introduced by the architecture. 

The following result is an extension of Theorem B.3 in \cite{kamb_analytic_2025} to general symmetry groups $G$. More precisely, we will focus our attention on Lie groups, which naturally describe many symmetries appearing in neural networks \citep{bronstein_geometric_2021}. They are also used in various fields such as physics, where they reflect the structure and symmetries of many physical systems \citep{gilmore_lie_2006,noauthor_lie_nodate}. This makes them particularly relevant for our purposes.

\begin{prop}[Virtual augmentation of a dataset by a symmetry group]\label{app_prop:virtual_augmentation}
    Let us make the following assumptions
    
    (i) $f$ is a trained diffusion model equivariant to $G$.

    (ii) G is a Lie group acting smoothly on the Riemannian manifold $\Omega$.

    (iii) The minimum $\min_{z\in G(\bD)} \ell(y,z)$ is reached at a unique point $y^*=\argmin_{z\in G(\bD)} \ell(y,z)$ for all $y\in\cY$.
    
    (iv) Let $y_t$ denote the denoising trajectory from the Gaussian distribution $\pi_T$, standard reverse diffusion process $\partial_t y_t=-\gamma_t(y_t + f(y_t,t))$. Assume that $y_t$ converges and $\partial_t y_t$ is bounded for each initial point $y_T$.   
    
    Then, the denoising trajectory ends at $y_0\in G(\bD)$. 

    If we further assume each dataset point $z\in\bD$ is a fixed point of the $f(\cdot,t)$ for all $t$, then each point $z\in G(\bD)$ is a limit point of the reverse diffusion process.
\end{prop}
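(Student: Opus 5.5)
The plan follows the sketch given after Theorem \ref{prop:virtual_augmentation}, made precise under assumptions (i)--(iv). The first step is to identify the equivariant optimum of the loss in Eq. \ref{appeq:diffusion_training}. The loss is quadratic in $f(\cdot,t)$, and $f$ is constrained to be $G$-equivariant. Because $G$ acts by isometries, it preserves the Gaussian kernel. The constrained minimizer is therefore the orthogonal projection of the unconstrained optimum onto equivariant functions, which amounts to averaging over $G$. This is exactly the score of the $G$-symmetrized noised distribution $\pi_t^G(y)\propto\int_{G(\bD)}\cN(y\mid\sqrt{\alpha_t}z,(1-\alpha_t)I)\,\dz$, with $\dz$ the orbit measure induced by the Haar measure. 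This yields the representation
\begin{equation*}
f(y,t)\propto -\frac{1}{1-\alpha_t}\int_{G(\bD)}(y-\sqrt{\alpha_t}z)W_t(z)\,\dz,
\end{equation*}
where $W_t$ is the softmax weight given in the proof of Theorem \ref{prop:virtual_augmentation}. The argument mirrors Theorem B.3 of \citet{kamb_analytic_2025}, with the patch symmetry replaced by a general Lie group. Smoothness in (ii) guarantees that orbits are embedded submanifolds, and, if $G$ is compact, that a finite Haar measure exists. If $G$ is noncompact I would restrict to compact orbit pieces.

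The second step is a Laplace-type concentration. As $t\to0$ the weights satisfy $W_t(z)\propto\exp(-\|y-\sqrt{\alpha_t}z\|^2/2(1-\alpha_t))$. By the uniqueness assumption (iii), $W_t$ concentrates on the unique minimizer $y_t^*$ of $\ell(y_t,\cdot)$ over $G(\bD)$, up to the rescaling by $\sqrt{\alpha_t}$. Consequently $(1-\alpha_t)f(y_t,t)=y_t-\sqrt{\alpha_t}\,y_t^*+o(1)$ locally uniformly. Next I use (iv): $y_t\to y_0$, and $\partial_t y_t=-\gamma_t(y_t+f(y_t,t))$ stays bounded. Since $\gamma_t$ is of order $1/(1-\alpha_t)$ near $0$ for standard schedules, boundedness forces $f(y_t,t)=O(1)$, and so $y_t-\sqrt{\alpha_t}y_t^*\to0$. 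Finally, $G(\bD)$ is closed: it is a finite union of orbits, which are compact for compact $G$, or closed under a properness assumption. Hence $y_0=\lim y_t^*\in G(\bD)$.

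For the reverse inclusion, fix $z\in\bD$ with $f(z,t)$ stationary, meaning $z$ is a fixed point of the flow. Equivariance gives $f(g z,t)=g f(z,t)$, so each $gz$ is also a fixed point. It remains to show that $gz$ is attained as the endpoint of some trajectory started from $\cN(0,I)$. I would argue by continuity of the flow map together with a basin-of-attraction argument. Because $W_t$ concentrates on $gz$ in a neighborhood of $gz$, the rescaled field $f$ points toward $gz$ there, making $gz$ an attractor for small $t$. The backward flow from $T$ is a diffeomorphism, and the Gaussian has full support, so some initial $y_T$ lands in that basin, and hence the trajectory ends at $gz$.

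The main obstacle is the uniformity of the Laplace approximation. The concentration has to hold along the moving point $y_t$, not at a fixed $y$. It also has to produce an error that is $o(1)$ after multiplication by $1-\alpha_t$, which requires a nondegeneracy condition for the minimizer in (iii). I would handle this with a compactness argument, using the local uniform convergence on a neighborhood of the limit $y_0$ that (iv) provides.
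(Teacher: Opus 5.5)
\textbf{Forward inclusion.} This part follows the paper's argument. The equivariant optimum is the score of the orbit-smeared noised distribution (the paper simply cites Theorem B.3 of \citet{kamb_analytic_2025}). The weights $W_t$ concentrate at the unique minimizer $y_t^*$ given by (iii). Assumption (iv) then forces $(1-\alpha_t)f(y_t,t)\to0$, so the limit lies in $G(\bD)$.

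On two points you are more careful than the paper:
\begin{itemize}
\item You identify closedness of $G(\bD)$ as what is needed. The paper instead asserts compactness ``by (ii)'', which is unjustified for noncompact $G$.
\item You note that the Laplace estimate must hold uniformly along the moving point $y_t$. The paper's Lemma is proved for a fixed $y$ and does not address this.
\end{itemize}

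Two small corrections:
\begin{itemize}
\item Smoothness in (ii) does not make orbits embedded or closed; an irrational flow on a torus has dense orbits. Properness or compactness of $G$ is therefore a genuinely extra hypothesis.
\item The constrained optimum is an $L^2(\pi_t)$ projection, i.e.\ a $\pi_t$-weighted average over each orbit, not a plain Haar average. That weighting is exactly what makes it the score of the symmetrized distribution.
\end{itemize}

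\textbf{Reverse inclusion: the genuine gap.} Your basin-of-attraction argument fails in the case of interest, when orbits are positive-dimensional (e.g.\ rotations).
\begin{itemize}
\item By the equivariance argument you just gave, every point of $G(z)$ is stationary. So $gz$ is not isolated among equilibria and cannot be an attractor.
\item For $y$ near $gz$, the weights $W_t$ concentrate on the nearest orbit point $y^*$, which generically differs from $gz$. The field therefore points toward the orbit, not toward $gz$.
\end{itemize}
A trajectory entering a neighborhood of $gz$ thus ends at some point of $G(z)$ near $gz$. That at best shows $gz$ lies in the closure of $\Omega_f$, not in $\Omega_f$.

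The missing observation is much simpler, and it is the paper's argument. Since $gz$ is stationary for every $t$, the constant path $y_t\equiv gz$ solves the reverse ODE. The point $y_T=gz$ is an admissible initial condition because $\cN(0,I)$ has full support. Hence $gz\in\Omega_f$ directly. Equivalently, using your diffeomorphism remark: take the $y_T$ whose flow reaches exactly $gz$ at some $t_0>0$; stationarity keeps it there until $t=0$. No attractor argument is needed.
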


Proposition \ref{app_prop:virtual_augmentation} essentially states that under mild assumptions, an equivariant diffusion model $f$ will generate samples in the virtually augmented dataset $G(\bD)$. This is because the vision of the model $f$ is blurred due to its equivariance to $G$, so that it cannot distinguish points along the orbits $G(y)$ of the dataset points $y\in\bD$.   

The hypothesis $(i)$ states that the model $f$ is a global minimum of its training objective $\cL$. The hypothesis $(ii)$ restricts our attention to Lie groups $G$, as discussed above. The point $(iii)$ avoids the degenerate case where the initial point $y$ is equidistant to a subset of the orbit of the dataset $G(\bD)$. 
The point $(iv)$ is a slightly relaxed form of a technical assumption introduced in Theorem B.3 of \cite{kamb_analytic_2025}. Finally, the fixed-point hypothesis captures the fact that each point $z\in\bD$ is a local attractor of the score function, since the empirical distribution is discrete in our setting.

The proof of Proposition \ref{app_prop:virtual_augmentation} relies on the following observation:  the score function can be written as an integral over the orbits $G(z)$ of each data point $z\in\bD$, where each point $z$ is weighted by the distribution \begin{equation}
    W_t(z)=\frac{\cN\left(y|\sqrt{\alpha_t}z,(1-\alpha_t)I\right)}{\int_{G(\bD)}\cN\left(y|\sqrt{\alpha_t}z',(1-\alpha_t)I\right)\dz'}\;.
\end{equation} 
In the case where the group $G$ is finite, we can see that $W_t(z)$ acts as a softmax that peaks when $z^*$ as $t\rightarrow0$. In the more general case where $G$ is not finite, we can use a Laplace approximation to show that $W_t(z)$ concentrates the probability mass around the minimizer $z^*$ when $t\rightarrow0$. Therefore, the denoising trajectory is attracted toward the orbit $G(\bD)$.

\begin{lemma}[Laplace approximation]\label{app_prop:laplace_approximation}
    Let G denote a Lie group acting smoothly on $\Omega$, $\alpha_t$ a continuous positive noise schedule satisfying $\alpha_t\rightarrow_{t\rightarrow0}1$, $y\in\cY$ an arbitrary point, $d$ the dimension of $G(\bD)$, and $h$ a bounded continuous non-negative function on $G(\bD)$. Assume that $y$ has a unique closest point $y^*\in \mathring{G(\bD)}$, the interior of the orbit. Define $\beta_t=2\frac{1-\alpha_t}{\alpha_t}$ a temperature scaling. Then, we have
    \begin{align}\label{appeq:laplace_approx}
        \int_{G(\bD)}h(z)\cN\left(y|\sqrt{\alpha_t}z,(1-\alpha_t)I\right)\dz &\underset{t\rightarrow0}{=}  h(y^*)\; e^{-\|y^*-y\|^2/\beta_t}\;(2\pi\beta_t)^{d/2} \\
        &+o\left(\;e^{-\|y^*-y\|^2/\beta_t}\;\beta_t^{d/2}\right)\;.\notag
    \end{align}
\end{lemma}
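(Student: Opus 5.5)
We will run a Laplace expansion on the $d$-dimensional submanifold $G(\bD)$, tracking every prefactor so that the leading term can be compared with the right-hand side of Eq.~\ref{appeq:laplace_approx}. Denote the left-hand side by $I(t)$.

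\textbf{Step 1: rewrite the exponent with temperature $\beta_t$.} Completing the square gives
\begin{equation*}
\frac{\|y-\sqrt{\alpha_t}z\|^2}{2(1-\alpha_t)}=\frac{\|z-y/\sqrt{\alpha_t}\|^2}{\beta_t},\qquad \beta_t=2\frac{1-\alpha_t}{\alpha_t}.
\end{equation*}
Hence
\begin{equation*}
I(t)=\int_{G(\bD)}h(z)\,e^{-\phi_t(z)/\beta_t}\,\dz,\qquad \phi_t(z)=\|z-y_t\|^2,\quad y_t=y/\sqrt{\alpha_t}\to y.
\end{equation*}
Here $\cN(y\mid\cdot)$ is read as the Gaussian kernel, i.e.\ the factor $(2\pi(1-\alpha_t))^{-d_\cY/2}$ is treated as part of the normalization convention that cancels in $W_t$. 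This convention is the one the statement implicitly uses, and I will state it explicitly in the proof. Since $\beta_t\to0$ and $\phi_t\to\phi_0=\|z-y\|^2$ in $C^2$ on compact sets, this is a Laplace integral with a weakly moving phase.

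\textbf{Step 2: localize.} Since $y^*$ is the unique minimizer of $\phi_0$ on $G(\bD)$ and lies in the interior $\mathring{G(\bD)}$, fix a small geodesic ball $U$ around $y^*$. On $G(\bD)\setminus U$ we have $\phi_0\geq\phi_0(y^*)+\delta$ for some $\delta>0$. This needs compactness of the relevant part of the orbit, or an integrable tail because $\phi$ grows quadratically. Because $h$ is bounded, the contribution of $G(\bD)\setminus U$ is $O(e^{-(\phi_0(y^*)+\delta/2)/\beta_t})$, which is $o(e^{-\|y^*-y\|^2/\beta_t}\beta_t^{d/2})$.

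\textbf{Step 3: local expansion in normal coordinates.} Inside $U$, use Riemannian normal coordinates $u\in\bR^d$ centered at $y^*$, with volume density $1+O(|u|^2)$. Let $y^*_t$ be the minimizer of $\phi_t$; by the implicit function theorem it satisfies $y^*_t=y^*+O(1-\alpha_t)$. Expand
\begin{equation*}
\phi_t(u)=\phi_t(y^*_t)+\langle H_t u,u\rangle+O(|u|^3).
\end{equation*}
The Hessian is $H_t\to H=I-\langle y-y^*,\mathrm{II}\rangle$, where $\mathrm{II}$ is the second fundamental form of the orbit; $H$ is positive definite at a nondegenerate minimum. Rescale $u=\sqrt{\beta_t}\,v$ and apply dominated convergence, using continuity of $h$. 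This yields
\begin{equation*}
I(t)=h(y^*)\,e^{-\phi_t(y^*_t)/\beta_t}\,(\pi\beta_t)^{d/2}\det H^{-1/2}\,(1+o(1)).
\end{equation*}

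\textbf{Step 4: compare with the stated constant.} This step is the main obstacle. Two discrepancies with the statement have to be resolved.

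First, the shifted minimum. A first-order expansion in $1-\alpha_t$ gives
\begin{equation*}
\frac{\phi_t(y^*_t)-\|y^*-y\|^2}{\beta_t}\to-\tfrac12\langle y^*-y,y\rangle.
\end{equation*}
This produces a finite multiplicative factor, not $1$. I will first try to remove it by recentering the Gaussian at $\sqrt{\alpha_t}z$ and working with $\|y-\sqrt{\alpha_t}z\|^2$ directly, that is, by redefining the comparison point as $\sqrt{\alpha_t}y^*$. If that fails, the factor must be kept as part of the constant.

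Second, the determinant and the $2^{d/2}$ factor. The statement's $(2\pi\beta_t)^{d/2}$ corresponds to a phase $\phi/(2\beta_t)$ with unit Hessian. It therefore matches Step 3 exactly only when $H=I$, i.e.\ when the orbit is totally geodesic at $y^*$ or $y\in G(\bD)$, and when the measure $\dz$ is normalized accordingly.

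So the concrete plan is as follows. Prove the fully explicit expansion of Step 3. Then verify that the extra factors equal $1$ under the hypotheses and conventions intended here, namely the kernel normalization and a flat-orbit or rescaled-measure convention for $\dz$, and make those conventions explicit. If some factor cannot be made equal to $1$, record it as an explicit $z$-independent multiplier, and note the correction needed in Eq.~\ref{appeq:laplace_approx}.
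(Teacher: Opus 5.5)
Your Steps 1--3 are sound, and the discrepancies you isolate in Step 4 are real. No convention removes them, because the lemma as stated is false, so your plan necessarily ends in its second branch (a corrected statement).

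\textbf{A counterexample.} Take $\cY=\bR^2$, $G=\bR$ acting by translation along the first axis, and $\bD=\{0\}$, so that $G(\bD)=\bR\times\{0\}$, $d=1$ and $H=I$. Take $h\equiv 1$ and $y=(0,a)$, so that $y^*=0$. With $\cN$ read as the unnormalized kernel, the integral equals $e^{-a^2/2}\,e^{-a^2/\beta_t}(\pi\beta_t)^{1/2}$ exactly. That is $2^{-1/2}e^{-a^2/2}$ times the claimed leading term, for every $t$. With the normalized density of the paper's notation section there is a further factor $(2\pi(1-\alpha_t))^{-1}\to\infty$. Since $e^{-a^2/2}$ depends on $y$, no normalization of $\dz$ absorbs it. Recentering at $\sqrt{\alpha_t}y^*$ only helps if you change the right-hand side to $e^{-\|y-\sqrt{\alpha_t}y^*\|^2/(2(1-\alpha_t))}$.

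\textbf{Hypotheses you must make explicit.} Two of your tacit assumptions are genuinely needed and absent from the statement:
\begin{itemize}
\item $H$ must be nondegenerate. At the cusp of the evolute of an ellipse the closest point is unique, but the phase is quartic and the power of $\beta_t$ changes.
\item The orbit must be closed, with an integrable Gaussian tail. A bounded orbit of infinite volume, such as an irrational winding on a torus, makes the left-hand side infinite.
\end{itemize}
Under these, the correct statement in the kernel convention is
\begin{equation*}
\int_{G(\bD)}h(z)\,e^{-\frac{\|y-\sqrt{\alpha_t}z\|^2}{2(1-\alpha_t)}}\,\dz=e^{-\frac12\langle y-y^*,y\rangle}\,(\det H)^{-1/2}\,e^{-\|y^*-y\|^2/\beta_t}\,(\pi\beta_t)^{d/2}\,\bigl(h(y^*)+o(1)\bigr).
\end{equation*}
Keep the form $h(y^*)+o(1)$ rather than $h(y^*)(1+o(1))$, since the lemma is later applied with $h(z)=\|y^*-z\|$, which vanishes at $y^*$.

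\textbf{Where the paper's proof goes wrong.} It follows your skeleton but obtains the stated constant only by dropping exactly these factors:
\begin{itemize}
\item Its first line omits the $(2\pi(1-\alpha_t))^{-d_\cY/2}$ normalization, and writes $y/\alpha_t$ for $y/\sqrt{\alpha_t}$.
\item It asserts $J_\pm(t)=h(y^*)(2\pi\beta_t)^{d/2}+o(\beta_t^{d/2})$, whereas the phase $(\|z-y\|^2-\|y^*-y\|^2)/\beta_t$ gives $(\pi\beta_t)^{d/2}(\det H)^{-1/2}$.
\item Its last line replaces $e^{-\|y^*-y/\sqrt{\alpha_t}\|^2/\beta_t}$ by $e^{-\|y^*-y\|^2/\beta_t}$. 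The exponents differ by $O(1-\alpha_t)$, which after division by $\beta_t$ tends to $\frac12\langle y-y^*,y\rangle$, not $0$.
\end{itemize}

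\textbf{What to borrow from the paper.} Its factorization is worth adopting. The exact identity
\begin{equation*}
\|z-y/\sqrt{\alpha_t}\|^2-\|y^*-y/\sqrt{\alpha_t}\|^2=\|z-y\|^2-\|y^*-y\|^2+2\tfrac{\sqrt{\alpha_t}-1}{\sqrt{\alpha_t}}\langle z-y^*,y\rangle
\end{equation*}
leaves a $t$-independent phase minimized at the fixed point $y^*$. This phase is multiplied by a tilt $e^{c_t\langle z-y^*,y\rangle}$, with $c_t=\frac{\sqrt{\alpha_t}}{1+\sqrt{\alpha_t}}\in[0,\frac12]$. The tilt equals $1$ at $y^*$ and can be squeezed between $e^{\mp|\langle z-y^*,y\rangle|}$. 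This replaces your implicit-function-theorem tracking of $y^*_t$ and $H_t$ by a fixed-phase Laplace argument. It also isolates the shift factor in the explicit prefactor $e^{-\|y^*-y/\sqrt{\alpha_t}\|^2/\beta_t}$.

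\textbf{Why the corrected version suffices downstream.} The multiplier $2^{-d/2}e^{-\frac12\langle y-y^*,y\rangle}(\det H)^{-1/2}$, like the kernel normalization, does not depend on $h$. It therefore cancels in the ratio defining $f(y_t,t)$, which is how the lemma is used later. Your corrected version is the right thing to prove.
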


\begin{proof}
     Let us denote by $I(t)=\int_{G(\bD)}h(z)\cN\left(y|\sqrt{\alpha_t}z,(1-\alpha_t)I\right)\dz$ the left term in Eq. \ref{appeq:laplace_approx}. Informally, the proof of Lemma \ref{app_prop:laplace_approximation} then relies on the two following approximations:
    \begin{equation*}
        I(t)=\int_{G(\bD)}h(z)e^{-\|z-\frac{y}{\alpha_t}\|^2/\beta_t}\dz\ 
        \approx \int_{G(\bD)}h(z)e^{-\|z-y\|^2/\beta_t}\dz 
        \approx h(y^*)e^{-\|y^*-y\|^2/\beta_t}(2\pi\beta_t)^{d/2}\;.
    \end{equation*}
    The first approximation comes from integrating $\|z-\frac{y}{\alpha_t}\|^2=\|z-y\|^2+O(\beta_t)$ over the orbit $G(\bD)$, and the second approximation is an extension of Laplace approximation on measurable subsets of $\bR^d$. It expresses that the Gaussian kernel $e^{-\|z-y\|^2/\beta_t}$ concentrates mass at the minimizer $y^*$, with a curvature term $(2\pi\beta_t)^{d/2}$.

    Let us now prove these two approximations. First observe that
    \begin{equation*}
        \|z-\frac{y}{\alpha_t}\|^2 - \|y^*-\frac{y}{\alpha_t}\|^2=\|z-y\|^2-\|y^*-y\|^2+2\frac{\sqrt{\alpha_t}-1}{\sqrt{\alpha_t}}\langle z-y^*|y\rangle,
    \end{equation*}
    so that by exponentiation and integration, we have
    \begin{equation*}
        \int_{G(\bD)}h(z)e^{-\|z-\frac{y}{\alpha_t}\|^2/\beta_t}\dz=e^{-\|y^*-\frac{y}{\alpha_t}\|^2/\beta_t}  \underbrace{\int_{G(\bD)}h(z) e^{\frac{\sqrt{\alpha_t}}{2(1+\sqrt{\alpha_t})}\langle y^*-z|y\rangle}\;  e^{(\|y^*-y\|^2-\|z-y\|^2)/\beta_t}\dz}_{J(t)}.
    \end{equation*}
    
    The noise schedule $\alpha_t$ is bounded in $[0,1]$, so that $e^{-|\langle y^*-z|y\rangle|}\leq e^{\frac{\sqrt{\alpha_t}}{2(1+\sqrt{\alpha_t})}\langle y^*-z|y\rangle}\leq e^{|\langle y^*-z|y\rangle|}$. Let us define 
    \begin{equation*}
        J_-(t)=\int_{G(\bD)}h(z) \;e^{-|\langle y^*-z|y\rangle|}\;  e^{(\|y^*-y\|^2-\|z-y\|^2)/\beta_t}\dz\;,
    \end{equation*}
    a lower bound of $J(t)$. 

    Then we can apply Corollary 3.4 in \cite{kirwin_higher_2010} to $J(t)$ in order to obtain that $J_-(t)\underset{t\rightarrow0}{=}h(y^*)(2\pi\beta_t)^{d/2}+o(\beta_t^{d/2})$. Indeed, the conditions of this Corollary are met (modulo a change of variable), since $G(\bD)$ is a measurable set which contains $y^*$ as an interior point, $z\mapsto\|y^*-y\|^2-\|z-y\|^2$ is twice differentiable and attains its unique minimum value of 0 at $y^*$, $z\mapsto h(z) e^{-|\langle y^*-z|y\rangle|}$ is a continuous function on $G(\bD)$ evaluating at $h(y^*)$ on $y^*$, and $1/\beta_t\underset{t\rightarrow0}{\rightarrow}+\infty$.

    Likewise, we can also prove that 
    \begin{equation*}
        J_+(t)=\int_{G(\bD)}h(z) \;e^{|\langle y^*-z|y\rangle|}\;  e^{(\|y^*-y\|^2-\|z-y\|^2)/\beta_t}\dz \underset{t\rightarrow0}{=} h(y^*)(2\pi\beta_t)^{d/2}+o(\beta_t^{d/2})\;.
    \end{equation*}
    Therefore, we deduce by squeezing that $J(t)\underset{t\rightarrow0}{=}h(y^*)(2\pi\beta_t)^{d/2}+o(\beta_t^{d/2})$, and we can conclude
    \begin{equation*}
        I(t)=e^{-\|y^*-\frac{y}{\alpha_t}\|^2/\beta_t}J(t) \underset{t\rightarrow0}{=} \; h(y^*)\;e^{-\|y^*-y\|^2/\beta_t}\;(2\pi\beta_t)^{d/2}+o\left(\;e^{-\|y^*-y\|^2/\beta_t}\;\beta_t^{d/2}\right)\;.
    \end{equation*}
\end{proof}

We can now prove Proposition \ref{app_prop:virtual_augmentation}.

\begin{proof}[Proof of Proposition \ref{app_prop:virtual_augmentation}]
    By theorem B.3 in \cite{kamb_analytic_2025}, the score function by the model $f$ can be written
    \begin{equation}\label{appeq:diffusion_approximation}
        f(y_t,t)=-\frac{1}{1-\alpha_t}\frac{\int_{G(\bD)}(y-\sqrt{\alpha_t}z)\cN(y|\sqrt{\alpha_t}z,(1-\alpha_t)I)\dz}{\int_{G(\bD)}\cN(y|\sqrt{\alpha_t}z,(1-\alpha_t)I)\dz}
        = \frac{1}{1-\alpha_t}(y_t-y_t^*)+o\left(\frac{1}{1-\alpha_t}\right), 
    \end{equation}
    where the second equality is a corollary of Lemma \ref{app_prop:laplace_approximation} to be justified later. Then, hypothesis $(iv)$ implies that $\gamma_t f(y_t,t)=\partial_t y_t+\gamma_t y_t$ is bounded, which in turn implies $y_t-y^*_t=(1-\alpha_t) f(y_t,t)\rightarrow0$. Since $y^*_t\in G(\bD)$, which is compact (by hypothesis $(ii)$ and property of Lie groups), and $y_t$ converge (by hypothesis $(iv)$), then $y_t^*$ converge and $\lim_{t\rightarrow0}y_t=\lim_{t\rightarrow0}y_t^*\in G(\bD)$.  

    Therefore, we only need to prove the approximation in Eq. \ref{appeq:diffusion_approximation}. Noting $d$ the dimension of $G(\bD)$, $y_t^*$ the unique minimizer of $\ell(y_t,G(\bD))$ (by hypothesis $(iii)$), and $I(t)=\int_{G(\bD)}(y-\sqrt{\alpha_t}z)\cN(y|\sqrt{\alpha_t}z,(1-\alpha_t)I)\dz$, we can write the following.
    \begin{align*}
        I(t)-(y_t-\sqrt{\alpha_t}y^*_t)(2\pi\beta_t)^{d/2}&=\int_{G(\bD)}(y-\sqrt{\alpha_t}z)\cN(y|\sqrt{\alpha_t}z,(1-\alpha_t)I)\dz \\
        &-\int_{G(\bD)}(y-\sqrt{\alpha_t}y^*)\cN(y|\sqrt{\alpha_t}z,(1-\alpha_t)I)\dz \\
        &= \sqrt{\alpha_t}\int_{G(\bD)}(y^*-z)\cN(y|\sqrt{\alpha_t}z,(1-\alpha_t)I)\dz \\
        \|I(t)-(y_t-\sqrt{\alpha_t}y^*_t)(2\pi\beta_t)^{d/2}\|&\leq \sqrt{\alpha_t}\int_{G(\bD)}\|y^*-z\|\cN(y|\sqrt{\alpha_t}z,(1-\alpha_t)I)\dz
    \end{align*}
    Moreover, the function $z\mapsto\|y^*-z\|$ is bounded, continuous and non-negative on $G(\bD)$, so that the conditions of Lemma \ref{app_prop:laplace_approximation} are satisfied. Therefore, we deduce by bounding that $I(t)-(y_t-\sqrt{\alpha_t}y^*_t)(2\pi\beta_t)^{d/2}=o(\beta_t^{d/2})$, which entails $I(t)=(y_t-y^*_t)(2\pi\beta_t)^{d/2}+o(\beta_t^{d/2})$.

    On the other side, we also deduce from Lemma \ref{app_prop:laplace_approximation} that $\int_{G(\bD)}\cN(y|\sqrt{\alpha_t}z,(1-\alpha_t)I)\dz=(2\pi\beta_t)^{d/2}+o(\beta_t^{d/2})$. Therefore, we have
    \begin{equation*}
        f(y_t,t)=\frac{1}{1-\alpha_t}\frac{(y_t-y_t^*)(2\pi\beta_t)^{d/2}+o(\beta_t^{d/2})}{(2\pi\beta_t)^{d/2}+o(\beta_t^{d/2})}=\frac{1}{1-\alpha_t}(y_t-y_t^*)+o\left(\frac{1}{1-\alpha_t}\right)\;.
    \end{equation*}
    This shows that $\Omega_f \subset G(\bD)$.
    For the reverse inclusion, we will use the assumption that each point $z\in\bD$ is a fixed point of the model $f$. More precisely, assume that $y_t=g(z)\in\bD$ with $g\in G$ and $z\in\bD$. Then $\partial_t y_t=-\gamma_t (g(z) - f(g(z),t)=-\gamma_T g(z-f(z,t))=0$ by equivariance of $f$ and by the fixed point hypothesis. Therefore, a trajectory starting at $y_T\in G(\bD)$ stays at $y_T$, which is hence a limit point.
    
    This establishes $\Omega_f=G(\bD)$ and concludes the proof of Proposition \ref{app_prop:virtual_augmentation}.
\end{proof}

Proposition \ref{app_prop:virtual_augmentation} establishes that an equivariant diffusion model $f$ generates samples in $G(\bD)$. Therefore, we can identify its prediction space $\Omega_f$ with $G(\bD)$.  If the symmetry group $G$ enforced by the architecture is aligned with the symmetries of the manifold $\Omega$, then the effective dimension of the learning problem is reduced from $d_\Omega$ to $d_{\Omega/G}$.

\begin{prop}[Representation gap for an equivariant function]\label{app_prop:efficiency_equivariant} 
      Assume that $\Omega$ is a compact $d_{\Omega}$-dimensional Riemannian manifold without boundary and $p$ is continuous and strictly positive on $\Omega$. Assume further that $G$ is a Lie group of isometries acting smoothly, freely and properly on $\Omega$, and the orbits $G(y)$ have the same Riemannian volume $|G|$ for each point $y\in\Omega$. Finally, assume that $f$ is an equivariant model satisfying $\Omega_f=G(\bD)$.
      Then the representation gap of $f$ is
      \begin{equation}
    \begin{aligned}
    \text{(i.i.d.)}\quad 
    \cR_n &\sim_{\mathbb{P}} \frac{|G| J_{d_{\Omega/G}} \cV_{d_{\Omega/G}}}{n^{2/d_{\Omega/G}}}
    \qquad\qquad
    \text{(optimal)}\quad 
    \cR_n^{\star} \sim \frac{|G| J_{d_{\Omega/G}}^* \cV_{d_{\Omega/G}}^*}{n^{2/d_{\Omega/G}}}
    \end{aligned}
    \end{equation}
      where the constants are computed with respect to the quotient metric $\ell_{\Omega/G}$ on $\Omega/G$.
\end{prop}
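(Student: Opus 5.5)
The plan is to reduce the representation gap of $f$ to a quantization problem on the quotient manifold $\Omega/G$, and then invoke Proposition \ref{app_prop:random_efficiency_manifold} (i.i.d.) and Proposition \ref{app_prop:optimal_efficiency_manifold} (optimal) on $\Omega/G$.

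\textbf{Step 1: the quotient is a good manifold.} Since $G$ acts smoothly, freely and properly by isometries, the quotient manifold theorem makes $\pi:\Omega\to\Omega/G$ a smooth Riemannian submersion. The quotient metric makes $\Omega/G$ a compact Riemannian manifold without boundary, of dimension $d_{\Omega/G}=d_\Omega-\dim G$, with squared quotient distance $\ell_{\Omega/G}(\pi y,\pi z)=\inf_{g\in G}\ell(y,g z)$.

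\textbf{Step 2: reduce the gap to the quotient.} Since $\Omega_f=G(\bD)$, for every $y$ we have
\begin{equation*}
\inf_{w\in G(\bD)}\ell(y,w)=\min_{z\in\bD}\inf_{g\in G}\ell(y,gz)=\min_{z\in\bD}\ell_{\Omega/G}(\pi y,\pi z),
\end{equation*}
so the integrand is constant along orbits. By the coarea formula for the Riemannian submersion $\pi$, with fibres of constant volume $|G|$,
\begin{equation*}
\cR(\Omega,\Omega_f)=|G|\int_{\Omega/G}\min_{z\in\bD}\ell_{\Omega/G}(u,\pi z)\,\bar p(u)\,\mathrm{d}u,
\end{equation*}
where $\bar p(u)$ is the average of $p$ over the fibre $\pi^{-1}(u)$. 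The pushforward of $p$ under $\pi$ then has density $|G|\bar p$; a normalisation convention is needed at this point to match the stated factor $|G|$. Continuity and strict positivity of $p$, together with compactness, give the same properties for $\bar p$.

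\textbf{Step 3: apply the earlier results on the quotient.} If $\bD$ is i.i.d. from $p$, then $\pi(\bD)$ is i.i.d. from the pushforward density on $\Omega/G$. Proposition \ref{app_prop:random_efficiency_manifold}, applied to the compact boundaryless manifold $\Omega/G$, gives the i.i.d. equivalent with constants $J_{d_{\Omega/G}}\cV_{d_{\Omega/G}}$. In the optimal case, the map $\bD\mapsto\pi(\bD)$ is onto $n$-point subsets of $\Omega/G$, so $\cR^\star_n$ equals $|G|$ times the optimal quantization error on $\Omega/G$. Proposition \ref{app_prop:optimal_efficiency_manifold} (Gruber's theorem) then yields the optimal equivalent. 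If $\pi(\bD)$ has fewer than $n$ distinct points, this can only increase the error, so the infimum is unaffected.

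\textbf{Main obstacle.} The delicate step is Step 2. It requires checking that the infimum over orbit points is exactly the quotient distance; this uses that $G$ acts by isometries and that orbits are closed by properness, so the infimum is attained. It also requires checking that the coarea decomposition with constant fibre volume produces exactly the factor $|G|$ and the correct quotient density, so that the constants in $\cV$ are computed for the right density. Everything else is a direct transfer of Propositions \ref{app_prop:optimal_efficiency_manifold} and \ref{app_prop:random_efficiency_manifold}.
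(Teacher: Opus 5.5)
Your proposal follows the same route as the paper (factor the integral over orbits using the isometric, constant-volume action, reduce to quantization on $\Omega/G$ with the quotient metric, then transfer Propositions~\ref{app_prop:optimal_efficiency_manifold} and~\ref{app_prop:random_efficiency_manifold}), and it is more careful than the paper, which never says which ``induced density'' it uses on $\Omega/G$. The normalisation you flag is a real defect of the statement, not a matter of convention: writing $\bar p$ for the fibre average and $d=d_{\Omega/G}$, the optimal constant is exactly $|G|J^*_d\cV^*_d(\bar p)=J^*_d\cV^*_d(|G|\bar p)$ because $\cV^*_d$ is $1$-homogeneous, whereas the projected i.i.d.\ samples follow the pushforward $|G|\bar p$, so your argument yields $J_d\cV_d(|G|\bar p)=|G|^{1-2/d}J_d\cV_d(\bar p)$ and no single convention recovers the stated factor $|G|$ in both formulas (for $d=2$, $\cV_2$ is just the volume of $\Omega/G$, so the stated i.i.d.\ constant is off by exactly $|G|$).
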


\begin{proof}
    The idea is to apply the Fubini theorem to factorize the integration over each orbit. We have $\Omega_f=G(\bD)$. Therefore, using the quotient decomposition of the Riemannian measure and the fact that the action of $G$ is isometric with constant-volume orbits (see for instance \cite{gallot_riemannian_1990}), we obtain
    \begin{equation*}
        \cR(\Omega,\Omega_f) = \int_\Omega \min_{z\in G(\bD)}\ell(y,z)p(y)\dy =|G|\int_{\Omega/G}\min_{z\in\bD}\ell_{\Omega/G}(y,z)p(y)\dy,
    \end{equation*}
     where $p$ now denotes the induced density on the quotient space $\Omega/G$.
    Therefore, we are in the setting of Propositions \ref{app_prop:optimal_efficiency_manifold} and \ref{app_prop:random_efficiency_manifold}, since $\Omega/G$ is a manifold and $\ell_{\Omega/G}$ is the quotient metric on $\Omega/G$ and $\dy$ is the induced Riemannian measure on $\Omega/G$. We can then conclude
    \begin{equation*}
        \cR_n \sim_\bP \frac{|G| J_{d_{\Omega/G}} \cV_{d_{\Omega/G}}}{n^{2/d_{\Omega/G}}}\;,   
      \end{equation*}
    and likewise 
    \begin{equation*}
        \cR^*_n \frac{|G| J_{d_{\Omega/G}}^* \cV_{d_{\Omega/G}}^*}{n^{2/d_{\Omega/G}}}\;.   
      \end{equation*}
\end{proof}

Proposition \ref{app_prop:efficiency_equivariant} features an asymptotic evolution similar to the general case described in Propositions \ref{app_prop:optimal_efficiency_manifold} and \ref{app_prop:random_efficiency_manifold}. In particular, we recover these formulas respectively when the group $G$ contains only the identity.

\section{Conditional tasks}\label{appsec:conditional}

\subsection{Discrete-class conditioning}

We now extend these results to the more general case of conditional tasks. Both $\Omega$ and $\bD$ are subsets of $\cX\times\cY$. Let us first focus on the case where $\Omega_\cX$ is finite and covered by the input dataset $\bD_\cX$. It is clear that for each input $x\in\bD_\cX$, the Propositions \ref{app_prop:optimal_efficiency_manifold}, \ref{app_prop:random_efficiency_manifold} and \ref{app_prop:efficiency_equivariant} apply to the conditional dataset $\bD_x$ and the conditional manifold $\Omega_x$. We summarize this observation in the following Proposition.

\begin{prop}[Representation gap for discrete conditional generation]
\label{app_prop:efficiency_discrete} 
Assume that the input space $\Omega_\cX$ is finite and that $\bD_\cX=\Omega_\cX$. Assume further that each conditional manifold $\Omega_x$ is a compact Riemannian manifold without boundary, with common dimension $d_{\Omega}$, and conditional density $p_x$ that is continuous and strictly positive on $\Omega_x$. For each $x\in\Omega_\cX$, assume that $G$ is a Lie group of isometries acting smoothly, freely and properly on $\Omega_x$, and that the orbits $G(y)$ have constant Riemannian volume $|G|$ for each point $y\in\Omega_x$. Finally, assume that $f$ is an equivariant model satisfying $\Omega_f=G(\bD)$.
Then the representation gap of $f$ satisfies
\begin{equation}\label{appeq:efficiency_dicrete_conditional}
\begin{aligned}
\text{(i.i.d.)}\quad 
\cR_n
&\sim_{\mathbb{P}}
\frac{|G|}{n^{2/d_{\Omega/G}}}
\sum_{x\in\Omega_\cX}
J_x \cV_x
\qquad\qquad
\text{(optimal)}\quad 
\cR_n^{\star}
\,\sim\,
\frac{|G|}{n^{2/d_{\Omega/G}}}
\sum_{x\in\Omega_\cX}
J_x^* \cV_x^*
\end{aligned}
\end{equation}
where $J_x$ and $\cV_x$ are computed on the quotient manifold $\Omega_x/G$ with respect to the quotient metric induced by $\ell$.
\end{prop}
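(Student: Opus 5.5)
The plan is to decompose the representation gap along the finite set of conditioning values, apply Proposition \ref{app_prop:efficiency_equivariant} on each fibre, and then recombine. Since $\Omega_\cX$ is finite, the measure $p$ on $\Omega$ splits as $p(x,y)=p_\cX(x)\,p_x(y)$, where $p_\cX$ is a probability vector on $\Omega_\cX$ with strictly positive entries. The prediction space also splits: the model is conditioned on $x$, and equivariance acts on the target variable, so $\Omega_f\cap(\{x\}\times\cY)=\{x\}\times G(\bD_x)$. Because the query and the prediction share the same input $x$, the nearest prediction point to $(x,y)$ is taken inside this fibre. This gives
\begin{equation*}
\cR(\Omega,\Omega_f)=\sum_{x\in\Omega_\cX}p_\cX(x)\int_{\Omega_x}\min_{z\in G(\bD_x)}\ell(y,z)\,p_x(y)\,\dy .
\end{equation*}
Each summand is exactly the non-conditional representation gap of the equivariant model on $(\Omega_x,p_x)$ with dataset $\bD_x$. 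The hypotheses of Proposition \ref{app_prop:efficiency_equivariant} are assumed fibrewise, so that proposition applies to each term. Since $\bD_\cX=\Omega_\cX$ is assumed, every $\bD_x$ is nonempty.

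The main obstacle is the sample size attached to each fibre. In the i.i.d.\ setting, $n_x=|\bD_x|$ is multinomial with $n_x/n\to p_\cX(x)$ almost surely by the law of large numbers. Given $n_x$, the points of $\bD_x$ are i.i.d.\ from $p_x$. I would condition on $(n_x)_x$ and apply Proposition \ref{app_prop:random_efficiency_manifold} through Proposition \ref{app_prop:efficiency_equivariant}, giving $n_x^{2/d}\cR_x\to_\bP |G|J_{d}\cV_{d}(p_x)$ with $d=d_{\Omega/G}$. Since $n_x\to\infty$ almost surely, this conditional convergence transfers to unconditional convergence; a dominated convergence argument on the conditional probabilities handles that step. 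Writing $n_x^{-2/d}=n^{-2/d}(n_x/n)^{-2/d}$ then gives
\begin{equation*}
n^{2/d}\cR_n\to_\bP |G|\sum_x p_\cX(x)^{1-2/d}J_d\cV_d(p_x)\triangleq|G|\sum_x J_x\cV_x ,
\end{equation*}
where the weight factors $p_\cX(x)^{1-2/d}$ are absorbed into the per-class constants $J_x\cV_x$. The finite sum of convergences in probability converges in probability.

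In the optimal setting, the infimum over datasets also optimizes the allocation $(n_x)$ with $\sum_x n_x=n$, and the total then decouples into per-fibre optimal quantization problems. For any fixed allocation with $n_x/n\to w_x$, Proposition \ref{app_prop:efficiency_equivariant} gives $n^{2/d}\cR^*\to|G|\sum_x p_\cX(x)w_x^{-2/d}J^*_d\cV^*_d(p_x)$. The remaining step is to justify exchanging the limit with the infimum over allocations. For the upper bound, I would fix the minimizing weights $w^*$ of this convex function on the simplex. For the lower bound, I would use a compactness argument on $(n_x/n)$: extract a subsequence along which the ratios converge, and note that allocations with some $w_x=0$ make the corresponding term blow up, so they cannot be optimal. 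The optimized value is again of the form $|G|\sum_x J^*_x\cV^*_x$ once the optimal weights (a Hölder-type formula) are absorbed into the constants. The result is the claimed $n^{-2/d_{\Omega/G}}$ equivalence with a finite sum of per-class constants.
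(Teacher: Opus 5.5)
Your argument is correct and has the same skeleton as the paper's proof. That proof consists only of the decomposition $\cR(\Omega,\Omega_f)=\sum_{x}\cR(\Omega_x,(\Omega_f)_x)$ followed by an application of Proposition~\ref{app_prop:efficiency_equivariant} on each fibre. The paper stops there, implicitly treating every fibre as if it carried $n$ samples. But Proposition~\ref{app_prop:efficiency_equivariant} on $\Omega_x$ is an asymptotic in $n_x=|\bD_x|$, not in $n$. You handle exactly this gap:
\begin{itemize}
\item in the i.i.d.\ case, you condition on the multinomial counts, use $n_x/n\to p_\cX(x)$, and apply dominated convergence to the conditional probabilities;
\item in the optimal case, you optimize the allocation and exchange limit and infimum: the upper bound comes from $w^*$, the lower bound from compactness, and boundary allocations are excluded because their term blows up.
\end{itemize}
Your version is therefore strictly more complete than the paper's, and it is what actually justifies a single global normalization $n^{-2/d}$ with $d=d_{\Omega/G}$. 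In the i.i.d.\ case there is also a one-stroke alternative. The proof of Proposition~\ref{app_prop:random_efficiency_manifold} is purely local, so you can run it directly on the disjoint union of the fibres with the full sample of size $n$. This bypasses the conditioning on $n_x$, at the cost of hiding how the class proportions enter.

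Your argument also shows what the per-class constants must be, and you should make this explicit.

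In the i.i.d.\ case, $p_\cX(x)^{1-2/d}J_d\cV_d(p_x)=J_d\int_{\Omega_x/G}p(x,\cdot)^{(d-2)/d}$. So $\cV_x$ is just $\cV_d$ of the unnormalized fibre density, and it depends only on $\Omega_x/G$, as the statement says.

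In the optimal case, the H\"older allocation gives the limit $|G|J_d^*\bigl(\sum_x a_x\bigr)^{(d+2)/d}$ with $a_x=\int_{\Omega_x/G}p(x,\cdot)^{d/(d+2)}$. This is Zador's constant for the disjoint union of the fibres. Your absorbed terms are $J^*_x\cV^*_x=J_d^*\,a_x\bigl(\sum_{x'}a_{x'}\bigr)^{2/d}$, which depend on every fibre. Since $s\mapsto s^{(d+2)/d}$ is not additive, no family of constants each computed on $\Omega_x/G$ alone can reproduce this limit in general. You should therefore state that the optimal formula holds only with these coupled constants. This is a weakness of the statement, concealed by the paper's two-line proof, not of your reasoning.
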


\begin{proof}
Since $\Omega_\cX$ is finite, the representation gap decomposes as a finite sum over conditional manifolds:
\begin{equation*}
    \cR(\Omega,\Omega_f) = \sum_{x\in\Omega_\cX} \cR(\Omega_x,(\Omega_f)_x).
\end{equation*}
We conclude by applying Proposition \ref{app_prop:efficiency_equivariant} independently on each conditional manifold $\Omega_x$.
\end{proof}

Note that Proposition \ref{app_prop:efficiency_discrete} naturally generalizes to the setting where the conditional manifolds $\Omega_x$ have different dimensions $d_x$ for each $x\in\Omega_\cX$. In this case, the representation gap is determined by the conditional manifolds with the highest dimension. In particular, the intrinsic dimension becomes $d=\max_{x\in\Omega_\cX}d_x$.

\subsection{Continuous conditioning}

We now turn to the case where $\Omega_\cX$ is continuous. Clearly, we require some result on how $f$ behaves outside the training data $\bD_\cX$. We assume that $f$ is Lipschitz with constant $L$, which is a standard hypothesis in neural network analysis.

We focus on supervised prediction. Each input $x\in\Omega_\cX$ is associated with a unique target $y(x)\in\Omega_\cY$, so that the observation manifold $\Omega\subset\Omega_\cX\times\Omega_\cY$ can be identified with the graph of the function $y:\Omega_\cX\rightarrow\Omega_\cY$. In particular, the intrinsic dimension of $\Omega$ coincides with that of the input manifold, \textit{i.e.}, $d_\Omega=d_{\Omega_\cX}$, independently of the dimension of $\cY$. We further assume that the model $f$ generates a unique prediction $f(x)$ for each input $x\in\Omega_\cX$, so that the prediction manifold $\Omega_f$ can similarly be identified with the graph of $f$. In this context, the conditional representation gap is defined by
\begin{equation}\label{appeq:representation_gap_conditional}
    \cR(\Omega,\Omega_f)=\int_{\Omega_\cX}\min_{z'\in\Omega_f}\ell(z,z')\,p(x)\,dx,
\end{equation}
where $z=(x,y(x))\in\Omega$ and $z'=(x',f(x'))\in\Omega_f$. We denote by $\ell_\cX$ and $\ell_\cY$ the metrics induced by $\ell$ on $\cX$ and $\cY$ respectively.

We now study how to generalize the result of Proposition \ref{app_prop:efficiency_discrete} to the conditional setting with continuous conditioning. It is unclear whether we can derive a clean asymptotic equivalent of the representation gap in this case, since the geometry of $\Omega$ becomes critical due to the coupling between input and target. However, the next Proposition introduces an upper bound that follows the form introduced in Propositions
\ref{app_prop:optimal_efficiency_manifold}, \ref{app_prop:random_efficiency_manifold}, \ref{app_prop:efficiency_equivariant} and \ref{app_prop:efficiency_discrete}.

\begin{prop}[Conditional representation gap of an equivariant model]\label{app_prop:conditional_efficiency_equivariant}
    Assume that $\Omega$ is a compact $d_{\Omega}$-dimensional Riemannian manifold without boundary and $p$ is continuous and strictly positive on $\Omega$. Assume also that $G$ is a Lie group of isometries acting smoothly, freely and properly on $\Omega_\cX$, and that the orbits $G(x)$ have constant Riemannian volume $|G|$ for each point $x\in\cX$. Assume that $\ell$ is additively separable. Finally, assume that $f$ is an equivariant model satisfying $\Omega_f=G(\bD)$, which is furthermore L-lipschitz with constant $L>0$.
    Then the representation gap satisfies
    \begin{equation}\label{appeq:efficiency_conditional}
    \begin{aligned}
    \text{(i.i.d.)}\quad 
    \cR_n &= O_{\mathbb{P}} \left(\frac{1}{n^{2/d}}\right)
    \qquad\qquad
    \text{(optimal)}\quad 
    \cR_n^{\star} &= O\left(\frac{1}{n^{2/d}}\right)
    \end{aligned}\;,
    \end{equation}
    where $\Omega_\cX/G$ denotes the quotient space of $\Omega_\cX$ by the symmetry group $G$, and $d=d_{\Omega_\cX/G}$ denotes the dimension of $\Omega_\cX/G$.
\end{prop}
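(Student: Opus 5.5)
The plan is to reduce the conditional representation gap to an unconditional quantization error on the input quotient $\Omega_\cX/G$, and then invoke Proposition \ref{app_prop:efficiency_equivariant}. The sketch in the main text (proof of Theorem \ref{prop:efficiency_conditional}) already gives the outline.

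\textbf{Step 1: pointwise bound.} Fix $x\in\Omega_\cX$ and write $z=(x,y(x))$. The candidate point in $\Omega_f$ will be chosen among the augmented training inputs. Let $\hat x=\argmin_{x'\in G(\bD_\cX)}\ell_\cX(x,x')$.
\begin{itemize}
\item \emph{Interpolation on the orbits.} The hypothesis $\Omega_f=G(\bD)$, together with equivariance, gives $f(g(x_i))=g(f(x_i))=g(y(x_i))$ for every training input $x_i$. I use here that the target map is itself $G$-equivariant, i.e.\ the symmetry is a symmetry of the task. Hence $f(\hat x)=y(\hat x)$.
\item \emph{Choice of $z'$.} I take $z'=(x,f(x))\in\Omega_f$, i.e.\ I compare at the same input. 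Additive separability then gives $\ell(z,z')=\ell_\cY(y(x),f(x))$.
\item \emph{Triangle inequality.} Insert $y(\hat x)=f(\hat x)$ between $y(x)$ and $f(x)$. Use the Lipschitz property of $f$, and also a Lipschitz property of $y$ with some constant $L_y$, which I would add as one of the mild regularity assumptions.
\end{itemize}
For squared distances the triangle inequality costs a factor $2$, so I obtain
\begin{equation*}
\min_{z'\in\Omega_f}\ell(z,z')\le C\,\ell_\cX(x,\hat x),\qquad C=2(L^2+L_y^2).
\end{equation*}
Alternatively, choosing $z'=(\hat x,f(\hat x))$ reproduces the constant $(1+L)$ of the main text, up to how $L$ is measured for squared costs. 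The exact constant is irrelevant for an $O(\cdot)$ statement.

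\textbf{Step 2: integrate.} Integrating the pointwise bound against $p(x)\,dx$ gives
\begin{equation*}
\cR(\Omega,\Omega_f)\le C\int_{\Omega_\cX}\min_{x'\in G(\bD_\cX)}\ell_\cX(x,x')\,p(x)\,dx .
\end{equation*}
The right-hand side is exactly the unconditional representation gap of the input manifold $\Omega_\cX$ with prediction set $G(\bD_\cX)$, with the marginal density $p$ on $\Omega_\cX$.

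\textbf{Step 3: apply the unconditional result.} Under the stated assumptions on $G$ (free, proper, isometric action with constant orbit volume $|G|$), Proposition \ref{app_prop:efficiency_equivariant} applies to $\Omega_\cX$ with $d=d_{\Omega_\cX/G}$. It gives, for i.i.d.\ inputs, convergence in probability of $n^{2/d}$ times the bound to $C|G|J_d\cV_d$, so $\cR_n=O_\bP(n^{-2/d})$.

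For the optimal case I would not use that $\cR_n^*$ picks the dataset minimizing $\cR$ on $\Omega$. It suffices to bound $\cR_n^*\le\cR(\Omega,\Omega_f(\bD^\circ))$, where $\bD^\circ$ is built from the optimal quantizer of $\Omega_\cX/G$ lifted via $x\mapsto(x,y(x))$. Proposition \ref{app_prop:efficiency_equivariant} then yields $O(n^{-2/d})$.

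\textbf{Main obstacle.} The delicate point is checking the hypotheses of Proposition \ref{app_prop:efficiency_equivariant} on $\Omega_\cX$ rather than on $\Omega$. I need $\Omega_\cX$ to be a compact boundaryless manifold with continuous positive marginal density. I would obtain this from $\Omega$ being the graph of a smooth $y$, so that projection is a diffeomorphism $\Omega\to\Omega_\cX$; the density and metric transfer up to a bounded Jacobian, which is absorbed into $C$. I also need the compatibility $y\circ g=g\circ y$ used in Step 1, which I would state explicitly as part of ``$G$ is a symmetry of the task''.
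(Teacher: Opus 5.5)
Your argument is correct and has the same skeleton as the paper's: a pointwise bound by $C\,\ell_\cX(x,\hat x)$, integration, then Proposition~\ref{app_prop:efficiency_equivariant} on $\Omega_\cX/G$. The one real difference is the comparison point.

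\textbf{The paper's comparison point.} The paper compares $z=(x,y(x))$ with $\hat z=(\hat x,f(\hat x))$. Read correctly, $f(\hat x)=y(\hat x)$, so the $\cY$-term is $\ell_\cY(y(x),y(\hat x))$. This term is controlled by the Lipschitz constant of the target map $y$, not of $f$. The paper's display writes $\ell_\cY(f(x),f(\hat x))$, which silently replaces $y(x)$ by $f(x)$. Your side remark that this choice ``reproduces $(1+L)$'' is therefore not right. That route gives $1+L_y^2$ for squared distances, and it uses nothing about $f$ beyond $G(\bD)\subset\Omega_f$.

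\textbf{Your comparison point.} You take $z'=(x,f(x))$, which bounds the pointwise generalization error $\ell_\cY(y(x),f(x))$. This has three costs:
\begin{itemize}
\item It needs both $L$ and $L_y$.
\item It loses a factor $2$.
\item Your constant $C=2(L^2+L_y^2)$ depends on $L$, so the $O(\cdot)$ claims need $L$ uniform in $n$, and over the lifted datasets in the optimal case. For interpolating networks, whose Lipschitz constant may grow with $n$, this is a real restriction; the $\hat z$ route is insensitive to it.
\end{itemize}
In exchange, your route gives $\mathcal{E}=O_{\bP}(n^{-2/d})$ directly, without passing through Proposition~\ref{app_prop:generalization_error_link}.

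\textbf{Your explicit additions.} You make several things explicit: Lipschitz $y$, $y\circ g=g\circ y$, reading $\Omega_f=G(\bD)$ as interpolation on the orbits, and taking $\hat x$ in $G(\bD_\cX)$ rather than $\bD_\cX$. With $\hat x\in\bD_\cX$ as in the paper, the rate would be $n^{-2/d_{\Omega_\cX}}$. Your handling of $\cR_n^\star$ as an upper bound via a lifted quotient quantizer is also sound. The paper's argument needs all of these too and leaves them implicit.
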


\begin{proof}
Using the equivariance of $f$, we proceed as in the proof of Proposition \ref{app_prop:efficiency_equivariant} and reduce the representation gap to a quantization problem on the quotient manifold $\Omega_\cX/G$. Then, let $z=(x,y(x))\in\Omega$ denote a data sample with input $x$, let $\hat x=\argmin_{x'\in\bD_\cX}\ell_\cX(x,x')$ denote the nearest training input to $x$, and let $\hat{z}=(\hat{x},y(\hat{x}))$ denote the corresponding training sample. Since $f$ interpolates the training dataset, we have $f(\hat x)=y(\hat x)$. Using the additive separability of $\ell$ and the Lipschitz continuity of $f$,
\begin{equation*}
  \ell\big((x,y(x)),(\hat x,f(\hat x))\big)
\leq
\ell_\cX(x,\hat x)
+
\ell_\cY(f(x),f(\hat x))
\leq
(1+L)\ell_\cX(x,\hat x)\;.  
\end{equation*}

Thus, $\cR(\Omega,\Omega_f)
\leq
(1+L)
\int_{\Omega_\cX}
\min_{x'\in\bD_\cX}
\ell_\cX(x,x')
\,p(x)\,dx,$
and the result follows from Proposition \ref{app_prop:efficiency_equivariant}.
\end{proof}

\section{Link with related work}\label{appsec:link_related_work}

In this section, we clarify the relations of the concept introduced in this article with several related works.

\subsection{Generalization error}\label{appsec:link_generalization_error}

A natural question is to relate the representation gap $\cR(\Omega,\Omega_f)$ to the generalization error \citep{shalev2014understanding},
commonly used to characterize generalization. We focus on the setting of prediction tasks, for which there is a widely accepted definition of the generalization error, 
$\mathcal{E}=\int_{\Omega}\ell(y(x),f(x))p(x)\dx$. 

\begin{prop}[Comparison with generalization error]\label{app_prop:generalization_error_link}
    If the model $f$ is $L$-Lipschitz and $\ell$ is additively separable, we have
    \begin{equation}\label{appeq:generalization_error}
        \frac{1}{1+L}\mathcal{E}\leq \cR(\Omega,\Omega_f)\leq\mathcal{E}\;.
    \end{equation}
\end{prop}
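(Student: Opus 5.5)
The plan is to prove the two inequalities separately. Both follow from pointwise comparisons of the integrands, integrated against $p(x)\,\dx$. Throughout, $\Omega_f$ is identified with the graph $\{(x',f(x'))\mid x'\in\Omega_\cX\}$. Additive separability is read as $\ell((x,y),(x',y'))=\ell_\cX(x,x')+\ell_\cY(y,y')$. The Lipschitz hypothesis is read in the cost scale used in the proof of Proposition \ref{app_prop:conditional_efficiency_equivariant}, namely $\ell_\cY(f(x),f(x'))\le L\,\ell_\cX(x,x')$. I also use the triangle-type inequality for $\ell_\cY$, stated below.

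\textbf{Upper bound $\cR\le\mathcal{E}$.} For each $x\in\Omega_\cX$, take the competitor $z'=(x,f(x))\in\Omega_f$ in the infimum defining the conditional representation gap (Eq. \ref{appeq:representation_gap_conditional}). Separability gives
\[
\min_{z'\in\Omega_f}\ell\big((x,y(x)),z'\big)\le \ell_\cX(x,x)+\ell_\cY(y(x),f(x))=\ell_\cY(y(x),f(x)).
\]
Integrating against $p(x)\,\dx$ yields $\cR(\Omega,\Omega_f)\le\mathcal{E}$ directly.

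\textbf{Lower bound $\mathcal{E}\le(1+L)\cR$.} Fix $x$ and any $x'\in\Omega_\cX$. I aim for the pointwise inequality
\[
\ell_\cY(y(x),f(x))\le \ell_\cY(y(x),f(x'))+\ell_\cY(f(x'),f(x))\le \ell_\cY(y(x),f(x'))+L\,\ell_\cX(x,x').
\]
Its right-hand side is at most $\max(1,L)\big(\ell_\cX(x,x')+\ell_\cY(y(x),f(x'))\big)\le(1+L)\,\ell\big((x,y(x)),(x',f(x'))\big)$. Taking the infimum over $x'$ and integrating gives $\mathcal{E}\le(1+L)\cR(\Omega,\Omega_f)$. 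If the infimum is not attained, I work with an $\varepsilon$-minimizer and let $\varepsilon\to0$. Measurability of $x\mapsto\inf_{x'}\ell(\cdot)$ is part of the ``mild regularity'' already assumed when $\cR$ is defined.

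\textbf{Main obstacle.} The delicate step is the first inequality of the lower bound, the triangle inequality for $\ell_\cY$. If $\ell_\cY$ is a genuine metric, the argument above goes through as written. But $\ell$ is by default a \emph{squared} distance, and squared distances satisfy only a relaxed triangle inequality $\ell_\cY(a,c)\le 2\ell_\cY(a,b)+2\ell_\cY(b,c)$. This costs a constant factor and would give $\mathcal{E}\le 2(1+L)\cR$ instead. My first approach is therefore to make explicit, as part of the ``mild regularity'' and ``additively separable'' assumptions, that $\ell_\cY$ is a metric, or at least satisfies the triangle inequality, and that the Lipschitz condition is stated in that same cost. This matches how the Lipschitz bound is used in the proof of Proposition \ref{app_prop:conditional_efficiency_equivariant}. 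I will remark that in the squared case the same argument holds with constant $2(1+L)$ in place of $1+L$.
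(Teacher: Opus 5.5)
Your route is the paper's: $\cR\le\mathcal{E}$ comes from taking $x'=x$ in the infimum, and $\mathcal{E}\le(1+L)\cR$ comes from a pointwise bound $\ell_\cY(y(x),f(x))\le(1+L)\inf_{x'}\ell\big((x,y(x)),(x',f(x'))\big)$, obtained from separability plus the Lipschitz hypothesis and then integrated. The paper asserts this pointwise bound without the triangle-type step you make explicit. The $\varepsilon$-minimizer is not needed: your bound holds for every $x'$, so it passes to the infimum directly.

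The one place you stop short is your ``main obstacle''. It can be removed without an extra assumption and without losing a factor $2$. In the paper's default setting $\ell_\cY=d_\cY^2$ for a genuine distance $d_\cY$. Squaring the triangle inequality and using $2st\le\varepsilon s^2+t^2/\varepsilon$ gives, for any $\varepsilon>0$,
\[
\ell_\cY(a,c)\le\big(d_\cY(a,b)+d_\cY(b,c)\big)^2\le(1+\varepsilon)\,\ell_\cY(a,b)+\Big(1+\frac{1}{\varepsilon}\Big)\ell_\cY(b,c).
\]
Take $a=y(x)$, $b=f(x')$, $c=f(x)$ and $\varepsilon=L$, and apply your cost-scale Lipschitz bound $\ell_\cY(f(x'),f(x))\le L\,\ell_\cX(x,x')$. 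This yields
\[
\ell_\cY(y(x),f(x))\le(1+L)\,\ell_\cY(y(x),f(x'))+\Big(1+\frac{1}{L}\Big)L\,\ell_\cX(x,x')=(1+L)\,\ell\big((x,y(x)),(x',f(x'))\big).
\]
So the stated constant $1+L$ holds exactly for squared costs, not just $2(1+L)$.

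Your explicit cost-scale reading of the Lipschitz hypothesis is necessary, not cosmetic. Suppose instead $f$ is $L$-Lipschitz for the distances themselves and $\ell$ is squared. Then the best constant is $1+L^2$. For example, take $\Omega_\cX=[0,1]$ with uniform $p$, $y\equiv 0$ and $f(x)=Lx$. This gives $\mathcal{E}=L^2/3$ and $\cR=L^2/\big(3(1+L^2)\big)$, which violates $\mathcal{E}\le(1+L)\cR$ whenever $L>1$.
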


\begin{proof}
    In the supervised setting, the representation gap can be written
    \begin{equation*}
        \cR(\Omega,\Omega_f)=\int_\Omega\inf_{x'}\ell\left( (x,y(x))\,,\,(x',f(x'))\right)p(x)\dx.
    \end{equation*}
    We can see that $\cR(\Omega,\Omega_f)\leq\mathcal{E}$ (due to the inf operator).
    Moreover, the $L$-Lipschitz regularity of $f$ implies that deviations in the output space are controlled by deviations in the input space. Using the additivity of $\ell$ together with the Lipschitz bound, we obtain
    \begin{equation*}
        \ell_\cY(y(x),f(x))\leq(1+L)\;\inf_{x'}\ell\left((x,y(x))\,,\,(x',f(x'))\right),
    \end{equation*}
    and we conclude by integrating over $\Omega$.
\end{proof}

Combining Theorems \ref{prop:efficiency_conditional} and \ref{prop:generalization_error}, we obtain $\mathcal{E} = O\left(1/n^{2/d_{\Omega}}\right)$ as $n\rightarrow+\infty$, a result closely related to \citet{tahmasebi_exact_nodate}. Moreover, $\cR(\Omega,\Omega_f)=0$ implies $f(x)=y_x$ almost everywhere, and therefore $\mathcal{E}=0$. Generalization error and representation gap are therefore closely related. 

\subsection{Wasserstein distance}\label{appsec:link_wasserstein}

Wasserstein distance \cite{peyre2019computational} is typically used to measure neural network generalization \citep{theis2015note}. Interestingly, we can see that the representation gap $\cR(\Omega,\Omega_f)$ is a particular case of the Wasserstein distance $\mathcal{W}(\Omega,\Omega_f)$, where each point $(x,y)\in\Omega$ is associated to the probability mass $p(x,y)$ and each prediction point $z\in\Omega_f$ is associated to the mass of its Voronoi cell.  

\section{Compute resources}

All experiments were conducted on consumer-grade hardware. The main experiments were independently reproduced on Google Colab using the default free public configuration. No specialized compute infrastructure or large-scale GPU resources were required.

\section{Limitations}

Our analysis relies on the following main assumptions.

\textbf{Asymptotic regime}.
Our results are asymptotic in the dataset size $n$. However, our experiments suggest that the asymptotic regime is reached relatively quickly in practice (see Section \ref{sec:validating}).

\textbf{Model assumptions}.
Our results on generative modeling focus on DDIM diffusion models. While the analysis extends naturally to the broader class of linear Gaussian diffusion models --- including DDPM, variance-exploding, and variance-preserving formulations --- recent diffusion architectures may fall outside this framework. Moreover, our analysis assumes exact equivariance constraints and fully optimized models. These assumptions are motivated by common practices in deep learning, where models are often trained in the interpolation regime and equivariance is enforced directly at the architectural level. Nonetheless, extending the theory to approximate equivariance or partially trained models would nevertheless be of significant interest.

\textbf{Geometric assumptions}.
Our theory relies on several regularity assumptions on the data manifold $\Omega$ and the symmetry group $G$, including compactness and smoothness of the group action. While such assumptions are standard in geometric learning theory, real-world datasets may only approximately satisfy them. Nevertheless, our empirical results on real-world datasets suggest that the theory remains informative beyond the idealized setting considered here.

Furthermore, some assumptions could likely be relaxed without fundamentally changing the analysis. For instance, the constant-volume orbit assumption is mainly introduced to simplify the exposition.

\textbf{Supervised prediction}.
In the setting of supervised prediction, we derive asymptotic bounds rather than precise asymptotic equivalents. Unlike the generative setting, the geometry of the joint manifold $\Omega$ becomes critical due to the coupling between inputs and targets, making a full asymptotic characterization more challenging. A deeper study of this regime is an important direction for future work.

\section{Broader impact}

This work is primarily theoretical and aims to improve the understanding of generalization, sample efficiency, and equivariance in modern machine learning systems. By relating generalization behavior to the intrinsic geometry of data manifolds, our results may contribute to the development of more data-efficient learning algorithms and better principled model design.

In particular, improved sample efficiency could benefit applications where data collection is expensive or limited, such as scientific imaging, healthcare, or robotics. More broadly, our analysis may help clarify the role of symmetries and geometric structure in deep learning systems.

At the same time, advances in generative modeling and sample-efficient learning may contribute to the development of more capable generative systems, including systems that could potentially be misused for synthetic media generation or large-scale content production. However, the present work does not introduce new generative architectures or deployment methods, and focuses instead on the theoretical understanding of existing approaches.


\end{document}